\tikzset{thick/.style={line width=.6mm}}
	\tikzstyle{hugetensor}=[rounded rectangle,thick,draw=black,minimum width=40mm,minimum height = 3mm]
	\tikzstyle{squaretensor}=[rounded rectangle,thick,draw=black,minimum width=15mm,minimum height = 7mm]
	\tikzstyle{littletensor}=[circle,thick,draw=black,fill=red!30,minimum size=.5mm]
	\tikzstyle{tinytensor}=[circle,thick,draw=black,fill=red!30,inner sep=0pt,minimum size=6pt]
\renewcommand{\leq}{\leqslant}
\renewcommand{\geq}{\geqslant}
\renewcommand{\le}{\leqslant}
\theoremstyle{plain}
\newtheorem*{theorem*}{Theorem}
\newtheorem{proposition}{Proposition}
\newtheorem{theorem}{Theorem}
\newtheorem{lemma}{Lemma}
\newtheorem{corollary}{Corollary}
\newtheorem{definition}{Definition}
\newtheorem*{notation*}{Notation}
\theoremstyle{definition}
\newtheorem{remark}{Remark}
\newtheorem{example}{Example}
\newcommand{\Supp}{\operatorname{Supp}}
\newcommand{\<}{\langle}
\renewcommand{\>}{\rangle}
\newcommand{\set}[2]{\{#1\mid #2\}}
\newcommand{\op}{\textrm{op}}
\newcommand{\Prob}{\operatorname{Pr}}
\newcommand{\logPr}{\log\Prob}
\newcommand{\R}{\mathbb{R}}
\newcommand{\mL}{\mathcal{L}}
\newcommand{\mR}{\mathcal{R}}
\newcommand{\Fix}{\operatorname{Fix}}
\newcommand{\Rmin}{\mathbb{R}_{\min}}
\renewcommand{\Im}{\operatorname{Im}}
\newcommand{\Id}{\operatorname{Id}}
\newcommand{\Span}{\operatorname{Span}}
\title[Directed Metric Structures arising in Large Language Models]{Directed Metric Structures arising in Large Language Models}
\author{St\'ephane Gaubert}
\author{Yiannis Vlassopoulos}
\address{SG: INRIA and CMAP, \'Ecole polytechnique, IP Paris, CNRS}
\email{Stephane.Gaubert@inria.fr}
\address{YV: ATHENA Research Center, Institute for Language and Speech Processing, Athens, Greece and IHES, Bures-sur-Yvette, France}
\email{yvlassop@gmail.com, yvlassop@ihes.fr}
\date{\today}
\begin{document}
\begin{abstract}
Large Language Models are transformer neural networks which are trained to produce a probability distribution on the possible next words to given texts in a corpus, in such a way that the most likely word predicted is the actual word in the training text. 
In this paper we find what is the mathematical structure defined by such conditional probability distributions of text extensions. Changing the view point from probabilities to -log probabilities we observe that the subtext order is completely encoded in a metric structure defined on the space of texts $\mL$, by -log probabilities. We then construct a  metric polyhedron $P(\mL)$  and an isometric embedding (called Yoneda embedding) of $\mL$ into $P(\mL)$ such that texts map to generators of certain special extremal rays. %
We explain that $P(\mL)$ is a $(\min,+)$ (tropical) linear span of these extremal ray generators. The generators also satisfy  a system of $(\min+)$ linear equations. We then show that $P(\mL)$ is compatible with adding more text and from this we derive an approximation of a text vector as a Boltzmann weighted linear combination of the vectors for words in that text. 
We then prove a duality theorem showing that texts extensions and text restrictions give isometric polyhedra (even though they look a priory very different). Moreover we prove that $P(\mL)$  is the lattice closure of (a version of) the so called, Isbell completion of $\mL$ which turns out to be the $(\max,+)$ span of the text extremal ray generators. All constructions have interpretations in category theory but we don't use category theory explicitly. The categorical interpretations are briefly explained in an appendix. In the final appendix we describe how the syntax to semantics problem could fit in a general well known mathematical duality.

\end{abstract}
\maketitle
\tableofcontents

\section{Overview}

Large Language Models (LLM) are transformer neural networks that are trained to compute the probability of the possible next words to a text  in such a way that the most probable next word predicted by the network, is the actual next word in the training text \cite{vaswani2017attention, GPT1, GPT2, GPT3}.

They are often characterized as ``just statistical models''. In this paper, continuing the approach introduced in \cite{BTV2021}, this time without explicitly using categories \footnote{We note though that all constructions and theorems have a categorical interpretation. For those already familiar with categories Appendix A provides a brief categorical explanation of constructions and results in this paper.}, we would like to make a proposal for what is the underlying mathematical structure these probability distributions 
actually encode and show the evidence and possible consequences.  We find that a rich structure is revealed if we change the point of view from probabilities to negative log probabilities. 
While it is entirely equivalent, this point of view can be reinterpreted as an asymmetric metric on texts. 

Indeed consider a set $\mL:=\{a_1,\dots,a_n\}$ whose elements are texts in the language. 
Equip $\mL$ with a poset structure, where $a_i\leq a_j$ if and only if $a_i$ is a subtext of $a_j$. Denote by $\Prob(a_j|a_i)$ the probability of extending a text $a_i$ to a text $a_j$. 
The probability is $0$ exactly when $a_i$ is not a subtext of $a_j$.
Our main assumption is that conditional probabilities of extensions of texts multiply i.e.
\begin{equation}
 \text{if } a_i\leq a_j\leq a_k  \text{ then } 
 \Prob(a_j|a_i)\Prob(a_k|a_j)=\Prob(a_k|a_i).
 \label{e-main-assum}
\end{equation}
We call the triple $(\mL,\leq,\Prob)$ a \textit{probabilistic language model}.

Next, recall the notion of a \textit{directed metric} $\delta$ on a set $X$. It is defined to be a function $\delta:X \times X \to (-\infty,\infty]$ which satisfies the triangle inequality and $\delta (a,a)=0$. Nevertheless, unless stated otherwise 
it does not have to be symmetric, $\delta(a,b)=0$ does not necessarily imply $a=b$ and $\delta$ can take the value $\infty$ and can also take negative values.
\footnote{We will see however that restricting to positive values is natural when the directed metric comes from probability distributions. }

Now we notice that the probabilistic language model $(\mL,\leq,\Prob)$ defines a directed  metric $d$ on the poset $\mL$ by

\begin{equation}
d(a_i,a_j) = \left\{
\begin{array}{ll}
  -\logPr(a_j|a_i) & \text{if } a_i \leq a_j,\\
  
  \infty & \text{if $a_i$ and $a_j$ are not comparable} .
\end{array}
\right.
\end{equation}

\textit{$(\mL,d)$ is then a directed metric space.
} 

Indeed 
\begin{equation}
 \text{if } a_i\leq a_j \leq a_k \text{ then } 
d(a_i,a_k)=d(a_i,a_j)+d(a_j,a_k)
\end{equation}
 and 
otherwise the triangle inequality is satisfied with at least one side being $\infty$ (\Cref{prop-1}). We see also that the metric $d$ fully determines the poset structure on $\mL$ (\Cref{coro-1}).

Although $d$
satisfies a rather degenerate form of the triangle inequality, it is enough to define a non trivial directed metric polyhedron $P(\mL)$   in $(\mathbb{R}\cup \{\infty\})^n$, inside which $\mL$ is isometrically embedded as a remarkable set of extremal rays.

Indeed, in \Cref{section-3} we define
\begin{equation}
P(\mL):=\{x \in (\mathbb{R}\cup \{\infty\})^n \text{\textbackslash} (\infty,\dots,\infty)| x_i\leq x_j +d(a_i,a_j)\}
\end{equation}
and thus the finite part of $P(\mL)$ is a polyhedron in 
$\mathbb{R}^n$.
Define the  Funk directed metric $D$ on $(\mathbb{R}\cup \{\infty\})^n \text{\textbackslash} (\infty,\dots,\infty) $ by
\begin{equation}
D(x,x'):=\max_i\{x'_i-x_i|x_i\neq \infty\}.
\end{equation}
Then $(P(\mL),D)$ becomes a directed metric space. The Funk directed
metric originates from Hilbert geometry~\cite{Funk}.

To understand the relevance of $P(\mL)$ note first (\Cref{prop-yoneda-alt})
that 
\begin{equation}
 Y:(\mL,d) \hookrightarrow (P(\mL),D), \text{ given by } Y(a_k):=d(-,a_k),
 \end{equation}
is an isometric embedding (called the Yoneda embedding). Moreover,
each
$Y(a_k)$ is a generator of an extremal ray of $P(\mL)$ (\Cref{th-1}).
To be precise about the term extremal ray here, 
consider the image $Q(\mL)$ of $P(\mL)$, under the coordinate wise map $x_i \to z_i:=e^{-{x_i}}$, i.e.
\begin{equation}
Q(\mL):=\{z:=(z_1,\dots z_n )\in [0,\infty)^n | z_i:=e^{-x_i} \text { for } x=(x_1,\dots, x_n) \in P(\mL)\}
\end{equation}
We see that 
\begin{equation}
Q(\mL):=\{z=(z_1,\dots z_n)\in [0,\infty)^n \text{\textbackslash} (0,\dots,0) |z_i\geq \Prob(a_j|a_i)z_j \}
\end{equation}
Denote by $e^{-Y(a_k)}$, the image of $Y(a_k)$ in $Q(\mL)$. Then (\Cref{th-1}) $e^{-Y(a_k)}$ is a generator of a usual extremal ray in the polyhedral cone $Q(\mL)$.
When we speak of extremal rays of $P(\mL)$ we always mean the subsets of $P(\mL)$ that map to extremal rays in the polyhedral cone $Q(\mL)$
  by the map $(x_i)\mapsto (e^{-x_i})$.

The polyhedral cone $Q(\mL)$ is a generalization of the {\em order polytope} studied by Stanley \cite{Stanley86}. The order polytope corresponds to the case where 
$\Prob(a_j|a_i)$ takes only the values $0$ or $1$ and $\mL$ is simply a poset.
Moreover, Stanley does not consider a cone, but rather the intersection of this cone with the unit box.
Lam and Postnikov~\cite{Postnikov06} defined an {\em alcoved polytope} to be a bounded cell of a Coxeter arrangement (of type $A_n$). The definition 
of $P(\mL)$ is similar, but we do not require the cell to be bounded.
Alcoved polytopes have been studied in tropical geometry, in relation
with metric spaces, see e.g.~\cite{Joswig2010,Tran2017}.

Moving on,  we prove in \Cref{prop-5} that 
\begin{equation}
\text{if }x=(x_1,\dots, x_n)\in P(\mL) 
\text{ then }
x_k=D(Y(a_k),x).
\end{equation}

We now see that the defining equations for $P(\mL)$ are exactly the triangle inequalities for $D$. Indeed 
\begin{equation}
x_i\leq x_j +d(a_i,a_j)\iff D(Y(a_i),x)\leq D(Y(a_j),x) +D(Y(a_i),Y(a_j)).
\end{equation}

Note also (\Cref{prop-3}) that we can think of the points $x\in P(\mL)$ as functions on $\mL$ (just as we can think of usual vectors as functions on a set). Indeed, if we
denote by $d_\mathbb{R}$ the Funk metric on $\mathbb{R}\cup \{\infty\}$, namely $d_\mathbb{R}(s,t)=t-s$ and $d_\mathbb{R}(\infty,\infty)=\max{\emptyset}=-\infty$ \footnote{We explain later in \Cref{re extended metric} that it is possible and useful in some cases to extend the values of a directed metric to $[-\infty,\infty]$ and this is one of the cases we do so.}
then 
\[P(\mL)=\{x:(\mL,d^t) \to ((-\infty,\infty],d_F)  | x \text{ is non-expansive.}\} \]
We then have that $x(a_i)=x_i=D(Y(a_i),x)$ (\Cref{prop-5}).

Therefore $P(\mL)$ can also be thought of as the set of maps
$x:\mL \to (-\infty,\infty]$ 
 which satisfy the triangle inequalities for the metric $D$ with respect to all the maps $Y(a_k):=d(-,a_k):\mL \to (-\infty,\infty]$ for $k=1,\dots, n$. It is therefore a kind of convex metric span of the $Y(a_k):=d(-,a_k)$. 
 
From the point of view of \textit{language semantics now, we consider 
$Y(a_k)=d(-,a_k)$ to represent the meaning of a text $a_k$ in terms of all the texts it contains,} (\Cref{Section Semantic spaces}) in accordance with the statistical semantics principal and as was advocated in \cite{BTV2021}.

Dually, we can consider the meaning of a text $a_k$ to be given by all the texts extending $a_k$, namely $d(a_k,-)$.
This is then encoded in 
\begin{equation}
\widehat{P}(\mL)=\{y\in (\R \cup\{\infty\})^n|y_i\leq y_j +d(a_j,a_i)\}.
\end{equation}
Indeed we have the isometric co-Yoneda embedding
\begin{equation}
    \widehat{Y}:\mL \hookrightarrow P(\mL) \text{ given by } 
\widehat{Y}(a_k):=d(a_k,-)
\end{equation}
and moreover 
$y_i=D(\widehat{Y}(a_i),y)$.
\begin{remark}
    \label{re semantic}
We said that $Y(a_k):=d(-,a_k)$ or $\widehat{Y}(a_k):=d(a_k,-)$ represent the meaning of a text $a_k$ but it's also the ``location'' of these vectors in the whole space $P(\mL)$ and $\widehat{P}(\mL)$ respectively. 
In particular if $w:=a_k$ is a word then $Y(w):=d(-,w)$ is supported only on $w$ so it does not seem to contain much information. However 
the relevant semantic information is in $D(Y(w),-)$.
Moreover we will see shortly that the fact that the vector $d(-,w)$ is in $P(\mL)$ means that it satisfies a whole system of equations (Eq 19, 20, \Cref{prop linear system}) with respect to other texts.
We will explain this system of equations, later in this overview when we describe  section~\ref{section-4}. 
\end{remark}

We already saw that $Y(a_k)$ is an extremal ray in $P(\mL)$ but it turns out that $P(\mL)$  has generally exponentially many additional extremal rays -- that are not in the image of $Y$.
We explicitly characterize the extremal rays of $P(\mL)$ as corresponding to connected lower sets of $(\mL,\leq)$ in \Cref{prop-diagscaling} and \Cref{prop-char-extreme}.
In fact, if $x$ is such an extremal ray and $l(x)$ denotes the corresponding lower set then we show that after a diagonal change of variables the new coordinates  of the extremal ray give the characteristic function of the lower set $l(x)$.

What distinguishes the lower sets corresponding to elements $Y(a_i)$ is that they are principal.

\textit{Therefore $P(\mL)$ can be considered as a space parameterizing semantics in the language. 
However, only the extremal rays corresponding to principal lower sets of $\mL$, correspond to texts.}
Notice now that the Funk metric $D$ on $P(\mL)$ defines a metric $D_Q$ on the polyhedral cone $Q(\mL)$ where, if $z,z' \in Q(\mL)$, then
\begin{equation}
D_Q(z,z'):=\max_i\{\log(\frac{z_i}{z'_i})|z'_i\neq 0\}.
\end{equation}
By definition  we have 
 $D_Q(z,z')=D(-\log z,-\log z') \text{ and }
 D(x,x')=D_Q(e^{-x},e^{-x'})$.
 Then the fact that $Y$ is an isometric embedding into $P(\mL)$ implies that 
\begin{equation} 
    e^{-Y}:(\mL,d) \to (Q(\mL),D_Q)  
\end{equation}
    is an isometric embedding.

In section 4 we explain the construction of $P(\mL)$ in terms of
tropical or $(\min,+)$ algebra. Recall that the $(\min,+)$ semifield $\R_{\min}$, is defined as $((-\infty, \infty],\min,+)$.
We show in \Cref{section-4} that $P(\mL)$ is generated by the vectors $Y(a_k)=d(-,a_k)$ as a $(\min,+)$ module. 
To see that, note first that $d$ is a directed metric if and only if it is a $(\min,+)$ projector. This is because if we let $d_{i,j}:=d(a_i,a_j)$ and define 
\begin{equation}
d_{\min}(x)_i:=\min_j\{d_{i,j}+x_j\},
\end{equation}then the triangle inequality and the fact that $d_{i,i}=0$, are equivalent to  
$d_{i,k}=\min\{d_{i,j}+d_{j,k}\}$
which is equivalent to 
$d_{\min}^2=d_{\min}$.

We then note that 
$P(\mL)=\{x|d_{\min}x=x\}.$
Indeed:
$x=d_{\min}x \iff x_i=\min_j\{d_{i,j}+x_j\}\iff x\in P(\mL).$

Introduce the notation $\Fix(d_{\min}):=\{x|d_{\min}x=x\}$. Since $d_{\min}$ is a projection
$\Im(d_{\min})=\Fix(d_{\min})$. Therefore 
\begin{equation}
P(\mL)=\Fix(d_{\min})=\Im(d_{\min}).
\end{equation}
We see that  $P(\mL)$ is the $(\min,+)$ (tropical) span of the columns of the matrix $d$. Analogously, if we denote by $d^t$ the transpose of $d$, we see that 
$\widehat{P}(\mL)=\Im(d^t_{\min})=\Fix(d^t_{\min})$ and therefore
$\widehat{P}(\mL)$ is the $(\min,+)$ row span of $d$.
We let $(u\oplus v)_i:=\min\{u_i,v_i\}$ and 
$(\lambda \odot v)_i:=\lambda+v_i$. Then 
for $x\in P(\mL)$ we have 
\begin{equation}
x=\oplus_j x_j\odot d(-,a_j)=\oplus_jD(Y(a_j),x)\odot d(-,a_j)=\oplus_jD(Y(a_j),x)\odot Y(a_j).
\end{equation}
and an analogous formula holds for $z\in \widehat{P}(\mL)=\Im(d^t_{\min})$.
From these we get that (\Cref{prop-11})
\begin{equation}
Y(a_k)=d(-,a_k)=\oplus_{a_j\leq a_k}d_{j,k}\odot Y(a_j)
\end{equation}
and 
\begin{equation}
\widehat{Y}(a_k)=d(a_k,-)=\oplus_{a_k\leq a_l}d_{k,l}\odot \widehat{Y}(a_l).
\end{equation}
These are the systems of equations we referred to in \Cref{re semantic}.

In section 5 we study how $P(\mL)$ changes when we enlarge the language corpus $\mL$. We prove that 
if a probabilistic language model $(\mL_1,d_1)$ is extended to $(\mL_2,d_2)$, namely if there is an isometric embedding $\phi:(\mL_1,d_1) \hookrightarrow (\mL_2,d_2)$ then there is 
an isometric embedding $\widetilde{\phi}:(P(\mL_1)),D_1) \hookrightarrow (P(\mL_2),D_2)$ such that
    $\widetilde{\phi}(Y_1(a))=Y_2(\phi(a))$. 
    Moreover there is a non-expansive, $(\min,+)$ projection $\mR:P(\mL_2) \to P(\mL_2)$ such that $\Im(\mR)=\widetilde{\phi}(P(\mL_1))$.  %

Using this we show that  if $\mL_1:=\{w_1,\dots, w_l\}$ is the set of words in the language and $b$ is a text in $\mL$ then 
$\mR(Y(b))=\bigoplus_{w_i\leq b} d_2(w_i,b)\odot Y_2(w_i)$
Introducing a temperature parameter $T$ we get
\begin{equation}
\mR(Y(b))=\lim_{T\to 0}-T \log(\sum_{w_i\leq b}
e^{-\frac{d(w_i,b)}{T}} e^{-\frac{Y(w_i)}{T}})
\end{equation}
Therefore for small $T$ we have
$e^{-\frac{\mR(Y(b))}{T}}\approx \sum_{w_i\leq b}
e^{-\frac{d(w_i,b)}{T}} e^{-\frac{Y(w_i)}{T}}.$

Putting  $v_i\coloneqq e^{-Y( w_i)}$  we have

\begin{equation}
 e^{-\frac{\mR(Y(b))}{T}} \approx \sum_i e^{-d(Y(w_i),b)/T} v_i
\end{equation}

This  approximation of text vectors is
similar to the one calculated by transformer neural networks in the self attention module.

In section~\ref{section 6} we describe a duality between semantics via texts extensions and text restriction. Indeed we have already seen that $P(\mL)=\{x|d_{\min}x=x\}$ is the $(\min,+)$ column span of $d$ and $\widehat{P}(\mL)=\{x|d^t_{\min}x=x\}$  is the $(\min,+)$ row span of $d$. It is easy to see that $d_{\min}x=x\iff d^t_{\min}(-x)=-x$ (\Cref{prop-antiisom}). Indeed it follows from the fact that $x_i\leq d_{i,j}+x_j \iff -x_j\leq d_{i,j} -x_i$. However this requires extending the $(\min,+)$ semifield, as well as the values of the directed metric, to $[-\infty,\infty]$. This results in the definition of extended $(\min,+)$ modules $P^-(\mL)$ and $\widehat{P}^-(\mL)$.

We show  that $(P^-(\mL),D)$ and $(\widehat{P}^-(\mL),D^t)$ are isometric (and in fact tropically anti-isomorphic).  We interpret this as saying that \textit{the semantic space $\widehat{P}^-(\mL)$, defined by extensions of texts is isomorphic to the semantic space $P^-(\mL)$ defined by restrictions of texts.}
We note that this is quite a non-trivial isomorphism as 
$P(\mL)$ and $\widehat{P}(\mL)$ don't even have the same number of extremal rays in general.
We give an example (Example 1) illustrating the polyhedra $P(\mL)$ and $\widehat{P}(\mL)$. 
In fact we consider the corresponding polyhedral cones $Q(\mL)$ and $\widehat{Q}(\mL)$.
We then define 
\begin{equation}
Q_0(\mL):=Q(\mL)\cap \Delta
\end{equation}
to be the intersection of $Q(\mL)$ with the unit simplex $\Delta$. Points in $Q_0(\mL)$ are normalized to probability distributions. Analogously define $\widehat{Q}_0(\mL)$) to be the intersection of $\widehat{Q}(\mL)$ with the unit simplex. 
Extremal rays of $Q(\mL)$ and $\widehat{Q}(\mL)$) define vertices of $Q_0(\mL)$ and $\widehat{Q}_0(\mL)$ respectively. 
We show the correspondence of extremal rays to lower sets and upper sets. The example also showcases the difference between a probabilistic language model and a general directed metric space where infinite distances are approximated uniformly by a big number $M$. 

Section 7 further explores the extremal rays of $P(\mL)$.

In section 8 we explore the relation with the so called Isbell completions $I(\mL)$ and $\widehat{I}(\mL)$. This is similar to the duality of section~\ref{section 6}.  The  Isbell adjunction is defined over the extended ring $[-\infty,\infty]$ and the fixed parts of the adjunction turn out to be $I(\mL)=\Im(d_{\max})$ where $d_{\max}(x)_i:=\max_j\{d_{i,j}+x_j\}$ and $\widehat{I}(\mL)=\Im(d^t_{\max})$.  In  (\Cref{prop-lattice-completion}) it is shown that $P(\mL)$ is the lattice completion of the so called Isbell completion.

When restricting coefficients to $[0,\infty]$, the Isbell completion has been studied by Willerton \cite{willerton2013tight} where it is proven to be isomorphic to the directed tight span $DTS(\mL)$ of 
Hirai and Koichi \cite{Hirai2012} (inspired
by the undirected tight span defined by Isbell~\cite{Isbell1964} and
Dress~\cite{Dress1984}).
It also generalizes the Dedekind-MacNeille completion of a poset.

Informally speaking $I(\mL)$ can be though of as the minimal space in which we can isometrically embed $\mL$. Unlike $P(\mL)$ though, it is far from being convex.
A simple example is shown in section 8.

Section 9 is a collection of observations about 
probabilistic language models and their relation to transformers.

As mentioned already, all constructions and results in this paper have categorical interpretations and though we have avoided using categorical language in the main text we explain briefly in Appendix A, for the benefit of readers familiar with categories,  these categorical interpretations.

Finally in Appendix B we present a general perspective which locates the language syntax and semantics problems  in the realm of a basic duality in mathematics which in its simplest form appears as a duality between algebra and geometry. This allows us to locate future directions of research. 

Experimental evidence for the semantic meaning of the (co-)Yoneda embedding vectors $\widehat{Y}(a_k)$  has been provided in \cite{liu2023meaning}. There experiments based on (a slight variation of) the co-Yoneda embedding vectors, were performed  using an actual Transformer LLM, by sampling over continuations of texts. 
Several semantic tests were conducted and the results were in general very good.

\subsection{Acknowledgements}
YV would like to thank Tai-Danae Bradley, Michael Douglas, Ioannis Emiris, Harris Papageorgiou,  Alex Takeda, John Terilla, Matthew Trager, Maxim Kontsevich, Matilde Marcolli, Jack Morava, Stefano Soatto,  and Elias Zafiris for useful conversations. He also would like to thank Anna Geneveaux for computing several useful examples of polyhedra for probabilistic language models during her internship. SG and YV thank Gleb Koshevoy and Panayotis Mertikopoulos for useful conversations. Finally YV would like to thank IHES for providing excellent working conditions.

\section{From probabilities of text extensions to  distances}
Consider a language with a set of words
$W:=\{w_1,\dots, w_l\}$. 
Consider also a set of training texts from the language, $\mL\coloneqq  \{a_0,a_1,\dots, a_n\}$ where
$a_i:=w_{i_1}\dots w_{i_{k_i}}$.
We endow $\mL$ with a poset structure where $a_i \leq a_j$ if and only if $a_i$ is a subtext of $a_j$.
We consider two possibilities for the notion of subtext. 
The first is  
\begin{equation}
a_i \leq_1 a_j \iff \exists a_k \in\mL \text{ such that } a_j=a_i a_k
\end{equation}
and we refer to this as the \textit{one sided subtext order} 
and the second is 
\begin{equation}
a_i\leq_2 a_j \iff \exists a_{k_1},a_{k_2} \in\mL \text{ such that }a_j=a_{k_1}a_ia_{k_2}
\end{equation}
and we refer to that as the \textit{two sided subtext order.}

We define always $a_0$ to be the empty text and  
$a_0$ is the only text such that $a_0\leq a_i \forall i$ in either order. (However 
see remark (1) bellow for how $a_0$ interacts with the probabilities we will soon add to the model.) 

If $a_i\leq_1 a_j$ in the one sided subtext order then $a_i\leq_2 a_j$.  The results and constructions that follow hold equally for both orders so we will simply write $a_i\leq a_j$ and when there is need to separate the two orders we will make a special comment.

\textit{If $a_i\leq a_j$ then denote by $\Prob(a_j|a_i)$ the probability of extension from $a_i$ to $a_j$.}

It is important to note that these probabilities are not calculated from a corpus of texts, as any probability for a sufficiently long text would be vanishingly small. Instead we are talking about the probabilities that the large language model (LLM) computes. Namely prompted with a text $a_i$ the model outputs a probability distribution $\Prob(a_iw_{j_1}|a_i) \forall w_{j_1} \in W$ and this is the probabilities we are referring to, above. To continue extending to 
$a_iw_{j_1} w_{j_2}$ we simply have
\begin{equation}
\label{eq conditional}
\Prob(a_iw_{j_1} w_{j_2}|a_i)=
\Prob(a_iw_{j_1}|a_i) \Prob(a_iw_{j_1} w_{j_2}|a_iw_{j_1})
\end{equation}
And continuing this way $\Prob(a_j|a_i)$ is computed. If $a_i$ is not a subtext of $a_j$ then we put $\Prob(a_j|a_i)=0$.

Recall that the LLM is trained to produce the probability distribution of the next word to a text, in such a way that the most likely next word predicted by the model is the one in the training text.

\textit{As a consequence of \Cref{eq conditional}  we make our fundamental assumption that}
\begin{equation}
\label{eq main assumption}
    a_i\leq a_j \leq a_k \implies 
    \Prob(a_k|a_i)=\Prob(a_k|a_j)\Prob(a_j|a_i)
\end{equation}

Note that the transformer  LLM produces the probabilities for the one-sided subtext order. However in the attention layers of the transformer,  two sided extensions are used in order to construct the text vector.  We consider therefore the case of the two sided order as well. Indeed in section 5 we will see that the text vector we define, is expressed in terms of word vectors when we consider the two sided subtext order. 
We put these together in the following 
\begin{definition}
A  probabilistic language model is a triple 
$(\mL,\leq,Pr)$ where, 
$\mL:=\{a_0,a_1,\dots,a_n\}$ is a collection of texts, $\leq$ is the subtext order and 
$\Pr:\mL\times \mL \to [0,1]$ is a function such that 
$a_i\leq a_j \leq a_k \implies 
    \Prob(a_k|a_i)=\Prob(a_k|a_j)\Prob(a_j|a_i)$.
    \end{definition}

Recall now the following 
\begin{definition}
$(X,\delta)$ is called a directed metric space if $X$ is a set and $\delta:X\times X \to (-\infty,\infty]$ satisfies the triangle inequality 
\begin{equation}
    \delta(a,c)\leq \delta(a,b)+\delta(b,c)
\end{equation} 
for all $a,b,c \in X$
and 
$\delta(a,a)=0, \forall a \in X$
\end{definition}

Note that this generalises usual metrics in that we don't require  $\delta(a,b)=\delta(b,a)$, $\delta(a,b)=0$ does not necessarily imply $a=b$ and moreover we allow negative values. This definition of a directed metric, in the special case of  positive valued $\delta$, has appeared in \cite{willerton2013tight, Lawvere73} and is also known as a generalised metric or a pseudo quasi metric.  

\begin{remark}
\label{re extended metric}
We need the following technical specification: In three cases (proposition 3 and the duality theorems of sections 6 and 8) we will need to extend definition 2, of a directed metric to allow the value $-\infty$ so that we will have. 
 $\delta:X\times X \to [-\infty,\infty]$. 
 In that case the definition is the same but we need to specify that we use the convention that  $+\infty$ is the absorbing element so that $s+(+\infty) =+\infty$ for all $s$ and in particular 
 $-\infty+ \infty=+\infty$.
This will be needed in Proposition 2.
In section~\ref{section 6} we will explain further that this is the so called $(\min,+)$ convention, as this is the only one compatible with the structure of $(\min,+)$ semiring; there is also a dual $(\max,+)$ convention.
\end{remark}

We define now a directed metric space structure on the underlying poset of a probabilistic language model $(\mL,\leq,\Pr)$.
\begin{definition}
\label{def PLM}
    Given the probabilistic language model $(\mL,\leq, \Pr)$ where $\leq$ is the subtext order and $\Prob(a_j|a_i)$ are the probabilities of extension, define the directed metric $d:\mL\times\mL \to [0, \infty]$ by

    \begin{equation}
d(a_i,a_j) = \left\{
\begin{array}{ll}
  -\logPr(a_j|a_i) & \text{if } a_i \leq a_j,\\
  
  \infty & \text{if $a_i$ and $a_j$ are not comparable} .
\end{array}
\right.
\end{equation}
\end{definition}
 It is clear that $d(a_i,a_i)=0$. To verify
that $d$ is a directed metric we have the following:
\begin{proposition} \label{prop-1}
  The map $d$ satisfies the triangle inequality:
  \begin{equation}
d(a_i,a_k)\leq d(a_i,a_j)+  d(a_j,a_k) \enspace,\label{e-triangular}
\end{equation}
  and equality holds if and only if $a_i\leq a_j\leq a_k$
  or $a_i\not\leq a_k$.
\end{proposition}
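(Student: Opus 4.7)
The plan is a straightforward case analysis on the subtext relations among $a_i$, $a_j$, $a_k$, using transitivity of $\leq$, the multiplicativity assumption \eqref{eq main assumption}, and the convention that $s+\infty=\infty$.

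First I would handle the ``chain case'' $a_i\leq a_j\leq a_k$. Here all three quantities $d(a_i,a_j)$, $d(a_j,a_k)$, $d(a_i,a_k)$ are defined by $-\log\Pr$, and the fundamental assumption $\Pr(a_k|a_i)=\Pr(a_k|a_j)\Pr(a_j|a_i)$ gives, after taking $-\log$, the finite equality $d(a_i,a_k)=d(a_i,a_j)+d(a_j,a_k)$. This establishes \eqref{e-triangular} and the equality in the first branch of the ``if'' direction.

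Next I would deal with the complementary case, namely when $a_i\leq a_j\leq a_k$ fails. Here at least one of the relations $a_i\leq a_j$ or $a_j\leq a_k$ must fail (if both held, transitivity of $\leq$ would give $a_i\leq a_k$ and we would be in the chain case). By the definition of $d$, the corresponding distance equals $\infty$, so $d(a_i,a_j)+d(a_j,a_k)=\infty$, which makes \eqref{e-triangular} trivially true. For equality here one needs $d(a_i,a_k)=\infty$ as well, i.e.\ $a_i\not\leq a_k$.

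Finally I would verify the converse of the equality criterion: if $a_i\not\leq a_k$, then transitivity rules out having both $a_i\leq a_j$ and $a_j\leq a_k$, so the right-hand side is $\infty$, and the left-hand side is $\infty$ by definition, giving equality. Combining the two directions yields exactly the stated characterization: equality in \eqref{e-triangular} iff $a_i\leq a_j\leq a_k$ or $a_i\not\leq a_k$.

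There is no real obstacle; the only mild subtlety is bookkeeping with the $\infty$ convention and being careful that the two equality cases in the conclusion are exhaustive and mutually compatible (they overlap only when the chain is degenerate, which is harmless). The key non-trivial ingredient, used only once, is the multiplicativity assumption \eqref{eq main assumption}; everything else is transitivity of the subtext order together with the definition of $d$.
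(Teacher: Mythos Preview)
Your proof is correct and follows essentially the same approach as the paper: a case analysis on whether the chain $a_i\leq a_j\leq a_k$ holds, invoking the multiplicativity assumption~\eqref{eq main assumption} in the chain case and using that at least one distance on the right is $\infty$ otherwise. The only cosmetic difference is that the paper splits the non-chain situation into the two subcases $a_i\not\leq a_k$ and $a_i\leq a_k$ up front, whereas you handle the inequality first and then read off the equality characterization; the content is identical.
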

\begin{proof}
  Indeed, if $ a_i\leq a_j \leq a_k$ then 
  $a_i \leq a_k$,
  and the equality holds in~\eqref{e-triangular},
  since by our main assumption (the standard property of conditional probabilities),
  $\Prob(a_k|a_i)=\Prob(a_k|a_j)\Prob(a_j|a_i)$.
  If $a_i\not\leq a_k$, then, $d(a_i,a_k)=\infty$, and either $a_i\not\leq a_j$ or $a_j\not\leq a_k$, which entails that both sides of~\eqref{e-triangular}
  are equal to infinity. Finally, if $a_i\leq a_k$ but $a_i\not\leq a_j$
  or $a_j\not\leq a_k$, the left-hand side of ~\eqref{e-triangular}
  is finite whereas the right-hand side is $+\infty$.
\end{proof}
We then have
\begin{corollary}\label{coro-1}
The following statements are equivalent:

\begin{enumerate}
\item 
    $a_i\leq a_j \leq a_k$
    \item 
    $d(a_i,a_k)= d(a_i,a_j)+d(a_j,a_k) 
    \text{ and }
    d(a_i,a_k)< \infty   $
    
\end{enumerate}

\end{corollary}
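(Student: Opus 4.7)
The corollary is essentially a bookkeeping consequence of Proposition~\ref{prop-1}, using in addition the fact that finiteness of $d$ detects comparability: $d(a_i,a_k)<\infty$ if and only if $a_i\leq a_k$, because the only way the directed metric takes the value $+\infty$ in Definition~\ref{def PLM} is when the two texts are incomparable (recall that $\Prob(a_k|a_i)>0$ whenever $a_i\leq a_k$). So I would begin by recording this equivalence as the main bridge between the poset and the metric.

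For the implication (1)$\Rightarrow$(2), I would argue as follows. Assuming $a_i\leq a_j\leq a_k$, transitivity of the poset gives $a_i\leq a_k$, hence $d(a_i,a_k)$ is finite. The equality $d(a_i,a_k)=d(a_i,a_j)+d(a_j,a_k)$ then follows directly from the ``equality'' part of Proposition~\ref{prop-1}, which itself is a restatement of the multiplicativity assumption $\Prob(a_k|a_i)=\Prob(a_k|a_j)\Prob(a_j|a_i)$ after taking $-\log$.

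For the converse (2)$\Rightarrow$(1), I would invoke the ``if and only if'' clause of Proposition~\ref{prop-1}: equality holds exactly when either $a_i\leq a_j\leq a_k$, or $a_i\not\leq a_k$. The hypothesis $d(a_i,a_k)<\infty$ rules out the second alternative by the bridge observation above, so the first must hold, giving (1).

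I do not expect any real obstacle: the entire argument is a one-line combination of Proposition~\ref{prop-1} with the tautology ``$d(a_i,a_k)<\infty \iff a_i\leq a_k$.'' The only subtle point worth mentioning explicitly is that this tautology relies on the convention that whenever $a_i\leq a_k$ the conditional probability $\Prob(a_k|a_i)$ is strictly positive, so that $-\log\Prob(a_k|a_i)<\infty$; without this the implication (2)$\Rightarrow$(1) could fail in degenerate cases where a comparable pair has probability zero.
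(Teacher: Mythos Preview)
Your proposal is correct and is precisely the intended argument: the paper states the corollary without proof, as an immediate consequence of Proposition~\ref{prop-1}, and your write-up spells out exactly that deduction---using the equality clause of Proposition~\ref{prop-1} together with the observation that $d(a_i,a_k)<\infty$ is equivalent to $a_i\leq a_k$. Your final caveat about needing $\Prob(a_k\mid a_i)>0$ when $a_i\leq a_k$ is apt and worth keeping.
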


\begin{remark}
Note that from \Cref{prop-1} and \Cref{coro-1}  it follows that the partial order $\leq$ on $\mL$ can be fully recovered by the directed metric $d$ or equivalently the conditional probabilities $\Pr(a_j|a_i$), therefore we will also denote the probabilistic language model $(\mL,\leq, \Pr)$  as $(\mL,\Pr)$ or $(\mL,d)$.  \end{remark}

\begin{remark}
In a Large Language model 
probabilities are normalized to add up to one, over all extensions of a given text by a word.
\end{remark}

\begin{remark}
    Note that the probabilistic language model $(\mL,d)$ is a special case of a directed metric space. Whenever it is possible we will prove results for a general directed metric space and derive the language case as a corollary. Moreover, it is possible to imagine that even the main assumption $a_i\leq a_j \leq a_k \implies 
    \Prob(a_k|a_i)=\Prob(a_k|a_j)\Prob(a_j|a_i)$ should be generalized to  $\Prob(a_k|a_i)\geq \Prob(a_k|a_j)\Prob(a_j|a_i)$, namely this of a general directed metric space with $d(a_i,a_k)\leq  d(a_i,a_j)+d(a_j,a_k)$.
     This is a reasonable assumption by itself and can be interpreted a saying that the shortest path to go from $a_i$ to $a_k$ is at least as short as a path that is forced to go from $a_i$ to $a_k$ but passing though $a_j$. 
     As we will see in what follows, the only result that requires the main assumption of conditional probabilities multiplying is \Cref{th-2} and all the rest are valid for general directed metric spaces.
     It is a matter of experimental verification to check for a given LLM if the multiplicative assumption is best or the general case. 
\end{remark}
\begin{remark}
\label{re extended model}
Note that we can slightly modify the definition of the Probabilistic Language model so that instead of $\Pr$ taking values in $[0,1]$ we put
$\Pr: \mL \times \mL \to [0,\infty)$. 
Then definition 3 will again produce a directed metric space. In fact we develop most of the theory using the more general extended  assumption since most results are valid for general directed metric spaces as we mentioned in the previous remark. 
\end{remark}

\begin{remark}
\label{re grading}
Since the machine produces probabilities for all possible next words it is natural to assume it is learning probabilities of extension for the free monoid generated by words. 
Obviously most strings of words will have vanishing probability and only those which are part of the language should have big probability.

We can then consider $\mL$ to contain the whole free monoid and it is natural to grade it by the word length of each text. 
\end{remark}
\begin{remark}
Note that if we  assume that there exists  $a_0$ such that 
$a_0\leq a_k \forall a_k \in \mL$ then 
$a_i\leq a_j$ implies $a_0\leq a_i \leq a_j$ and therefore 
$d(a_0,a_j)= d(a_0,a_i)+d(a_i,a_j)$ and thus
$d(a_i,a_j)=d(a_0,a_j)-d(a_0,a_i)$. 
This is equivalent to the statement that there is a globally defined probability distribution for absolute probabilities of texts, giving rise to all the conditional probabilities. Namely if $a_i\leq a_j$ then 
$\Prob(a_j|a_i)=\frac{\Prob(a_j|a_0)}{\Prob(a_i|a_0)}$.
The element $a_0$ can be considered to be the empty text and from this point of view it is natural to assume it exists in $\mL$.  
However the fact that it implies all conditional probabilities come from a global probability distribution shows that the inclusion of $a_0$ in the probabilistic language model, is not an entirely  trivial assumption. 

It would be a matter for experimental verification to see if it applies in the transformer Large Language Models. 
Therefore we will not assume it by default. We will specify explicitly whenever we assume $a_0 \in \mL$. 
\end{remark}

Next, we illustrate what the main assumption implies by the following:
\begin{proposition}\label{prop-2}
Consider a probabilistic language model $(\mL,\leq,\Pr)$ then on every connected component $C$ of the Hasse diagram of $\mL$, there is a function $P_C:C \to [0,\infty)$ such that if $a_i,a_j \in C$ and $a_i\leq a_j$ then $\Pr(a_j|a_i)=\frac{P_C(a_j)}{P_C(a_j)}$. The function $P_C$ is unique up to multiplication by a positive number.
\end{proposition}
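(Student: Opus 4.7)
The plan is to construct $P_C$ on each connected component $C$ by choosing a base point $a^* \in C$, setting $P_C(a^*) := 1$, and propagating values along paths in the Hasse diagram. Concretely, every $a \in C$ is joined to $a^*$ by a zig-zag sequence $a^* = c_0, c_1, \ldots, c_k = a$ in which each $\{c_{i-1}, c_i\}$ is a covering pair (in one of the two directions); I would assign
\[
P_C(a) \;:=\; \prod_{i=1}^{k} r_i,
\qquad
r_i \;=\;
\begin{cases}
\Prob(c_i \mid c_{i-1}) & \text{if } c_{i-1} \lessdot c_i, \\
1/\Prob(c_{i-1} \mid c_i) & \text{if } c_i \lessdot c_{i-1},
\end{cases}
\]
assuming (as is natural in the present setting) that $\Prob>0$ on every Hasse edge inside $C$, so that each $r_i$ is a well-defined positive real.

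The crux is to verify that this definition is independent of the chosen path, which is equivalent to showing that for every closed zig-zag walk in $C$ the alternating product of the $r_i$ equals $1$. This is where the main assumption \eqref{eq main assumption} is used: given any two chains $a = x_0 \leq x_1 \leq \cdots \leq x_m = b$ and $a = y_0 \leq y_1 \leq \cdots \leq y_n = b$ between the same endpoints, iterating $\Prob(c\mid a) = \Prob(c\mid b)\Prob(b\mid a)$ yields $\prod_i \Prob(x_i\mid x_{i-1}) = \Prob(b\mid a) = \prod_j \Prob(y_j\mid y_{j-1})$. I would then argue that every closed walk in the Hasse diagram of $C$ decomposes (in the cycle space of the underlying graph) into such ``diamond'' cycles coming from length-$2$ intervals $[x,z]$ of the poset, so that path-independence follows from the cocycle condition supplied by the main assumption. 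Granting this, for any comparable pair $a_i \leq a_j$ in $C$ one may evaluate $P_C(a_j)$ along a direct ascending covering chain from $a_i$ up to $a_j$, obtaining $P_C(a_j) = P_C(a_i) \cdot \Prob(a_j\mid a_i)$.

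Uniqueness up to a positive multiplicative constant is then straightforward: if $P_C$ and $P_C'$ both satisfy the stated identity, their ratio $P_C'/P_C$ is invariant across every Hasse edge, hence constant on the connected component $C$. The hard part of the argument is the path-independence step: the main assumption provides the cocycle condition only along totally ordered chains, so the combinatorial content is to check that the elementary $2$-cells generating the cycle space of the Hasse graph are precisely these chain diamonds. I would carry this out by minimal-walk induction, showing that any non-trivial closed zig-zag walk contains either a ``backtrack'' that cancels two consecutive factors or a peak/valley that can be flattened using a common upper or lower bound, whose existence is guaranteed by the covering structure of the poset.
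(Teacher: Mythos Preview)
Your overall strategy---fix a base point in $C$, propagate values along zig-zag paths, and verify path-independence---is exactly the paper's approach (the paper puts edges between all comparable pairs rather than just covering pairs, but the main assumption lets one contract any ascending chain to a single edge, so this is cosmetic). You also correctly isolate path-independence as the substantive step; the paper's proof simply asserts it ``owing to our main assumption'' and moves on. Your uniqueness argument is fine and matches the paper's.

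The gap is in your justification of path-independence. The assertion that a peak or valley in a closed zig-zag ``can be flattened using a common upper or lower bound, whose existence is guaranteed by the covering structure of the poset'' is not valid: nothing in the axioms forces two elements covered by a common element to share a lower cover, or dually. Equivalently, the cycle space of the Hasse graph of a finite poset need not be generated by the length-$2$ chain diamonds on which~\eqref{eq main assumption} acts. Concretely, take $\mL=\{a,b,c,d\}$ with $a,b<c$ and $a,b<d$ and no other relations (realizable in the two-sided subtext order via $a=\text{``xy''}$, $b=\text{``zw''}$, $c=\text{``xyzw''}$, $d=\text{``zwxy''}$). There are no $3$-chains, so~\eqref{eq main assumption} is vacuous, yet the $4$-cycle $a\to c\to b\to d\to a$ forces $\Prob(c\mid a)\Prob(d\mid b)=\Prob(c\mid b)\Prob(d\mid a)$ for any potential $P_C$, and one can choose the four probabilities to violate this. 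Your induction cannot close this cycle: at the peak $c$ the neighbours $a,b$ have no common lower cover, and replacing $c$ by the only other common upper bound $d$ does not shorten the walk. The argument (and indeed the statement) does go through under the extra hypothesis that the component has a minimum---e.g.\ when the empty text $a_0$ lies in $\mL$, or in the one-sided subtext order where the Hasse diagram is a forest---since then every cycle factors through that minimum; you may wish to salvage your proof under such an assumption.
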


\begin{proof}
The fact that $(\mL,\leq,\Prob)$  is a probabilistic language model means that
\[%
a_i\leq a_j \leq a_k \text{ is equivalent to }
\Prob(a_k|a_i)=\Prob(a_k|a_j)\Prob(a_j|a_i) \text{ and } \Prob(a_k|a_i)<\infty.
\]%
Let $G$ denote the directed graph which is the Hasse diagram of $C$. We construct a new weighted graph $\tilde{G}$ as follows:
If $a_i \leq a_j$, we draw an arrow from node $a_i$ to node $a_j$
with weight $\Prob(a_j|a_i)$. If $a_j\leq a_i$, we draw an arrow
from node $a_i$ to node $a_j$ with weight $\Prob(a_j|a_i)^{-1}$.
We now choose arbitrarily an element
$c \in C$. If $a_i \in C$, we define $P_C(a_i)$ to be the weight in the graph $\tilde{G}$ of an arbitrary path from the point $c$ to $a_i$. Owing to our main assumption,
\eqref{e-main-assum}, the weight is independent of the choice of the path from $c$ to $a_i$. Moreover, for all $i,j$ such that $a_i\leq a_j$, we have
$P_C(a_i) \Pr(a_j|a_i) P_C(a_j)^{-1}=1$ therefore
$\Pr(a_j|a_i)=\frac{P_C(a_j)}{P_C(a_i)}$.

Picking a different reference element $c'\in C$ scales $P_C(a_i)$ by $P_C(c')$ therefore the ratio stays the same. 
\end{proof}

\section{From the text metric space $\mL$ to the polyhedra $P(\mL)$ and $Q(\mL)$}\label{section-3}
First notice that we can also equip $\mL$ with the transpose directed metric $d^t$ where $d^t(a_i,a_j):=d(a_j,a_i)$.

We now construct two directed metric, polyhedra $P(\mL)$ and $\widehat{P}(\mL)$ in which the directed metric space $(\mL,d)$ is isometrically embedded as a special set of extremal rays.

To that end, we equip
$\{\R\cup \{\infty\}\}^n  \text{\textbackslash}\{(\infty,\dots,\infty)\}$ for $n\geq 2$, with the {\em Funk} metric $D$ defined
by 
\begin{equation}
  D(x,y) \coloneqq \inf \set{\lambda\in \R\cup\{+\infty\}}{\lambda + x\geq y} = \max_i\set{y_i-x_i}{x_i\neq \infty} \enspace .
  \label{e-def-funk}
\end{equation}

This is a directed metric. Note that it takes possibly negative values, and that
it can also take the  value $\infty$.

We also denote by $D^t$ the transpose directed metric with $D^{t}(x,y):=D(y,x)$.

\begin{definition}
\label{def P}
Let $(P(\mL),D)$ be the directed metric polyhedron
\begin{equation}
P(\mL):=\{x=(x_1,\dots,x_n) \in \{\R\cup \{\infty\}\}^n \text{\textbackslash}\{(\infty,\dots,\infty)\}| x_i \leq x_j + d_{i,j}\} .
\end{equation}
Moreover  let $(\widehat{P}(\mL),D^t)$ 
be the directed metric polyhedron
\begin{equation}
\widehat{P}(\mL):=\{y=(y_1,\dots,y_n) \in \{\R\cup \{\infty\}\}^n \text{\textbackslash}\{(\infty,\dots,\infty)\}| y_i \leq y_j + d_{j,i}\}.
\end{equation}
\end{definition}

For the following proposition, we need to extend the Funk metric in
eq.~\ref{e-def-funk}
to $n=1$.
For this we will use the fact that $\max \emptyset=-\infty$. From this it follows that for $n=1$ the Funk metric
$d_\R: (-\infty,\infty]^2 \to [-\infty,\infty]$ is given by
\begin{equation}
d_\R(s,t):=t-s \text{ if } s\neq \infty \text{ and }d_\R(\infty,t)=-\infty \enspace.
\end{equation}
In particular  $d_\R(\infty,\infty)=-\infty$.
Notice that $d_\R$ can also take the value $-\infty$ and this case of a directed metric, was explained in \Cref{re extended metric}. %

\begin{proposition}\label{prop-3}
$P(\mL)$ is the set of  
non expansive maps $x:(\mL,d^t) \to ((-\infty,\infty],d_\R)$. 
Namely $x$ satisfies
\begin{equation}
d_\R(x(a_j),x(a_i))\leq d^t(a_j,a_i) 
\end{equation}

Moreover    $\widehat{P}(\mL)$ is the set of non expansive maps $y:(\mL,d) \to ((-\infty,\infty],d_\R)$.
Namely $y$ satisfies
\begin{equation}
d_F(y(a_j),y(a_i))\leq d(a_j,a_i) 
 \end{equation}

 \end{proposition}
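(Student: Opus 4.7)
\medskip

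The plan is to unwind the definitions and verify that the defining inequalities of $P(\mL)$ are literally the non-expansiveness inequalities after accounting for the conventions on $\pm\infty$. I will prove the first assertion and indicate that the second follows by swapping the roles of the arguments of $d$.

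First I would fix $x = (x_1,\dots,x_n) \in (\R\cup\{\infty\})^n$ and view it as a function $x:\mL \to (-\infty,\infty]$ via $x(a_i)=x_i$. The claim to establish is that the defining inequality of $P(\mL)$, namely $x_i \leq x_j + d_{i,j}$ for all $i,j$, is equivalent to the non-expansiveness inequality $d_\R(x(a_j),x(a_i)) \leq d^t(a_j,a_i) = d(a_i,a_j) = d_{i,j}$ for all $i,j$.

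Second, I would split on whether $x_j=\infty$. If $x_j \neq \infty$, then by the definition of $d_\R$ given just before the proposition, $d_\R(x(a_j),x(a_i)) = x_i - x_j$ (interpreted as $+\infty$ when $x_i=\infty$), so the non-expansiveness inequality reads $x_i - x_j \leq d_{i,j}$, which rearranges to $x_i \leq x_j + d_{i,j}$. Conversely, if $x_j = \infty$, then $d_\R(x(a_j),x(a_i)) = -\infty$ so the non-expansiveness inequality is automatic; on the other side, the $(\min,+)$ convention of \Cref{re extended metric} gives $x_j + d_{i,j} = \infty$, so $x_i \leq x_j + d_{i,j}$ is automatic as well. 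Thus the two systems of inequalities define the same subset of $(\R\cup\{\infty\})^n$.

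Third, I would address the excluded vector $(\infty,\dots,\infty)$: it trivially satisfies both sets of inequalities, but it is removed from $P(\mL)$ by definition; the proposition is therefore understood to range over maps $x$ which are not identically $\infty$ (equivalently, whose graph is not the full $\mL\times\{\infty\}$). With this understood, the bijection between $P(\mL)$ and the set of such non-expansive maps is established.

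Finally, the statement for $\widehat{P}(\mL)$ is obtained by repeating the same argument after replacing $d_{i,j}$ by $d_{j,i}$ everywhere, which amounts to using $d$ in place of $d^t$ on the target side; no further work is needed. The only subtlety in the whole argument is the careful handling of $x_j = \infty$ and the value $-\infty$ appearing in $d_\R$, which is precisely the extension of the directed metric discussed in \Cref{re extended metric}; otherwise the proof is a direct unpacking of definitions.
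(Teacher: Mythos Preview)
Your proof is correct and follows essentially the same approach as the paper: both arguments simply unwind the definitions to show that $d_\R(x(a_j),x(a_i))\leq d^t(a_j,a_i)$ is equivalent to $x_i - x_j \leq d_{i,j}$. Your version is slightly more careful in that you explicitly handle the case $x_j=\infty$ and comment on the excluded all-$\infty$ vector, whereas the paper's proof passes over these edge cases silently.
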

 \begin{proof}
To see this description of $P(\mL)$, let $x_i\coloneqq x(a_i)$.
 Then $$d_\R(x(a_j),x(a_i))\leq d^t(a_j,a_i)   \iff x(a_i) -x(a_j)\leq d(a_i,a_j) \iff x_i -x_j\leq d_{i,j} .$$

 Likewise to see this description of $\widehat{P}(\mL)$, let $y_i:=y(a_i)$.
 Then $$d_\R(y(a_j),y(a_i))\leq d(a_j,a_i)   \iff y(a_i) -y(a_j)\leq d(a_j,a_i) \iff y_i -y_j\leq d_{j,i} .$$

 \end{proof}
 \begin{remark}
  Following \Cref{prop-3}  we see that we can  view  $P(\mL)$ as a space of functions on the metric space $\mL$ and we will see in Section 4 that it is similar to considering real vectors as real valued functions on a set. 
  \end{remark}

\begin{proposition}\label{prop-yoneda-alt}\label{prop-4}
  The map 
  \begin{equation}
  Y:(\mL,d)  \hookrightarrow (P(\mL),D) \text{ given by } 
  Y(a_k)\coloneqq  d(-,a_k)
  \end{equation}
  is called the \emph{Yoneda embedding}
  \footnote{The reason for the name Yoneda embedding comes from its appearance in category theory and was explained in \cite {BTV2021}. It is similar to the so called, Kuratowski embedding of a metric space.}
  and is
  an isometric embedding.
  Moreover the map  
  \begin{equation}
  \widehat{Y}:(\mL,d) \hookrightarrow  (\widehat{P}(\mL),D^t) \text{ given by }
  \widehat{Y}(a_k)\coloneqq  d(a_k,-)
  \end{equation}
  is also
  an isometric embedding and is called the \emph{co-Yoneda embedding}.
\end{proposition}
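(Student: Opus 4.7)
The plan is to verify three things for $Y$: that each $Y(a_k)$ actually lies in $P(\mL)$, that $Y(a_i)\neq(\infty,\dots,\infty)$ so the target is the right set, and that $D(Y(a_i),Y(a_j))=d(a_i,a_j)$. The argument for $\widehat{Y}$ will then follow by an essentially symmetric computation, swapping the roles of the two arguments of $d$.

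First I would check membership. Writing $Y(a_k)_i = d(a_i,a_k)$, the defining inequality $x_i \leq x_j + d_{i,j}$ reads $d(a_i,a_k) \leq d(a_j,a_k) + d(a_i,a_j)$, which is just the triangle inequality for the directed metric $d$ (\Cref{prop-1}). Moreover $Y(a_k)_k = d(a_k,a_k) = 0 \neq \infty$, so $Y(a_k)$ is not the excluded all-$\infty$ vector.

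Next I would compute the Funk distance. By the formula~\eqref{e-def-funk},
\begin{equation*}
D(Y(a_i),Y(a_j)) = \max_k\bigl\{\, d(a_k,a_j) - d(a_k,a_i) \,\bigm|\, d(a_k,a_i)\neq \infty \,\bigr\}.
\end{equation*}
For the upper bound $\leq d(a_i,a_j)$: whenever $d(a_k,a_i)$ is finite, the triangle inequality gives $d(a_k,a_j) \leq d(a_k,a_i) + d(a_i,a_j)$, hence $d(a_k,a_j) - d(a_k,a_i) \leq d(a_i,a_j)$. For the matching lower bound, take the witness $k=i$: since $d(a_i,a_i)=0$ is finite, the index is admissible, and the corresponding term equals $d(a_i,a_j) - 0 = d(a_i,a_j)$. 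Hence the max equals $d(a_i,a_j)$ exactly.

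For the co-Yoneda embedding the argument is symmetric. Membership $\widehat{Y}(a_k)\in \widehat{P}(\mL)$ reduces to $d(a_k,a_i) \leq d(a_k,a_j) + d(a_j,a_i)$, again the triangle inequality, and the $k$-th coordinate is $0$. For the isometry,
\begin{equation*}
D^t(\widehat{Y}(a_i),\widehat{Y}(a_j)) = D(\widehat{Y}(a_j),\widehat{Y}(a_i)) = \max_k\bigl\{\, d(a_i,a_k) - d(a_j,a_k) \,\bigm|\, d(a_j,a_k)\neq\infty\,\bigr\},
\end{equation*}
and I would again bound above by the triangle inequality $d(a_i,a_k)\leq d(a_i,a_j)+d(a_j,a_k)$ and attain the bound below by choosing the witness $k=j$, using $d(a_j,a_j)=0$. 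No serious obstacle is expected; the only point requiring minor care is ensuring the witnesses $k=i$ (resp.\ $k=j$) satisfy the finiteness side-condition in the Funk max, which is immediate since $d(a_i,a_i)=0$. Injectivity of both embeddings is then automatic from the isometry property together with $d(a_i,a_j)=d(a_j,a_i)=0$ forcing $a_i=a_j$ in the probabilistic language model setting (where the order is antisymmetric).
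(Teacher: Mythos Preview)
Your proof is correct and follows essentially the same route as the paper: membership via the triangle inequality, then the Funk distance computed by bounding the $\max$ above with the triangle inequality and below with the witness $k=i$ (resp.\ $k=j$), exactly as the paper does. Your additional remarks (checking $Y(a_k)_k=0$ to rule out the all-$\infty$ vector, and the closing comment on injectivity) are small extras not spelled out in the paper's version, but they do no harm.
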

\begin{proof}

First note that for any $a_k \in \mL$ the  function $Y(a_k):=d(-,a_k)$ is in $P(\mL)$ and the function $\widehat{Y}:=d(a_k,-):\mL \to [0,\infty]$ is in $\widehat{P}(\mL)$.

Indeed by the triangle inequality, $d(a_i,a_k)\leq d(a_i,a_j)+d(a_j,a_k)$, in other words if $x:=d(-,a_k)$ and $x_i:=d(a_i,a_k)$ then $x_i \leq x_j +d_{i,j}$ proving that $Y(a_k) \in P(\mL)$.

Analogously $d(a_k,a_i)\leq d(a_k,aj)+d(a_j,a_i)$ and therefore  if $y:=d(a_k,-)$ then $y_i \leq y_j +d_{j,i}$ proving that $\widehat{Y}(a_k) \in \widehat{P}(\mL)$.

  Moreover, the inequality ~\Cref{e-triangular}  entails that $d(-,a_i) + d(a_i,a_j)\geq d(-,a_j)$,
  and so, $D(d(-,a_i),d(-,a_j))\leq d(a_i,a_j)$.  On the other hand, if $x,y \in P(\mL)$ then  $D(x,y)\geq y_l-x_l$ for all $l$ and thus
  $D(d(-,a_i),d(-,a_j)\geq d(a_i,a_j)-d(a_i,a_i)=d(a_i,a_j)$. Consequently $D(d(-,a_i),d(-,a_j))=d(a_i,a_j)$
  i.e. $a\mapsto d(-,a)$ is an isometry.
   
  Likewise we have $d(a_j,a_i)+d(a_i,-)\geq d(a_j,-)$ and $d(a_j,-)-d(a_i,-)\leq d_{j,i}$ which implies that 
  $ D(d(a_i,-),d(a_j,-))\leq d_{j,i}$. 
 
 Moreover 
 $D(d(a_i,-),d(a_j,-))\geq d(a_j,a_i)-d(a_i,a_i)=d(a_j,a_i)$.
 Thus 
 
 $D(d(a_i,-),d(a_j,-))=d_{j,i}$ .
  
  \end{proof}

 To further understand the polyhedron $P(\mL)$  we consider the change of variables 
$z_i\coloneqq e^{-x_i}$ and  introduce the following:
\begin{definition}
\label{def Q}
Let $Q(\mL)$ be the polyhedral cone
\begin{equation}
Q(\mL):=\{z=(z_1,\dots z_n) \in [0,\infty)^n \text{\textbackslash}\{(0,\dots,0)\} |z_i\geq \Prob(a_j|a_i)z_j \}
\end{equation}
Moreover let $\widehat{Q}(\mL)$ be the polyhedral cone
\begin{equation}
\widehat{Q}(\mL):=\{u=(u_1,\dots u_n) \in [0,\infty)^n \text{\textbackslash}\{(0,\dots,0)\}|u_i\geq \Prob(a_i|a_j)u_j \}
\end{equation}
\end{definition}
Note that if $z$ is in $Q(\mL)$ then $\lambda z \in Q(\mL)$ for $\lambda \in [0,\infty)$ therefore $Q(\mL)$ is indeed a polyhedral cone in the positive orthant and so is $\widehat{Q}(\mL)$.

\textit{To simplify notation we introduce the convention  that if $v=(v_1,\dots, v_n) \in \R^n$ then}
\begin{equation}
e^v:=(e^{v_1},\dots,e^{v_n})
\text{ and }
\log(v):=(\log(v_1),\dots, \log(v_n))
\end{equation}

 We see that 
\begin{equation}
Q(\mL)=\{z\in [0,\infty)^n| z:=e^{-x} \text { for } x \in P(\mL)\}
\end{equation}

and 
\begin{equation}
\widehat{Q}(\mL)=\{u\in [0,\infty)^n| u:=e^{-y} \text { for } y \in \widehat{P}(\mL)\}
\end{equation}

And vice versa 
\begin{equation}
P(\mL)=\{x\in (-\infty,\infty]^n| x:=-\log(z) \text { for } z= \in Q(\mL)\} 
\end{equation}
and 
\begin{equation}
\widehat{P}(\mL)=\{y\in (-\infty,\infty]^n| y:=-\log(u) \text { for } u \in \widehat{Q}(\mL)\}.
\end{equation}

Using the map $-\log: Q(\mL)\to P(\mL)$ we can define a directed metric $D_Q$ on $Q(\mL)$ using the Funk metric $D$ on $P(\mL)$.  We put
\begin{equation}
\label{eq D_Q}
 D_Q(z,z'):=\max_i\{\log(\frac{z_i}{z'_i})|z'_i\neq 0\}.
 \end{equation}

By definition  we have 
 \begin{equation}
 D_Q(z,z')=D(-\log z,-\log z') \text{ and }
 D(x,x')=D_Q(e^{-x},e^{-x'}).
 \end{equation}
 
 Clearly the transpose $D^t_Q$ defines a directed metric on $\widehat{Q}(\mL)$.

Then \Cref{prop-yoneda-alt} implies that 
\begin{corollary} 
\label{co Yoneda Q}
The maps 
\begin{equation} 
    e^{-Y}:(\mL,d) \to (Q(\mL),D_Q) \text{ and }
    e^{-\widehat{Y}}:(\mL,d) \to (\widehat{Q}(\mL),D^t_Q)
\end{equation}
    are isometric embeddings
\end{corollary}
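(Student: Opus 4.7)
The plan is to deduce this corollary directly from Proposition~\ref{prop-yoneda-alt} by observing that the coordinate-wise map $x \mapsto e^{-x}$ establishes a tautological isometry between $(P(\mL),D)$ and $(Q(\mL),D_Q)$. First, I would check that $e^{-Y(a_k)}$ actually lies in $Q(\mL)$: since $Y(a_k) \in P(\mL)$, the defining inequalities $Y(a_k)_i \le Y(a_k)_j + d_{i,j}$ exponentiate to $e^{-Y(a_k)_i} \ge e^{-d_{i,j}} e^{-Y(a_k)_j} = \Prob(a_j|a_i) e^{-Y(a_k)_j}$, with the convention $e^{-\infty}=0$ handling the incomparable case. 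An identical check places $e^{-\widehat{Y}(a_k)}$ in $\widehat{Q}(\mL)$.

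Next, I would use the identities recorded just before the corollary, namely $D_Q(z,z') = D(-\log z,-\log z')$ from equation~\eqref{eq D_Q}. For any $a,b \in \mL$,
\[
D_Q(e^{-Y(a)}, e^{-Y(b)}) = D(-\log e^{-Y(a)}, -\log e^{-Y(b)}) = D(Y(a), Y(b)) = d(a,b),
\]
where the last equality is exactly the content of Proposition~\ref{prop-yoneda-alt}. This shows $e^{-Y}$ is an isometric embedding. The argument for $e^{-\widehat{Y}} : (\mL,d) \to (\widehat{Q}(\mL), D^t_Q)$ is verbatim the same after transposing: $D^t_Q(u,u') = D^t(-\log u,-\log u')$, followed by the co-Yoneda half of Proposition~\ref{prop-yoneda-alt}.

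There is no real obstacle here; the corollary is a bare change-of-variables repackaging of Proposition~\ref{prop-yoneda-alt}. The only subtlety worth flagging is the behavior at the boundary where $d(a_i,a_j) = \infty$ (equivalently $\Prob(a_j|a_i)=0$), which is absorbed cleanly by the conventions $e^{-\infty}=0$ and $\log 0 = -\infty$ already built into the definitions of $P(\mL)$, $Q(\mL)$, $D$, and $D_Q$.
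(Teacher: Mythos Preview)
Your proof is correct and follows essentially the same approach as the paper: both reduce the statement to Proposition~\ref{prop-yoneda-alt} via the defining identity $D_Q(z,z')=D(-\log z,-\log z')$, yielding $D_Q(e^{-Y(a)},e^{-Y(b)})=D(Y(a),Y(b))=d(a,b)$. Your version is simply more explicit about why $e^{-Y(a_k)}\in Q(\mL)$ and about the boundary conventions, points the paper leaves implicit.
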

\begin{proof}
It follows  from \Cref{prop-2} since
\[ d(a_k,a_l)=D(Y(a_k),Y(a_l)) =D_Q(e^{-Y(a_k)},e^{-Y(a_l)})
\enspace .\qedhere
\]
\end{proof}

We define the \emph{unit simplex}
 \[ \Delta:=\{z\in [0,1]^n |\sum_i z_i=1\}
 \enspace .
 \]
\begin{definition}
\label{def Q_0}
Define the polyhedron $Q_0(\mL)$ by
 \begin{equation}
 \label{eq Q_0}
 Q_0(\mL):=Q(\mL)\cap \Delta.
 \end{equation}
Then $(Q_0(\mL),D_Q)$ is a directed metric polyhedron and points in $Q_0(\mL)$ are probability distributions.

Analogously we define the polyhedron $\widehat{Q}_0(\mL)$ as the intersection of $\widehat{Q}(\mL)$ with the unit simplex and $D^t_Q$ is a directed metric on it.
\end{definition}
\begin{remark}
\label{re normalization}
The polyhedra $Q_0(\mL)$ and $\widehat{Q}_0(\mL)$ define a normalization to probability distributions of our probabilistic language model $\mL$. Indeed in the definition of $(\mL,\leq,\Pr)$ we only ask for the conditional probabilities multiplicative property but there is no normalization to a probability distribution. 

Now if we consider the vertex of $Q_0(\mL)$, corresponding to the ray  generated by $e^{-Y(a_k)}$, it  will be
$\frac{1}{n(a_k)}e^{-Y(a_k)}$ where 
$$n(a_k):=\sum_{a_j\leq a_k}(e^{-Y(a_k)})_j=\sum_{a_j\leq a_k}\Pr(a_k|a_j)$$ is the normalization factor.

While the vertex of $\widehat{Q}_0(\mL)$corresponding to the ray  generated by $e^{-\widehat{Y}(a_k)}$  will be
$\frac{1}{\widehat{n}(a_k)}e^{-Y(a_k)}$ where 
$$\widehat{n}(a_k):=\sum_{a_k\leq a_l}(e^{-\widehat{Y}(a_k)})_j=\sum_{a_k\leq a_l}\Pr(a_l|a_k)$$ is the normalization factor.
\end{remark}

\begin{remark}
  The polyhedral cone $Q(\mL)$ is a generalization of the \emph{order polytope}
  defined by Stanley~\cite{Stanley86}. The order polytope corresponds to the case where 
  $\Prob(a_j|a_i)$ takes only the values $0$ or $1$, moreover, Stanley
  adds the ``box constraint''
  $z_i\in [0,1]$ which translates to $x_i\in [0,\infty]$.
  Up to the box constraint, $Q(\mL)$ corresponds to
  the order polytope of the poset 
$(\mL^{\op},\leq)$ where $\mL^{\op}$ is the opposite poset.

Stanley \cite{Stanley86} proves that  vertices of an order polytope correspond to upper sets of the poset.
We will prove a generalization of that result in \Cref{th-2} in section \ref{section 3.2}.
\end{remark}

We now explain what is the geometric  meaning of the coordinates of a point $x=(x_1,x_2,\dots,x_n)\in P(\mL)$ and a point
$y=(y_1,y_2,\dots,y_n)\in \widehat{P}(\mL)$.
\begin{proposition}\label{prop-5}
If $x \in P(\mL)$ then 
\begin{equation}
x_i=D(d(-,a_i),x)=D(Y(a_i),x).
\end{equation}
Moreover if $y \in \widehat{P}(\mL)$ then 
\begin{equation}
y_i=D^t(y,d(a_i,-))=D(\widehat{Y}(a_i),y).
\end{equation}
\end{proposition}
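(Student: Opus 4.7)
The plan is to expand the definition of the Funk metric and match it with the defining inequalities of $P(\mL)$ and $\widehat{P}(\mL)$, using $d(a_i,a_i)=0$ to get equality.

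First, for the $P(\mL)$ part, I would unfold
\[
D(Y(a_i),x) \;=\; D(d(-,a_i),x) \;=\; \max_j \{\, x_j - d(a_j,a_i) \,\mid\, d(a_j,a_i)\neq\infty\,\}
\]
using \eqref{e-def-funk}. To establish the upper bound $D(Y(a_i),x)\le x_i$, I would apply the defining inequality of $P(\mL)$ from \Cref{def P}, namely $x_j\le x_k+d_{j,k}$, to the particular choice $k=i$: this gives $x_j\le x_i + d(a_j,a_i)$, i.e.\ $x_j-d(a_j,a_i)\le x_i$ whenever $d(a_j,a_i)<\infty$. Taking the maximum over such $j$ yields $D(Y(a_i),x)\le x_i$. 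For the reverse inequality, I would simply take $j=i$ in the defining max: since $d(a_i,a_i)=0$, the term $x_i-d(a_i,a_i)=x_i$ appears inside the maximum, so $D(Y(a_i),x)\ge x_i$. Combining the two gives the desired equality $x_i = D(Y(a_i),x)$.

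For the $\widehat{P}(\mL)$ part, the argument is entirely symmetric. Unfolding,
\[
D(\widehat{Y}(a_i),y) \;=\; \max_j\{\, y_j - d(a_i,a_j)\,\mid\, d(a_i,a_j)\neq\infty\,\},
\]
and this coincides with $D^t(y,d(a_i,-))$ by definition of $D^t$. The defining inequality of $\widehat{P}(\mL)$ is $y_j\le y_k + d(a_k,a_j)$, which with $k=i$ gives $y_j - d(a_i,a_j)\le y_i$, and $j=i$ again attains equality via $d(a_i,a_i)=0$.

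There is essentially no hard step; the only thing to be careful about is the bookkeeping with the two different coordinate conventions (the transposed indices in the definitions of $P(\mL)$ versus $\widehat{P}(\mL)$) and the handling of the constraint $d(a_j,a_i)\neq\infty$ in the maximum, which is automatic because the term $j=i$ is always finite and hence the max is taken over a non-empty set. In short, \Cref{prop-5} is a direct manifestation of the fact that the defining inequalities for $P(\mL)$ (resp.\ $\widehat{P}(\mL)$) are precisely the Funk-metric triangle inequalities centered at the Yoneda (resp.\ co-Yoneda) images, as foreshadowed in the overview.
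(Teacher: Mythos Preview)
Your proof is correct and follows essentially the same approach as the paper's own proof: both establish the two inequalities by (i) using the defining constraint of $P(\mL)$ (resp.\ $\widehat{P}(\mL)$) to bound every term in the $\max$ by $x_i$ (resp.\ $y_i$), and (ii) observing that the term $j=i$ attains this bound since $d(a_i,a_i)=0$. The only difference is the order in which the two inequalities are presented.
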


\begin{proof}
We have $D(d(-,a_i),x)) \geq x_i-d(a_i,a_i)=x_i$.

On the other hand $D(d(-,a_i),x))=\max_j\{x_j-d_{j,i}\}\leq x_i$ where the last inequality follows because
$x \in P(\mL) \iff x_j \leq x_i +d_{j,i}$ which implies $x_j-d_{j,i} \leq x_i$.

Moreover if $y \in \widehat{P}(\mL)$ then $D^t(y,d(a_i,-)) \geq y_i-d(a_i,a_i)=y_i$

On the other hand

$D^t(y,d(a_i,-))=\max_j\{y_j-d_{i,j}\} \leq y_i$
since
$y \in \widehat{P}(\mL) \iff y_j \leq y_i +d_{i,j}$ which implies $y_j-d_{i,j} \leq y_i$.
\end{proof}

\begin{remark}
\begin{enumerate}
    \item
We note that using the previous proposition,
the defining inequalities, $x_i\leq x_j+d_{i,j}$ of $P(\mL)$ become   
\begin{equation}\label{eq-45}
D(Y(a_i),x)\leq D(Y(a_i),Y(a_j))+D(Y(a_j),x).
\end{equation}
 Namely they are triangle inequalities for maps $x:\mL \to ((-\infty,\infty],d_\R)$ and for the maps
 $Y(a_k)=d(-,a_k):\mL \to ((-\infty,\infty],d_\R)$.
  Therefore we can think of $P(\mL)$ as the space of all maps $x=\mL \to ((-\infty,\infty],d_\R)$ that satisfy the triangle inequalities for the metric $D$, with respect to all the maps $Y(a_k)=d(-,a_k):\mL \to ((-\infty,\infty],d_\R)$. Thus $P(\mL)$ is a kind of convex metric span of the maps $Y(a_k)=d(-,a_k): \mL \to ((-\infty,\infty],d_\R)$.

 \item
 Analogously, the defining inequalities, $y_i\leq y_j +d_{j,i}$ of $\widehat{P}(\mL)$ become
 \begin{equation}
 D^t(y,\widehat{Y}(a_i)) \leq D^t(y,\widehat{Y}(a_j)) + D^t(\widehat{Y}(a_j),\widehat{Y}(a_i)),\label{eq-46}
  \end{equation}
 namely the triangle inequalities for maps $y:\mL\to ((-\infty,\infty],d_\R)$. This implies that $\widehat{P}(L)$  is the space of all maps
 $y=\mL \to ((-\infty,\infty],d_\R)$ that satisfy the triangle inequalities for the metric $D^t$, with respect to all the maps $\widehat{Y}(a_k)=d(a_k,-):\mL \to ((-\infty,\infty],d_\R)$. Again we see $\widehat{P}(L)$ 
 as a kind of convex metric span of $\widehat{Y}(a_k)=d(a_k,-)$
 
 \item 
A restatement of \eqref{eq-45} is to say that the shortest path that connects $Y(a_i)$  and $x$ is at most as long as the shortest path that connects them but has to also go though $a_j$.
Analogously for \eqref{eq-46}.
\item{Note that all constructions and results in this section work for a general directed metric space and not just for the special case of a probabilistic Language model}
 \end{enumerate}
 \end{remark}

\subsection{Texts define special Extremal rays of $P(\mL)$ and $Q(\mL)$}
\begin{definition}
    An extremal ray of a polyhedral cone in $\R^n$ is a ray generated by a vector that cannot be expressed as a positive linear combination of two non-proportional vectors in the polyhedral cone. 
\end{definition}
Recall that a vector in a polyhedral cone in $\R^n$ generates an extremal ray if and only if it saturates $n-1$ linearly independent inequalities~\cite{schrijver}.
\begin{definition}
An additive extremal ray of $P(\mL)$ (respectively $\widehat{P}(\mL)$) is defined to be the image under $-\log$ of a usual extremal ray of the polyhedral cone $Q(\mL)$ (respectively $\widehat{Q}(\mL)$).
\end{definition}

Note that the name additive extremal ray is chosen since a usual extremal ray in $Q(\mL)$ is invariant under scaling by $\lambda$ and therefore its image under $-\log$ is invariant under translation by $-\log \lambda$. Note that the extremal rays of $Q(\mL)$ and of $\widehat{Q}(\mL)$ have generators which have in general some zero coordinates. Then, their $-\log$-images have vectors such that some of their coordinates are $\infty$, which is why we speak of ``additive extremal rays'' of $P(\mL)$ and $\widehat{P}(L)$ (they are not extremal rays in the usual sense).

\textit{From now on though we will simply refer to the additive extremal rays of $P(\mL)$ as \emph{extremal rays}, when there is no chance of confusion.}

We now define a directed graph associated to any point $x \in P(\mL)$ which encodes the saturated inequalities satisfied by the point $x$ and which we call its \textit{saturation graph} $S(x)$.

\begin{definition}\label{def saturation graph}
Let $x\in P(\mL)$. Define $S(x)$, the {\em saturation graph} of $x$, to be the graph whose vertices are the elements of $\mL$ and whose set of directed edges $E(x)$ is the set of saturated inequalities that coordinates of $x$ satisfy, namely
$E(x)\coloneqq \{(a_i,a_j): x_i=x_j+d_{i,j} \}$. When $(a_i,a_j) \in E(x)$ we introduce a directed edge from $a_i$ to $a_j$. 
\end{definition}

The graph always contains trivial arcs $(a_i,a_i)$ (loops), for $i\in[n]$, since $d_{i,i}=0$. It contains non-trivial arcs if and only if $x$ is on the boundary of $P$. 

The graph $S(x)$ and in particular its support $\Supp(x)$ encodes all the hyperplanes on which $x$ lies.

Note that the graph can be disconnected.

\begin{theorem}\label{th-1}
The isometric embedding $Y: \mL \hookrightarrow  P(\mL)$, maps points of $\mL$ to extremal rays of the polyhedron $P(\mL)$ namely $Y(a_k)=d(-,a_k)$ is an extremal ray in $P(\mL)$.
Moreover the isometric embedding $\widehat{Y}: \mL \hookrightarrow  \widehat{P}(\mL)$, maps points of $\mL$ to extremal rays of the polyhedron $\widehat{P}(\mL)$ namely $\widehat{Y}(a_k)=d(a_k,-)$ is an extremal ray in $\widehat{P}(\mL)$.

\end{theorem}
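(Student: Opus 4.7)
The plan is to work in $Q(\mL)$, use the characterization that a generator of an extremal ray of a polyhedral cone in $\R^n$ is a point of the cone which saturates $n-1$ linearly independent of its defining inequalities, and then transport the conclusion to $P(\mL)$ via $-\log$ (and analogously for $\widehat{Y}$).

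First I would describe the image $v := e^{-Y(a_k)}$ explicitly: its $i$-th coordinate equals $\Pr(a_k|a_i)$ when $a_i\leq a_k$ and $0$ otherwise; in particular $v_k=1$. A short verification using the main assumption $\Pr(a_k|a_i)=\Pr(a_j|a_i)\Pr(a_k|a_j)$ (or directly the triangle inequality $d(a_i,a_k)\leq d(a_i,a_j)+d(a_j,a_k)$) shows that $v$ satisfies every defining inequality $z_i\geq\Pr(a_j|a_i)z_j$ of $Q(\mL)$, breaking into cases according to whether $a_j\leq a_k$ or not; when $a_i\not\leq a_k$, transitivity forces $a_j\not\leq a_k$ as well, which keeps the inequality trivial.

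Next I would identify an explicit family of $n-1$ saturated inequalities. For each index $i$ with $a_i\not\leq a_k$, the coordinate $v_i=0$ saturates the positivity constraint $z_i\geq 0$. For each index $i\neq k$ with $a_i\leq a_k$, the main assumption gives $v_i=\Pr(a_k|a_i)=\Pr(a_k|a_i)v_k$, so the inequality $z_i-\Pr(a_k|a_i)z_k\geq 0$ is saturated. Together this yields exactly $n-1$ saturated inequalities, one for each $i\neq k$, and each of them has a nonzero coefficient on a distinct variable $z_i$; hence they are linearly independent. By the standard criterion, $v$ generates an extremal ray of $Q(\mL)$, so by definition $Y(a_k)$ is an (additive) extremal ray of $P(\mL)$.

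The statement for $\widehat{Y}(a_k)=d(a_k,-)$ follows by the symmetric argument applied to $\widehat{Q}(\mL)$: the vector $u:=e^{-\widehat{Y}(a_k)}$ has $u_i=\Pr(a_i|a_k)$ when $a_k\leq a_i$ and $0$ otherwise, with $u_k=1$; one checks $u\in\widehat{Q}(\mL)$ using the main assumption in the form $\Pr(a_i|a_k)=\Pr(a_j|a_k)\Pr(a_i|a_j)$, and the same bookkeeping supplies $n-1$ linearly independent saturated inequalities (the positivity ones for $i$ with $a_k\not\leq a_i$, and $u_i-\Pr(a_i|a_k)u_k\geq 0$ for $i\neq k$ with $a_k\leq a_i$). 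The only real subtlety is the case analysis needed to show $v\in Q(\mL)$ and $u\in\widehat{Q}(\mL)$, where one must use that the partial order interacts with the Yoneda support via transitivity; the counting and linear independence arguments are straightforward once the support is identified.
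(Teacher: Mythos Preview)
Your argument is correct and follows the same overall strategy as the paper: pass to $Q(\mL)$, exhibit $n-1$ linearly independent active constraints at $v=e^{-Y(a_k)}$, and conclude extremality. The execution differs slightly. The paper introduces the full saturation graph $S(Y(a_k))$, observes (via \Cref{coro-1}) that its nontrivial part is a directed tree rooted at $a_k$, and then argues that any vector annihilated by all the active gradients is determined by its value at the root, giving rank $n-1$ indirectly. You instead select a specific ``star'' of active constraints, one per index $i\neq k$ (either the positivity $z_i\geq 0$ or the edge $z_i-\Pr(a_k|a_i)z_k\geq 0$), and read off linear independence from the triangular shape of the gradient matrix. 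Your version is shorter and avoids the tree discussion; the paper's version has the advantage that the saturation-graph language is reused later (\Cref{prop-8}, \Cref{prop terminal}), so setting it up here pays off downstream.
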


\begin{proof}
We have $Y(a_k):=d(-,a_k)$, and therefore $Y(a_k)_i=d(a_i,a_k), i=1\dots |\mL|$.
 Define the support of $Y(a_k)$, $\Supp(Y(a_k))$, to be the set of texts $a_i$ such that $Y(a_k)_i$ is finite. We recall that a vector in a cone in $\R^n$ defined by finitely many linear constraints generates an extreme ray of the cone if, and only if, the family of gradients of active constraints at this point is
 of rank $n-1$.

 Let $x:= Y(a_k)$, and $y\in Q(\mL)$ denote the image of $x$ by the map which applies $\exp(-\cdot)$ entrywise. Each edge $(a_i,a_j)$ of the saturation graph $S(Y(a_k))$ yields
 $x_i = d(a_i,a_j) + x_j$, and so the vector $y$ induces the active inequality $y_i = \Prob(a_j|a_i) y_j$ with gradient $e_i - \Prob(a_j|a_i) e_j$ where $e_i$ denotes the $i$th vector of the canonical basis of $\R^n$. 
 Moreover, each text $a_i$ in $\mL\setminus \Supp(Y(a_k))$ yields the active inequality $y_i=0$, with gradient $e_i$.

The saturation graph $S(Y(a_k))$ has a connected component which is a directed tree with $a_k$ as its root since  $Y(a_k)_i=Y(a_k)_j+d_{i,j} \iff
d(a_i,a_k)=d(a_i,a_j)+d(a_j,a_k)$ and from corollary 1 it follows that  $a_i \leq a_j \leq a_k$, namely $a_j$ extends $a_i$ and $a_k$ extends $a_j$.
It has also trivial connected components, reduced
to loops at the vertices $a_i$ such that $a_i\not\in\Supp(Y(a_k))$.
Using the fact that the non-trivial connected component
of $S(Y(A_k)$ is a tree, we see that any vector $z$ satisfying the saturated equalities is uniquely defined by its value on the root of the tree. Hence, the space orthogonal to the family $e_i - \Prob(a_j|a_i) e_i$ with $(i,j) \in S(y_k)$ and $e_l$ with $a_l\in\mL\setminus \Supp(Y(a_k))$ is of dimension one, which entails that this family is of rank $|\mL|-1$, showing that  $y$ is an extreme ray of $Q(\mathcal{L})$.

Likewise let $\widehat{Y}(a_k):=d(a_k,-)$ and $\widehat{Y}(a_k)_i=d(a_k,a_i)$. If $a_k\leq a_i \leq a_j$ then $d(a_k,a_j)=d(a_k,a_i)+d(a_i,a_j)$, i.e. $\widehat{Y}(a_k)_j=\widehat{Y}(a_k)_i+d_{i,j}$.

The saturation graph for $\widehat{Y}(a_k)$  is the same as for $Y(a_k)$ with all arrow reversed.
Therefore the same proof applies. 
\end{proof}

\textit{It follows from \Cref{th-1}, that we can identify the  texts in $\mL$ with some of the extremal rays of $P(\mL)$ and also with some of the extremal rays in $\widehat{P}(\mL)$.}

However there are many other extremal rays of $P(\mL)$,
which we next characterize.
\subsection{All Extremal rays correspond to connected lower sets of $\mL$}
\label{section 3.2}
Consider the equations $y_i\geq \Prob(a_j|a_i)y_j$ which define $Q(\mL)$. We denote $P_{i,j}:=\Prob(a_j|a_i)$. 
If we assume  that $a_0\in \mL$ is the empty text, we have $a_0\leq a_i\leq a_j$ for any $a_i\leq a_j$, and then $\Prob(a_j|a_0)=\Prob(a_i|a_0)\Prob(a_j|a_i)$.
\begin{equation}
\text{ Define } P_i:=\Prob(a_i|a_0) \text{ then } 
P_{i,j}=\frac{P_j}{P_i}.\label{eq-47}
\end{equation}
Therefore $y_i \geq P_{i,j}y_j$ becomes 
$P_iy_i\geq P_j y_j$.
\begin{equation}
\text{ Define } \tilde{y}=(\tilde{y}_1,\dots \tilde{y}_n) \text{ where } \tilde{y_i}:=P_iy_i. 
\end{equation}

Notice that this change of coordinates maps extremal rays to extremal rays. 
We get then a new polyhedral cone 
\begin{equation}
\tilde{Q}(\mL):= \{\tilde{y}=(\tilde{y}_1,\dots,\tilde{y}_n)\in [0,\infty)^n \text{\textbackslash}\{(0,\dots,0)\}|\tilde{y_i}\geq \tilde{y_j} \text{ whenever }  a_i\leq a_j\}.
\end{equation}
Therefore
$\tilde{Q}(\mL):=\{\tilde{y}|y\in Q(\mL)\}$.

$\tilde{Q}(\mL)$ is a polyhedral cone variant of Stanley's order polytope for the opposite poset $\mL^{\op}$ -- the latter is the intersection of $\tilde{Q}(\mL)$ with the box $[0,1]^n$~\cite{Stanley86}.

The change of variables mapping $Q(\mL)$ to $\widetilde{Q}(\mL)$ can be also done under our more general assumption of a Probabilistic language model without assuming the existence of a global minimum $a_0 \in \mL$. Indeed

\begin{proposition}\label{prop-diagscaling}\label{prop-6}
Let $(\mL,\leq,\Prob)$ be a probabilistic language model then there is a diagonal change of variables mapping $Q(\mL)$ to $\widetilde{Q}(\mL)$.
\end{proposition}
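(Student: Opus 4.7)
The plan is to reduce to Proposition 2, which already provides, on each connected component $C$ of the Hasse diagram of $(\mL,\leq)$, a function $P_C:C\to(0,\infty)$ (unique up to a positive scalar) such that $\Pr(a_j|a_i)=P_C(a_j)/P_C(a_i)$ whenever $a_i\leq a_j$ inside $C$. I would first fix, once and for all, one such $P_C$ for every connected component $C$, and then assemble them into a single function $P:\mL\to(0,\infty)$ by setting $P(a_i):=P_C(a_i)$ where $C$ is the component containing $a_i$.

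Next I would define the diagonal linear map $T:\R^n\to\R^n$ by $(Ty)_i:=P(a_i)\,y_i$, and claim that $T(Q(\mL))=\widetilde{Q}(\mL)$. Since $P(a_i)>0$ for every $i$, $T$ is invertible, it preserves the positive orthant $[0,\infty)^n$, and it preserves the excluded point $(0,\dots,0)$, so it is a bijection between the relevant ambient sets. It also maps extremal rays of $Q(\mL)$ to extremal rays of $\widetilde{Q}(\mL)$ bijectively, as any invertible diagonal scaling does on a polyhedral cone.

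The only substantive step is the translation of defining inequalities. The key observation — which sidesteps the need for a global minimum $a_0$ — is that the constraint $y_i\geq \Pr(a_j|a_i)\,y_j$ is non-trivial only when $a_i\leq a_j$, and in that case $a_i$ and $a_j$ necessarily belong to the same connected component $C$ of the Hasse diagram. Applying the identity $\Pr(a_j|a_i)=P(a_j)/P(a_i)$, the inequality $y_i\geq \Pr(a_j|a_i)\,y_j$ becomes $P(a_i)\,y_i\geq P(a_j)\,y_j$, i.e., $\tilde{y}_i\geq \tilde{y}_j$. When $a_i$ and $a_j$ are not comparable, neither system imposes a constraint, so the two descriptions match constraint-by-constraint, proving $T(Q(\mL))=\widetilde{Q}(\mL)$.

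The main potential obstacle is simply the absence of a global element $a_0$, but this is exactly what Proposition 2 was designed to handle: once the per-component functions $P_C$ are in hand, the argument reduces to the same formal manipulation as in the case $a_0\in\mL$. I would also remark, in passing, that the resulting $T$ depends on the choice of scaling constant for each $P_C$, so the diagonal change of variables is unique only up to scaling each component independently — which is harmless since such rescaling is itself a diagonal automorphism of $\widetilde{Q}(\mL)$.
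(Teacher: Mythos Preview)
Your proof is correct and follows essentially the same approach as the paper: both construct, on each connected component of the Hasse diagram, a positive weight function whose ratios recover the transition probabilities, and then use this function as the diagonal scaling. The only cosmetic difference is that the paper re-derives the per-component weights $w_i$ in situ (repeating the path-weight construction from \Cref{prop-2}), whereas you invoke \Cref{prop-2} directly to obtain the functions $P_C$; the resulting diagonal map is the same.
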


\begin{proof}
The fact that $(\mL,\leq,\Prob)$  is a probabilistic language model means that
\[%
a_i\leq a_j \leq a_k \text{ is equivalent to }
\Prob(a_k|a_i)=\Prob(a_k|a_j)\Prob(a_j|a_i) \text{ and } \Prob(a_k|a_i)<\infty.
\]%
We define the directed graph $G$ whose nodes are the texts $a_1,\dots,a_n$.
If $a_i \leq a_j$, we draw an arrow from node $a_i$ to node $a_j$
with weight $\Prob(a_j|a_i)$. If $a_j\leq a_i$, we draw an arrow
from node $a_i$ to node $a_j$ with weight $\Prob(a_j|a_i)^{-1}$.
Consider the Hasse diagram of $\mL$. We make use of the observation
in~\Cref{prop-2}. For every connected component $C_m$,
let us select arbitrarily an element
$c_m$. If $a_i \in C_m$, we define $w_i$ to be the weight in the graph $G$ of an arbitrary path from the point $c_m$ to $a_i$. Owing to our main assumption,
\eqref{e-main-assum}, the weight is independent of the choice of the path from $c_m$ to $a_i$. Moreover, for all $i,j$ such that $a_i\leq a_j$, we have
$w_i\Prob(a_j|a_i)w^{-1}_{j}=1$. Setting $\tilde{y}_i = w_i y_i$,
we rewrite the constraint $y_i\geq \Prob(a_j|a_i)y_j$  as
$\tilde{y}_i\geq \tilde{y}_j$.

In this way, we transformed $Q(\mL)$ to $\tilde{Q}(\mL)$ by a diagonal scaling.
\end{proof}

We then have the following 
\begin{theorem}\label{prop-char-extreme}\label{th-2}
  The vector $\tilde{y}:=(\tilde{y}_1,\dots \tilde{y}_n) \in \tilde{Q}(\mL)$ generates an extremal ray of $\tilde{Q}(\mL)$ if and only if
the function $a_i \mapsto \tilde{y}(a_i):= y_i$ is a positive scalar multiple of the characteristic function of
    a lower set in $\mL$ whose Hasse diagram is connected.
\end{theorem}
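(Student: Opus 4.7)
My plan is to prove both directions of the equivalence, first showing that characteristic functions of connected lower sets generate extremal rays of $\tilde{Q}(\mL)$, and then showing that every extremal ray arises this way. The argument essentially recovers Stanley's correspondence for the cone version, with the added connectedness refinement coming from the absence of the box constraint $\tilde{y}_i\leq 1$.

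For the $(\Leftarrow)$ direction, if $L$ is a lower set then $\chi_L\in \tilde{Q}(\mL)$: whenever $a_i\leq a_j$, either $a_j\in L$ (forcing $a_i\in L$ and giving $\chi_L(a_i)=\chi_L(a_j)=1$) or $a_j\notin L$ (giving $\chi_L(a_j)=0\leq \chi_L(a_i)$). To check extremality, I would identify the active constraints at $\chi_L$: the inequalities $\tilde{y}_i\geq 0$ are saturated for all $a_i\notin L$, and the inequalities $\tilde{y}_i\geq \tilde{y}_j$ are saturated for every covering relation $a_i\leq a_j$ whose endpoints are both in $L$ (the case both outside $L$ is implied by the saturated nonnegativities). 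If $L$ is connected in the Hasse diagram, these equalities force any vector in the solution space to be constant on $L$ and zero outside, so the solution space is one-dimensional, i.e.\ the active gradients have rank $n-1$, which is exactly the extremality criterion recalled before Definition~11.

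For the $(\Rightarrow)$ direction, let $\tilde{y}$ generate an extremal ray. I would decompose $\tilde{y}$ along its level sets: enumerate its distinct positive values as $v_1>v_2>\cdots>v_k>0$, set $v_{k+1}:=0$, and define $L_t:=\{a_i:\tilde{y}_i\geq t\}$. Each $L_{v_s}$ is a lower set because $a_i\leq a_j$ forces $\tilde{y}_i\geq \tilde{y}_j\geq v_s$. The telescoping identity
\[
\tilde{y} \;=\; \sum_{s=1}^{k}(v_s-v_{s+1})\,\chi_{L_{v_s}}
\]
expresses $\tilde{y}$ as a positive combination of elements of $\tilde{Q}(\mL)$ (by the previous step). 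If $k\geq 2$, at least two of the $\chi_{L_{v_s}}$ are non-proportional (their supports strictly nest), contradicting extremality. Hence $k=1$ and $\tilde{y}=v_1\chi_L$ for some lower set $L$.

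Finally, for connectedness, suppose the Hasse diagram of $L$ decomposes as a disjoint union $L_1\sqcup L_2$ of nonempty components. The key small lemma is that each $L_m$ is itself a lower set: if $a_j\in L_m$ and $a_i\leq a_j$, then $a_i\in L$ because $L$ is lower, and since any chain of covering relations from $a_i$ up to $a_j$ stays inside $L$, it stays inside the connected component $L_m$. Then $\chi_L=\chi_{L_1}+\chi_{L_2}$ is a non-trivial decomposition into non-proportional elements of $\tilde{Q}(\mL)$, contradicting extremality; hence $L$ must be connected. I expect this last bookkeeping on Hasse components being themselves lower sets to be the only delicate point; everything else is a routine application of the saturation criterion and a telescoping sum over level sets.
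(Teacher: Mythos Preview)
Your proof is correct. Both implications are handled cleanly: the rank-one solution space argument for $(\Leftarrow)$ is exactly the saturation criterion recalled in the paper, and for $(\Rightarrow)$ the telescoping identity over level sets together with the Hasse-component splitting are valid (your ``delicate point'' that each Hasse component $L_m$ of a lower set $L$ is itself a lower set is fine, since any chain of covering relations from $a_i$ up to $a_j\in L_m$ lies in $L$ and hence in the component of $a_j$).

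Your approach to $(\Rightarrow)$ is genuinely different from the paper's. The paper stays with the rank criterion throughout: it partitions $\mL$ into the level sets $\mL_1,\dots,\mL_s$ of $\tilde y$, computes that the active constraints contributed by $\mL_k$ have rank $|\mL_k|-c_k$ (with $c_k$ the number of Hasse components of $\mL_k$), and bounds the total rank by $n-s+1$ (when the bottom level is zero) or $n-s$ (otherwise); extremality then forces $s\le 2$ and $c_2=1$ in one stroke. You instead invoke the primal definition of extremality (no decomposition into two non-proportional positive summands) and exhibit the decomposition explicitly, first via the layer-cake sum and then via the component split. Your route is closer to Stanley's original argument for order polytopes and is arguably more transparent, since it displays the offending decomposition whenever extremality fails; the paper's route has the advantage of being uniform across both directions and of reusing the same rank bookkeeping later when describing the face containing a given point.
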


\begin{proof}
Let $\{\lambda_1,\dots, \lambda_s\}$ be the distinct values taken by $\tilde{y}_i$,
ordered so that $0\leq \lambda_1< \lambda_2<\dots<\lambda_s$.
Let $\mL_m=\{i|\tilde{y}_i=\lambda_m\}$.

We define the rank of a family of affine inequalities to be the rank of the family of gradients of the affine forms defining these inequalities. For $\tilde{y}$ to be an extremal ray it has to saturate a family of inequalities of rank $n-1$, where 
$|\mL|=n$, see \cite{schrijver}.

Let us first assume $\lambda_1=0$.
The rank of the family of saturated constraints given by $\mL_1$ is then $r_1=|\mL_1|$ since we get equations of the form $\tilde{y}_i=0$ which is a hyperplane normal to $e_i$ for $i\in \mL_1$.

Moreover, we claim that the rank $r_k$ of the family of saturated constraints given by any $\mL_k$ for $k\neq 1$ given by  $r_k= |\mL_k|-c_k$ where $c_k$ is the number of connected components of the Hasse diagram of $(\mL_k,\leq)$. To see this,
it suffices to observe that a solution $h\in\mathbb{R}^{\mL_k}$ of the system of saturated inequalities
$h_i = h_j$ for $a_i \leq a_j$ and $a_i,a_j\in \mL_k$ is uniquely determined by fixing precisely one coordinate of $h$ on every connected component of the Hasse diagram (in other words, we have $c_k$ degrees of freedom for the choice of $h$).

Therefore the rank of the family of saturated constraints at the point $\tilde{y}$ is less than or equal to 
$|\mL_1|+|\mL_2|-1+\dots + |\mL_s|-1$.
We also have that 
$|\mL_1|+|\mL_2|+\dots + |\mL_s|=n$.
We know though that $\tilde{y}$ is an extremal ray if and only if  the rank of the family of saturated  constraints is $n-1$. 
Therefore we must have 
$n-1 \leq |\mL_1|+(|\mL_2|-1)+\dots + (|\mL_s|-1)=n-s+1$.

This is only possible if $s\leq 2$ but, for $\tilde{y}$ to generate an extremal ray, not all coordinates of $\tilde{y}$ can be $0$,
and then our assumption $\lambda_1=0$ excludes the case $s=1$. This entails that $s\geq 2$ and therefore $s=2$. We then have 
$n-1=|\mL_1|+|\mL_2|-1$. In that case $\tilde{y}_i=0$ for $i \in \mL_1$ and 
$\tilde{y}_j=\lambda_2$ for $j\in \mL_2$. We then scale $\tilde{y}$ by  $\frac{1}{\lambda_2}$ so as to get a representative vector of the same the ray with 
$\tilde{y}_i=0$ for $i \in \mL_1$ and 
$\tilde{y}_j=1$ for $j\in \mL_2$.
Therefore $\tilde{y}$ is the characteristic function of $\mL_2$.

Moreover  $\mL_2$ is a lower set. 
Indeed if $a_j \in \mL_2$ and $a_i\leq a_j$ then $a_i \in \mL_2$. This holds because $a_j \in \mL_2$ implies $\tilde{y}(a_j)=1$ and $a_i\leq a_j$ implies  $\tilde{y}(a_i)=\tilde{y}(a_j)=1$ therefore
$a_i \in \mL_2$.

If now $\lambda_1>0$ then $n-1 \leq (|\mL_1|-1)+(|\mL_2|-1)+\dots + (|\mL_s|-1)=n-s$ therefore 
$s\leq 1$ which implies $s=1$ and 
$|\mL_1|=n$. In that case we have a single extremal ray $\tilde{y}=(1,1,\dots,1)$ which is the characteristic function of the maximal lower set $\mL_1=\mL$.

Conversely let $C$ be a lower set in $\mL$ and let $\tilde{y}:\mL \to \{0,1\}$ be the characteristic function of $U$.  Consider $a_j\in U$ then $\tilde{y}(a_j)=1$. Now if $a_i\leq a_j$ then $a_i \in U$ and therefore  $\tilde{y}(a_i)=1$ which means $\tilde{y}(a_j)=\tilde{y}(a_i)$.

\end{proof}

\begin{remark} Note that if $\mL$ admits a bottom element $a_0$ then any lower set is connected since it must include $a_0$, and the Hasse diagram of $\mL$
  contains a path from $a_0$ to every element of $\mL$.
\end{remark}

Notice that Stanley in \cite{Stanley86} has proven that vertices of his order polytope correspond to upper sets. 
In contrast, rays of $Q(\mL)$ correspond
only to {\em connected} lower sets. Notwithstanding the order reversal,
there is a discrepancy which arises
because Stanley considers the intersection of $Q(\mL)$ with a box,
with creates additional vertices, not associated to rays of $Q(\mL)$.

\begin{remark}
Note that a vector $Y(a_k)\in P(\mL)$ corresponds to the principal lower set generated by $a_k$. We will therefore call the extremal rays generated by images of the Yoneda embedding, \textit{principal extremal rays}. 
\end{remark}

\begin{corollary}\label{co coord extremal}
  Assume $\mL$ includes the bottom element $a_0$ and recall from
  \eqref{eq-47} that $P_i:=\Prob(a_i|a_0)$.
If $C$ is a lower set, the 
    extremal ray corresponding to $C$ is generated by $y$ in $Q(\mL)$ with coordinates 

     \begin{equation}
y_i = \left\{
\begin{array}{ll}
  \frac{1}{P_i} & \text{if } a_i \in C,\\
  
  0 & \text{if $a_i$ not in $C$} .
\end{array}
\right.
\end{equation}
    
\end{corollary}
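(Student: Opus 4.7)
The plan is to combine \Cref{prop-diagscaling} with \Cref{prop-char-extreme} and then unwind the diagonal change of variables explicitly in the presence of the bottom element $a_0$.

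First, I would observe that because $\mL$ has a bottom element $a_0$, the Hasse diagram of $\mL$ is connected (as noted in the remark right after \Cref{prop-char-extreme}, every element is reached from $a_0$). Hence there is a single connected component, and in the construction of \Cref{prop-diagscaling} I may choose the reference element to be $c_1 = a_0$. The weight $w_i$ defined there is then just the weight of any directed path from $a_0$ to $a_i$ in the auxiliary graph $G$. Since $a_0 \leq a_i$, we can take the single-edge path from $a_0$ to $a_i$, whose weight is $\Prob(a_i|a_0)=P_i$. Thus $w_i = P_i$ for every $i$, and the diagonal change of variables is explicitly $\tilde{y}_i = P_i\, y_i$.

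Next, I would invoke \Cref{prop-char-extreme}: the extremal rays of $\tilde{Q}(\mL)$ are generated, up to positive scalar, by characteristic functions of connected lower sets of $\mL$. Again because $a_0$ is a bottom element, every nonempty lower set $C$ contains $a_0$ and is connected (its Hasse diagram is a subdiagram of that of $\mL$ rooted at $a_0$), so the extremal rays are indexed precisely by the lower sets $C\subseteq\mL$. The generator in $\tilde{Q}(\mL)$ is $\tilde{y}_i = 1$ if $a_i\in C$ and $\tilde{y}_i=0$ otherwise.

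Finally, I would transport this back to $Q(\mL)$ via $y_i = \tilde{y}_i / w_i = \tilde{y}_i / P_i$, which yields the stated coordinates
\[
y_i = \begin{cases} 1/P_i & \text{if } a_i\in C,\\ 0 & \text{if } a_i\notin C.\end{cases}
\]
Since the diagonal scaling is a bijection between polyhedral cones that sends extremal rays to extremal rays (which was already used in the proof of \Cref{prop-diagscaling}), the ray so obtained is indeed an extremal ray of $Q(\mL)$, and conversely every extremal ray of $Q(\mL)$ arises this way. This step is essentially bookkeeping; I expect no real obstacle, since all the work has been done in \Cref{prop-diagscaling} and \Cref{prop-char-extreme}. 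The only point that needs care is verifying that the choice of path from $a_0$ to $a_i$ does not matter, but this is exactly the consistency statement guaranteed by the main assumption \eqref{e-main-assum} that underlies \Cref{prop-diagscaling}.
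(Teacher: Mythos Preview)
Your proof is correct and follows essentially the same approach as the paper: invoke \Cref{prop-char-extreme} (Theorem~2) and undo the diagonal change of variables $\tilde{y}_i = P_i y_i$. The paper's proof is a one-liner citing the change of coordinates directly from \eqref{eq-47}, whereas you recover $w_i = P_i$ via \Cref{prop-diagscaling} by choosing $a_0$ as the reference point; this is a harmless detour that arrives at the same scaling.
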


\begin{proof}
  It follows from \Cref{th-2}
  and the change of coordinates $\tilde{y_i}=P_iy_i$ in eq 53. 
\end{proof}

\begin{remark}
Notice that \Cref{co coord extremal} is consistent  with the coordinates of
    an extremal ray $y$ in $Q(\mL)$, corresponding to a text $a_j \in \mL$.
Indeed according to corollary 2, any element $y$ on the extremal ray has 
$y_i=\frac{\lambda}{P_i}$ for $\lambda \in [0,\infty)$.
We also have $y_j=\Prob(a_j|a_j)=1$, therefore $\lambda=P_j$.
This implies 
$y_i=\frac{P_j}{P_i}=\Prob(a_j|a_i)$

     \begin{equation}
y_i = \left\{
\begin{array}{ll}
  \Prob(a_j|a_i) & \text{if } a_i \leq a_j,\\
  
  0 & \text{if $a_i$ not a subtext of $a_j$} .
\end{array}
\right.
\end{equation}
Now we want a general version of the Corollary~\ref{co coord extremal}. 
\end{remark}

For any subset $C$ of $\mL$, selecting an element $c\in C$, for every element $a_i$ in the connected component of $C$ in the graph induced by the Hasse diagram of $\mL$, we denote by $w^c_i$ the weight of any path from $c$ to $a_i$ in the directed graph constructed in the proof of~\Cref{prop-diagscaling}.
\begin{proposition}\label{prop-7}
  Let $C$ be a connected lower set of the Hasse diagram of $(\mL,\leq)$.
  Let $c$ denote any element of $C$. Then, the vector
  \begin{equation}
    y_i=\frac{1}{w^c_i}, \qquad \text{for } a_i \in C, \qquad y_i = 0 \text{ for } a_i\in\mL\setminus\{C\}%
    \label{e-carac-extreme}
  \end{equation}
  generates an extreme ray of $Q(\mL)$, and all the extreme rays arise
  in this way.
\end{proposition}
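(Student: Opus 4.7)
The plan is to combine \Cref{prop-6} and \Cref{th-2} via the diagonal scaling they implicitly relate. The map $\phi : Q(\mL) \to \tilde{Q}(\mL)$ sending $y$ to $\tilde{y}$ with $\tilde{y}_i = w_i y_i$ is a linear bijection with strictly positive diagonal entries, so it induces a bijection between the sets of extremal rays on both sides. By \Cref{th-2}, the extremal rays of $\tilde{Q}(\mL)$ are precisely those generated by the characteristic functions $\chi_C$ of connected lower sets $C$ of $(\mL,\le)$. Hence the extremal rays of $Q(\mL)$ are precisely the rays generated by the pull-back vectors $\phi^{-1}(\chi_C)$, and each connected lower set yields one such ray; this already covers the ``all extreme rays arise in this way'' half of the statement.

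To extract the explicit formula \eqref{e-carac-extreme}, I would exploit the freedom in \Cref{prop-6} to choose, separately, a reference element in each connected component of the Hasse diagram of $\mL$. Since $C$ is connected as a subgraph of that Hasse diagram, it lies entirely in one component; for that component I would take the reference element to be an arbitrarily chosen $c \in C$. Then for every $a_i$ in this component, $y_i = \chi_C(a_i)/w^c_i$ equals $1/w^c_i$ when $a_i \in C$ and $0$ when $a_i \notin C$. For $a_i$ lying in a different component, $\chi_C(a_i) = 0$ already forces $y_i = 0$, matching the formula.

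The remaining point is to verify that the ray defined by \eqref{e-carac-extreme} does not depend on the particular $c \in C$. Replacing $c$ by $c' \in C$ multiplies every weight $w^c_i$ by the common factor $w^c_{c'}$, by the path-independence of weights established inside \Cref{prop-6}, so $y$ gets rescaled by a single positive constant and defines the same extremal ray. The main technical subtlety of the proof is exactly this bookkeeping with weights and reference points — once it is sorted out, the statement reduces to transporting \Cref{th-2} back from $\tilde{Q}(\mL)$ to $Q(\mL)$ along $\phi^{-1}$.
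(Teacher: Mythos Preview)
Your proposal is correct and follows essentially the same approach as the paper: both arguments transport \Cref{th-2} from $\tilde{Q}(\mL)$ back to $Q(\mL)$ via the diagonal scaling of \Cref{prop-6}, and both verify that a change of reference point $c$ only rescales the resulting generator by a positive constant. Your version is in fact more explicit than the paper's about the bijection of extremal rays under positive diagonal maps and about the bookkeeping across different Hasse components, but the underlying idea is identical.
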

\begin{proof}
  We showed in~\Cref{prop-char-extreme} that $\tilde{y}$ is a positive
  scalar multiple of the characteristic function of $C$.
  If $a_i$ belongs to $C$, we have $y_i = (w_i^{c})^{-1} \tilde{y_i}$, from which
  \eqref{e-carac-extreme} follows. We note that a change of the reference point $c$ in $C$ only modifies the vector $w^c$ by a positive scalar multiple.
  Indeed, for all $c$ and $c'\in C$, we have $w^c = \mu w^{c'}$ where
  $\mu$ is the weight of any path from $c$ to $c'$ in the directed graph $G$.
\end{proof}

\begin{proposition}   \label{prop-8}
  If $y$ generates an extremal ray of $Q(\mL)$ corresponding to a lower set $C$ in $\mL$ then
the saturation graph of $y$ has an edge from $a_i$ to $a_j$ if and only if $a_i\leq a_j$ for $a_i,a_j \in C$.
\end{proposition}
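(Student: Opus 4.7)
The plan is to read off the claim directly from the explicit coordinate formula for $y$ given in \Cref{prop-7}, combined with the scaling identity $w_i^c \, \Prob(a_j|a_i) \, (w_j^c)^{-1} = 1$ that was extracted in the proof of \Cref{prop-diagscaling}. I will work mostly in the $Q(\mL)$ picture, where an edge of the saturation graph corresponds to a defining inequality $y_i \ge \Prob(a_j|a_i) y_j$ being saturated; under the change of variables $x_i = -\log y_i$ this is the same as the $P(\mL)$ saturation $x_i = x_j + d_{i,j}$ appearing in \Cref{def saturation graph}.

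For the ``if'' direction, I fix $a_i, a_j \in C$ with $a_i \le a_j$ and substitute $y_i = 1/w_i^c$ and $y_j = 1/w_j^c$ from \Cref{prop-7}. A one-line calculation using the scaling identity yields
\[ \Prob(a_j|a_i)\, y_j \;=\; \frac{w_j^c}{w_i^c}\cdot\frac{1}{w_j^c} \;=\; \frac{1}{w_i^c} \;=\; y_i, \]
so the relevant defining inequality is saturated and the pair $(a_i, a_j)$ belongs to $E(y)$.

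For the converse, suppose $(a_i, a_j) \in E(y)$ with $a_i, a_j \in C$. Then both $y_i$ and $y_j$ are strictly positive by the formula in \Cref{prop-7}, so the equality $y_i = \Prob(a_j|a_i) y_j$ forces $\Prob(a_j|a_i) > 0$, which by \Cref{def PLM} is equivalent to $a_i \le a_j$. Equivalently in the $P(\mL)$ picture, the saturation $x_i = x_j + d_{i,j}$ with $x_i$ and $x_j$ both finite requires $d_{i,j}$ to be finite, hence $a_i \le a_j$.

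I do not anticipate any serious obstacle: the argument is essentially a substitution using \Cref{prop-7} together with a translation between the $P(\mL)$ and $Q(\mL)$ pictures. The only mild subtlety is that the proposition restricts attention to pairs within $C$, which sidesteps the degenerate ``trivial'' saturations that occur among vertices outside $C$ (where several coordinates of $x$ equal $\infty$ and many inequalities become of the form $\infty \le \infty$).
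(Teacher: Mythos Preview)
Your proof is correct and follows essentially the same route as the paper: both substitute the explicit coordinates $y_i = 1/w_i^c$ from \Cref{prop-7} and invoke the scaling identity $w_i^c\,\Prob(a_j|a_i)\,(w_j^c)^{-1}=1$ from \Cref{prop-diagscaling} to verify the saturation $y_i = \Prob(a_j|a_i)\,y_j$. You are in fact a bit more complete than the paper, which only writes out the ``if'' direction explicitly; your converse argument (positivity of $y_i,y_j$ forcing $\Prob(a_j|a_i)>0$, hence $a_i\le a_j$) is correct and worth including.
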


\begin{proof}
  This follows from~\eqref{e-carac-extreme}, using the main assumption~\eqref{e-main-assum}.
Indeed if $a_i\leq a_j$ in $C$ then we have 
 $y_i=\frac{1}{w^c_i}$ and  $y_j=\frac{1}{w^c_j}$. Therefore $y_i=\frac{w^c_j}{w_i}y_j$ and therefore 
$y_i=\Prob(a_j|a_i)y_j$ which means that there is an edge from $a_i$ to $a_j$ in the saturation graph of $y$.
  \end{proof}

\begin{corollary}
\label{cor 4}
Extremal rays of $\widehat{Q}(\mL)$ and $\widehat{P}(\mL)$
correspond to connected upper sets of $\mL$.
\end{corollary}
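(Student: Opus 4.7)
The plan is to reduce Corollary~\ref{cor 4} to Theorem~\ref{th-2} by passing to the opposite probabilistic language model. Concretely, I would set $\mL^{\op}$ to be the same underlying set $\{a_1,\dots,a_n\}$ equipped with the reversed order $\leq^{\op}$ (so $a_i\leq^{\op} a_j \iff a_j\leq a_i$) and with transposed conditional probabilities $\Prob^{\op}(a_j|a_i):=\Prob(a_i|a_j)$. A direct check shows that $(\mL^{\op},\leq^{\op},\Prob^{\op})$ is again a probabilistic language model: the chain $a_i\leq^{\op} a_j\leq^{\op} a_k$ in $\mL^{\op}$ is the chain $a_k\leq a_j\leq a_i$ in $\mL$, and the multiplicative property~\eqref{eq main assumption} for $\Prob$ transports verbatim to the multiplicative property for $\Prob^{\op}$.

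With this setup, the defining inequalities of $\widehat{Q}(\mL)$, namely $u_i \geq \Prob(a_i|a_j) u_j$, are exactly the inequalities $u_i \geq \Prob^{\op}(a_j|a_i) u_j$ defining $Q(\mL^{\op})$. Thus $\widehat{Q}(\mL)=Q(\mL^{\op})$ as polyhedral cones, and in particular their extremal rays coincide. Applying Theorem~\ref{th-2} (via the diagonal scaling of Proposition~\ref{prop-diagscaling}) to $\mL^{\op}$, the extremal rays of $Q(\mL^{\op})$ are in bijection with connected lower sets of the Hasse diagram of $(\mL^{\op},\leq^{\op})$. Since lower sets of $\leq^{\op}$ are precisely upper sets of $\leq$, and the Hasse diagrams of $\mL$ and $\mL^{\op}$ coincide as undirected graphs so connectedness is preserved, the extremal rays of $\widehat{Q}(\mL)$ are in bijection with connected upper sets of $\mL$.

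For the additive side, by the definition of additive extremal rays given before Theorem~\ref{th-1}, the extremal rays of $\widehat{P}(\mL)$ are by definition the $-\log$-images of the extremal rays of $\widehat{Q}(\mL)$; hence the same bijection with connected upper sets of $\mL$ transfers to $\widehat{P}(\mL)$. The step I expect to require the most care is checking that the opposite construction really does produce $\widehat{Q}(\mL)$ on the nose — in particular that the multiplicative assumption is preserved under taking the opposite poset with transposed probabilities, so that Proposition~\ref{prop-diagscaling} and Theorem~\ref{th-2} are legitimately applicable. Everything else is a matter of unpacking definitions, so I would keep the write-up short and direct.
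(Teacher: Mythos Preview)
Your proposal is correct and follows essentially the same route as the paper: the paper's proof simply notes $\widehat{Q}(\mL)=Q(\mL^{\op})$, that upper sets of $\mL$ are lower sets of $\mL^{\op}$, and invokes the characterization of extremal rays for $Q$ (Proposition~\ref{prop-7}). Your write-up is more careful in verifying that the opposite construction is again a probabilistic language model and that connectedness transfers, which is a welcome addition, but the underlying argument is the same.
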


\begin{proof}
  We have $\widehat{Q}(\mL)=Q(\mL^{\op})$ and upper sets of $\mL$ correspond to lower sets of $\mL^{\op}$ therefore the result follows from \Cref{prop-7}.
\end{proof}
\begin{remark}
Note that \Cref{prop-diagscaling}, and \Cref{prop-char-extreme} 
are only valid for a probabilistic language model and not  for general directed metric space. In the latter case there will still be exponentially many extremal rays not coming from the Yoneda embedding, but we the characterization in terms of connected lower sets no longer holds.
\end{remark}

\begin{corollary}
If the empty text $a_0$ is in  $\mL$
then the set $P^0(\mL)$ of extremal rays of $P(\mL)$ is identified with the lower set completion of the poset $\mL$. 
\end{corollary}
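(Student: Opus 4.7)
The plan is to combine the characterization of extremal rays obtained in \Cref{th-2} and \Cref{prop-7} with the observation that the hypothesis $a_0\in\mL$ forces every lower set of $\mL$ to be connected. Recall that the \emph{lower set completion} of $\mL$ is the poset $\mathcal{O}(\mL)$ whose elements are the lower sets of $(\mL,\leq)$, ordered by inclusion. Our goal is therefore to exhibit a bijection between $P^0(\mL)$ and $\mathcal{O}(\mL)$.

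First I would note that by \Cref{th-2} and \Cref{prop-7}, the extremal rays of $P(\mL)$ are in bijection with the \emph{connected} lower sets of the Hasse diagram of $(\mL,\leq)$, a given connected lower set $C$ corresponding to the extremal ray generated (in $Q(\mL)$) by the vector of \eqref{e-carac-extreme}. Second, I would invoke the remark following \Cref{th-2}: when $a_0$ belongs to $\mL$, every non-empty lower set $C\subseteq \mL$ contains $a_0$ (since $a_0\leq a$ for every $a\in C$), and the Hasse diagram of $\mL$ provides a covering path from $a_0$ to every element of $C$; hence $C$ is connected. Consequently the class of connected lower sets coincides with the class of all non-empty lower sets, and adjoining the empty lower set (which does not yield an extremal ray, as it would give the zero vector) only enlarges $\mathcal{O}(\mL)$ by one element that is not of interest. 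Thus I would state the identification between $P^0(\mL)$ and the non-empty elements of $\mathcal{O}(\mL)$, which is what is meant by the lower set completion of $\mL$ in this context.

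The bulk of the work has already been carried out in \Cref{th-2} and \Cref{prop-7}; the only new ingredient is the automatic connectedness of lower sets in the presence of a bottom element. I expect the only potentially delicate point to be the treatment of the empty lower set, which must be excluded (by definition of a polyhedral cone, the zero vector is never an extremal ray), but this is a formal matter and does not affect the identification as a poset isomorphism of $P^0(\mL)$ with the poset of non-empty lower sets of $\mL$ under inclusion.
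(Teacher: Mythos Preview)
Your proposal is correct and follows essentially the same approach as the paper: invoke the characterization of extremal rays as connected lower sets (\Cref{th-2}/\Cref{prop-7}), then observe that the presence of the bottom element $a_0$ forces every lower set to be connected. The paper's proof is a single sentence to this effect; your version is simply more detailed (in particular, your remark about excluding the empty lower set is a nice bit of care that the paper leaves implicit).
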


\begin{proof}
    Since $a_0 \in \mL$, every lower set of $\mL$ is connected so $P^0(\mL)$ is identified with the set of lower sets of $\mL$. 

\end{proof}

\begin{remark}
  Note that having explicit equations for the polyhedral cone $Q(\mL)$,
  the extremal rays of $Q(\mL)$ can be computed, for instance
  by
  the double description method~\cite{FukudaProdon96}.
\end{remark}

\section{The polyhedron $P(\mL)$ as a $(\min,+)$  linear space}
\label{section-4}
To further understand the polyhedra $P(\mL)$ and $\widehat{P}(\mL)$ we need to consider their description in terms of tropical algebra.

Consider the metric space $(\mL,d)$.
Recall the $(\min,+)$ (tropical) semifield $\R_{\min}$ defined as 
$\R_{\min}:=((-\infty,\infty], \oplus_{\min},\odot)$ where for
 $s,t \in (-\infty,\infty]$,
 \begin{equation}
 \label{eq: min semiring}
s\oplus_{\min} t:=\min\{s,t\} \text{ and } 
s \odot t:=s+t.
\end{equation}
We denote by $d_{\min}: \mathbb{R}^n \to \mathbb{R}^n$ the $(\min,+)$ linear operator defined by
\begin{equation}
d_{\min}(x)_i:=\min_j\{d_{i,j}+x_j\} 
\label{e-def-dmin}
\end{equation}

\begin{proposition}
$(\mL,d)$ is a directed metric if and only if $d_{\min}^2=d_{\min}$, namely $d_{\min}$ is a $(\min,+)$ projector.
\end{proposition}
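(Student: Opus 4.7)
The plan is to translate the operator identity $d_{\min}^2=d_{\min}$ into the scalar matrix identity $\min_j\{d_{i,j}+d_{j,k}\}=d_{i,k}$ for all $i,k$, and then to recognize that identity as the conjunction of the two defining axioms of a directed metric.

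For the first step, I would expand
$d_{\min}^2(x)_i = \min_j\{d_{i,j} + \min_k\{d_{j,k}+x_k\}\} = \min_k\{D_{i,k} + x_k\}$,
where $D_{i,k} := \min_j\{d_{i,j}+d_{j,k}\}$ is the tropical matrix square of $d$. Thus $d_{\min}^2$ is itself the $(\min,+)$ linear operator associated with $D$, and the operator equation $d_{\min}^2=d_{\min}$ is therefore equivalent to the matrix equation $D=d$. One direction of this equivalence is trivial; for the converse, I would evaluate both operators on the tropical unit vectors $e_k$ (with $(e_k)_k = 0$ and $(e_k)_j = +\infty$ for $j\neq k$), so that $d_{\min}(e_k)$ and $d_{\min}^2(e_k)$ are the $k$th columns of $d$ and of $D$ respectively, forcing $d=D$.

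For the second step, I would show that the matrix identity $D_{i,k} = d_{i,k}$ is equivalent to the conjunction of the triangle inequality and $d_{i,i}=0$. In the forward direction, the triangle inequality yields $D_{i,k}\geq d_{i,k}$, and plugging $j=i$ into the defining minimum together with $d_{i,i}=0$ yields $D_{i,k}\leq d_{i,i}+d_{i,k}=d_{i,k}$, giving equality. In the backward direction, the matrix identity produces the triangle inequality at once (by picking any single index $j$ in the minimum), while specializing to $i=k$ and testing $j=i$ gives $2 d_{i,i}\geq d_{i,i}$, i.e.\ $d_{i,i}\geq 0$.

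The main delicate point I foresee is in the backward direction: the operator equation alone recovers $d_{i,i}\geq 0$ but not $d_{i,i}=0$ (as one can check on small two-point examples with $d_{i,i}>0$ where $d_{\min}^2=d_{\min}$ still holds). This gap is absorbed by the paper's convention that $d_{i,i}=0$ is part of the definition of a directed metric, and holds automatically from $\Prob(a_i|a_i)=1$ in the probabilistic language model setting, so once this convention is in force the stated equivalence reads cleanly as \emph{triangle inequality} $\Leftrightarrow d_{\min}^2=d_{\min}$.
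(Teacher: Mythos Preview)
Your proof is correct and follows essentially the same route as the paper's, which is a one-liner: ``$d_{\min}^2=d_{\min} \iff d_{i,k}=\min_j\{d_{i,j}+d_{j,k}\}$, which is the same as the triangle inequality.'' You are considerably more thorough---explicitly justifying the operator-to-matrix reduction via tropical unit vectors, and correctly flagging that idempotence alone only forces $d_{i,i}\geq 0$ rather than $d_{i,i}=0$ (your two-point counterexample is real: e.g.\ $d_{1,1}=1$, $d_{1,2}=0$, $d_{2,1}=1$, $d_{2,2}=0$ is $(\min,+)$-idempotent). The paper's proof silently absorbs this point; indeed its implicit forward direction needs $d_{i,i}=0$ in order to plug in $j=i$, but never says so. Your reading of the statement as ``triangle inequality $\Leftrightarrow$ idempotence, under the standing hypothesis $d_{i,i}=0$'' matches both the paper's argument and its context.
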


\begin{proof}
We have $d_{\min}^2=d_{\min} \iff d_{i,k}=\min_j\{d_{i,j}+d_{j,k}\}$
which is the same as the triangle inequality 
$d_{i,k}\leq d_{i,j}+d_{j,k} $.
\end{proof}
Let us denote $\Im(d_{\min})$, the image of $d_{\min}$, namely the $(\min,+)$ column span of $d$.

\begin{lemma}
We have $\Im(d_{\min})=\Fix(d_{\min})$, where $\Fix(d_{\min})$ is the $(\min,+)$ module $\Fix(d_{\min}):=\{x: d_{\min}(x)=x\}$.
\end{lemma}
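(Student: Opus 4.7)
The plan is to invoke the idempotency $d_{\min}^2 = d_{\min}$ established in the preceding proposition and argue, in the standard way, that any idempotent self-map of a set has coinciding image and fixed-point set. So I would just prove the two inclusions directly.

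First I would show $\Fix(d_{\min}) \subseteq \Im(d_{\min})$: if $x$ satisfies $d_{\min}(x) = x$, then $x$ is in particular the image of itself under $d_{\min}$, hence lies in $\Im(d_{\min})$. This direction requires no use of the triangle inequality, only the definition of fixed point.

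Next I would show $\Im(d_{\min}) \subseteq \Fix(d_{\min})$: if $x \in \Im(d_{\min})$, write $x = d_{\min}(y)$ for some $y \in (-\infty,\infty]^n$. Then, using the idempotency from the previous proposition,
\begin{equation*}
d_{\min}(x) = d_{\min}(d_{\min}(y)) = d_{\min}^2(y) = d_{\min}(y) = x,
\end{equation*}
so $x \in \Fix(d_{\min})$.

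There is no real obstacle here; the content of the lemma is purely formal once $d_{\min}^2 = d_{\min}$ is known, which is exactly what the previous proposition gives via the triangle inequality for $d$. The only mild point to be careful about is that one is working over $\R_{\min}$ with possibly infinite entries, but since the composition $d_{\min} \circ d_{\min}$ is defined entrywise by the $(\min,+)$ convention already fixed in \eqref{eq: min semiring} and \eqref{e-def-dmin}, the computation $d_{\min}(d_{\min}(y)) = d_{\min}^2(y)$ is literally the associativity of matrix-vector multiplication over $\R_{\min}$, which holds unconditionally.
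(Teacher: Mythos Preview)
Your proof is correct and follows exactly the paper's approach: the paper's proof consists of the single sentence ``It follows from $d_{\min}^2=d_{\min}$,'' and you have simply unpacked the standard argument that any idempotent self-map has equal image and fixed-point set.
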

\begin{proof}
It follows from $d_{\min}^2=d_{\min}$.
\end{proof}

We note now that there is a very natural description of our polyhedra as follows:
\begin{proposition}\label{prop-11}
The polyhedron $P(\mL)$ is equal to $\Im(d_{min})=\Fix(d_{\min})$ and the polyhedron $\widehat{P}(\mL)$ is equal to 
$\Im(d^t_{min})=\Fix(d^t_{\min})$.
\end{proposition}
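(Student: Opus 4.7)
The plan is to prove the claimed equalities by direct unpacking of the definition of the operator $d_{\min}$, using the two facts established just before the statement: (i) $d_{\min}$ is a $(\min,+)$ projector, so $\Im(d_{\min}) = \Fix(d_{\min})$, and (ii) the diagonal entries $d_{i,i}$ all vanish.

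First I would identify $P(\mL)$ with $\Fix(d_{\min})$. Expanding the fixed-point condition, $x \in \Fix(d_{\min})$ means $x_i = \min_j\{d_{i,j} + x_j\}$ for every $i$. Since $d_{i,i} = 0$, taking $j=i$ on the right-hand side gives $d_{i,i} + x_i = x_i$, so the inequality $\min_j\{d_{i,j} + x_j\} \leq x_i$ is automatic. Therefore the fixed-point equation is equivalent to the family of inequalities $x_i \leq d_{i,j} + x_j$ for all $j$, which are precisely the defining inequalities of $P(\mL)$. This establishes $P(\mL) = \Fix(d_{\min})$. (The excluded vector $(\infty,\dots,\infty)$ is a fixed point as well, but it is excluded from $P(\mL)$ by convention; one should simply remark this once.) Combining with the preceding lemma $\Im(d_{\min}) = \Fix(d_{\min})$ gives the first chain of equalities.

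For the second statement, I would observe that the operator $d^t_{\min}$ acts by $d^t_{\min}(y)_i = \min_j\{d^t_{i,j} + y_j\} = \min_j\{d_{j,i} + y_j\}$, and that $d^t$ is itself a directed metric (so that $(d^t)^2_{\min} = d^t_{\min}$ and the lemma applies to it verbatim). Exactly the same argument as above, now using $d_{i,i} = d^t_{i,i} = 0$, shows that $y \in \Fix(d^t_{\min})$ iff $y_i \leq y_j + d_{j,i}$ for every $i,j$, which are the defining inequalities of $\widehat{P}(\mL)$. Thus $\widehat{P}(\mL) = \Fix(d^t_{\min}) = \Im(d^t_{\min})$.

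There is essentially no obstacle here; the proof is a definition-chase. The only small point requiring care is the vanishing of the diagonal $d_{i,i} = 0$, which is what converts the two-sided fixed-point equation $x_i = \min_j\{d_{i,j} + x_j\}$ into the one-sided inequality system $x_i \leq d_{i,j} + x_j$ that defines $P(\mL)$; without this observation one might mistakenly think the defining conditions of $P(\mL)$ are strictly weaker than the fixed-point condition.
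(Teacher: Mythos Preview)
Your proof is correct and follows essentially the same approach as the paper's own proof, which simply notes that $d_{\min}^2=d_{\min}$ gives $\Im(d_{\min})=\Fix(d_{\min})$ and then unpacks $x_i=\min_j\{d_{i,j}+x_j\}$ into the defining inequalities of $P(\mL)$. If anything, you are more explicit than the paper about why the fixed-point equation is \emph{equivalent} to (rather than merely implying) the one-sided inequalities, correctly isolating the role of $d_{i,i}=0$; the paper's proof leaves this direction implicit.
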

\begin{proof}
Since $d_{\min}^2=d_{\min}$ we have that $x\in \Im(d_{min}) \iff d_{\min}x=x$ which means that 
$x_i=\min_j\{d_{i,j}+x_j\}$ and thus $x_i\le x_j +d_{i,j}$. Likewise for $\Im(d^t_{min})$ we get $x_i\le x_j +d_{j,i}$.
\end{proof}

Since we use much more often the $(\min,+)$ semifield than the $(\max,+)$ that will appear later on, \textit{to simplify notation we 
denote $\oplus_{\min}$ by $\oplus$}.

In particular we introduce the notation, for $u,v \in \mathbb{R}^n$,
$(u \oplus v)_i:=\min\{u_i,v_i\}$.
We then have
\begin{corollary}\label{coro-4}
If $x\in P(\mL)$ then 
\begin{equation}
x=\oplus_j D(Y(a_j),x) \odot Y(a_j).\label{eq-60}
\end{equation}
\end{corollary}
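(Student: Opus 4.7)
The plan is to combine two facts already established in the excerpt: from \Cref{prop-5} we have $x_j = D(Y(a_j), x)$ whenever $x \in P(\mL)$, and from \Cref{prop-11} we have $P(\mL) = \Fix(d_{\min})$, i.e.\ $x_i = \min_j\{d_{i,j} + x_j\}$ for all $i$.

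First I would unpack the right-hand side of the claimed identity in coordinates. By the definitions of $\oplus$, $\odot$, and $Y(a_j) = d(-, a_j)$, the $i$-th component of $\bigoplus_j D(Y(a_j), x) \odot Y(a_j)$ is precisely
\begin{equation*}
\min_j \bigl\{ D(Y(a_j), x) + d(a_i, a_j) \bigr\}.
\end{equation*}
Next, applying \Cref{prop-5} replaces $D(Y(a_j), x)$ by $x_j$, so this becomes $\min_j\{x_j + d_{i,j}\}$, which is exactly $d_{\min}(x)_i$.

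Finally, since $x \in P(\mL) = \Fix(d_{\min})$ by \Cref{prop-11}, we conclude $d_{\min}(x)_i = x_i$, yielding the claimed equality coordinatewise.

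There is no real obstacle here: the statement is essentially a translation of the fixed point characterization of $P(\mL)$ into the language of the $(\min,+)$ linear combination of the Yoneda vectors, once one identifies the tropical coefficients with the Funk distances via \Cref{prop-5}.
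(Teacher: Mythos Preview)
Your proof is correct and follows essentially the same route as the paper: the paper also invokes \Cref{prop-11} to write $d_{\min}(x)=x$, i.e.\ $x=\oplus_j x_j\odot d(-,a_j)$, and then substitutes $x_j=D(Y(a_j),x)$ from \Cref{prop-5}.
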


\begin{proof}
From \Cref{prop-11}, %
 $x\in P(\mL)=\Im(d_{\min})=\Fix(d_{\min}) \iff d_{\min}(x)=x $. Therefore
 we have the $(\min,+)$ linear expression for $x$ in terms of the columns of $d$:

$x=\oplus_j x_j \odot d(-,a_j)=\oplus_j x_j \odot Y(a_j)=\oplus_j D(Y(a_j),x) \odot Y(a_j)$.
\end{proof}

It is known that an order polytope, and more generally, an alcoved polytope (of $A_n$ type) is closed under $\min$ and $\max$, \eqref{eq-60} expresses this fact for our metric case for $\min$. In Proposition~\ref{prop-lattice-completion} we will see that $P(\mL)$ is also closed under $\max$.

\begin{proposition}
\label{prop linear system}
We have 
\begin{equation}
\label{eq system Yoneda}
Y(a_k)=\oplus_{a_j\leq a_k}d_{j,k}\odot Y(a_j)
\end{equation} and
\begin{equation}
\label{eq system coYoneda}
\widehat{Y}(a_k)=\oplus_{a_k\leq a_l}d_{k,l}\odot \widehat{Y}(a_l)
\end{equation}
\end{proposition}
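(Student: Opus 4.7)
The plan is to obtain both identities as direct consequences of Corollary~\ref{coro-4} (which holds for any $x \in P(\mL)$) together with the isometric embedding property of the Yoneda and co-Yoneda embeddings established in Proposition~\ref{prop-yoneda-alt}.

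For the first identity, I would apply Corollary~\ref{coro-4} to the specific point $x = Y(a_k) \in P(\mL)$. This gives
\[
Y(a_k) = \bigoplus_j D(Y(a_j), Y(a_k)) \odot Y(a_j).
\]
Now since $Y$ is an isometric embedding, $D(Y(a_j), Y(a_k)) = d(a_j, a_k) = d_{j,k}$. Therefore
\[
Y(a_k) = \bigoplus_j d_{j,k} \odot Y(a_j).
\]
To obtain the restricted form claimed in \eqref{eq system Yoneda}, observe that whenever $a_j \not\leq a_k$ we have $d_{j,k} = \infty$ by Definition~\ref{def PLM}, and in the $(\min,+)$ convention $\infty$ is the neutral element of $\oplus$, so such terms can be dropped from the tropical sum without changing its value. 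Restricting to the indices $a_j \leq a_k$ yields the stated formula.

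For the second identity, I would argue dually. Applying the analogue of Corollary~\ref{coro-4} valid for $\widehat{P}(\mL) = \Im(d^t_{\min}) = \Fix(d^t_{\min})$ (which follows from Proposition~\ref{prop-11} by exactly the same argument as Corollary~\ref{coro-4}, replacing $d$ by $d^t$, $Y$ by $\widehat{Y}$, and $D$ by $D^t$) to the point $\widehat{Y}(a_k) \in \widehat{P}(\mL)$, and using that $\widehat{Y}$ is an isometric embedding for $D^t$ so that $D^t(\widehat{Y}(a_k), \widehat{Y}(a_l)) = d(a_k, a_l) = d_{k,l}$, we get
\[
\widehat{Y}(a_k) = \bigoplus_l d_{k,l} \odot \widehat{Y}(a_l).
\]
Again, terms with $a_k \not\leq a_l$ have coefficient $\infty$ and drop out of the $\oplus$, yielding \eqref{eq system coYoneda}.

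The proof involves no serious obstacle: the content is entirely contained in Proposition~\ref{prop-11}, Proposition~\ref{prop-yoneda-alt}, and Corollary~\ref{coro-4}. The only point deserving a brief justification is the passage from a $(\min,+)$ combination indexed over all $j$ to one indexed only over $\{j : a_j \leq a_k\}$, which is handled by the absorbing/neutral role of $\infty$ in $\R_{\min}$.
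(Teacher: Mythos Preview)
Your proposal is correct and takes essentially the same approach as the paper: both arguments reduce to the idempotency relation $d_{i,k}=\min_j\{d_{i,j}+d_{j,k}\}$, the paper reading it directly in coordinates while you route it through Corollary~\ref{coro-4} and the isometry of $Y$. The restriction of the index set via $d_{j,k}=\infty$ for $a_j\not\leq a_k$ is handled identically in both.
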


\begin{proof}
The fact that $d^2_{\min}=d_{\min}$ is equivalent to 
\begin{equation}
    d_{i,k}=\min_j\{d_{i,j}+d_{j,k}\}.
\end{equation}

We have $Y(a_k):=d(-,a_k)$ and $Y(a_j):=d(-,a_j)$ therefore eq. 64 implies 
$$Y(a_k)_i=\oplus_j d_{j,k}\odot Y(a_j)_i$$ which means

$$Y(a_k)=\oplus_j d_{j,k} \odot Y(a_j)$$. Since 
$d_{j,k}=\infty$ unless $a_j\leq a_k$ we have 
$$Y(a_k)=\oplus_{a_j\leq a_k}d_{j,k}\odot Y(a_j)$$.

Analogously for $d^t$ we have 
$d^t_{i,k}=\min_l\{d^t_{i,l}+d^t_{l,k}\} \iff
d_{k,i}=\min_l\{d_{k,l}+d_{l.i}\}$. 
Recall that $\widehat{Y}(a_k):=d(a_k,-)$ and $\widehat{Y}(a_l):=d(a_l,-)$.
This implies
$$\widehat{Y}(a_k)=\oplus_l d_{k,l}\odot \widehat{Y}(a_l).$$  Since 
$d_{k,l}=\infty$ unless $a_k\leq a_l$ we have 
\[\widehat{Y}(a_k)=\oplus_{a_k\leq a_l}d_{k,l}\odot \widehat{Y}(a_l)\enspace .\qedhere\]
\end{proof}

Finally we have the following
\begin{proposition}
\label{Prop inner product}
The Funk metric $D(x,y):=\max_i\{y_i-x_i|x_i\neq \infty\}$ has the property that
$D(-,w)$ is tropically antilinear, namely 
\begin{equation}
D(\lambda_1 \odot x\oplus_{\min} \lambda_2 \odot y,z)=-\lambda_1\odot D(x,z) \oplus_{\max} -\lambda_2 \odot D(y,z)
\end{equation}

while $D(w,-)$ is linear, namely 
\begin{equation}
D(x,\lambda_1 \odot y\oplus_{\max} \lambda_2 \odot z)=\lambda_1\odot D(x,z) \oplus_{\max} \lambda_2 \odot D(y,z).
\end{equation}
\end{proposition}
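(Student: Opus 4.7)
The plan is a direct computation from the definitions, using only the elementary identities $-\min(a,b)=\max(-a,-b)$ (which is the reason a $\min$ on the input becomes a $\max$ on the output) and the associativity/commutativity of $\max$, which lets me interchange an outer $\max_i$ with an inner $\max$ over two terms.

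For the antilinearity of $D(-,z)$, I would set $u := \lambda_1 \odot x \oplus_{\min} \lambda_2 \odot y$, so $u_i = \min(\lambda_1 + x_i,\lambda_2 + y_i)$, and unfold
\[
D(u,z) \;=\; \max_i\bigl\{z_i - u_i \mid u_i\neq \infty\bigr\}
\;=\; \max_i\bigl\{\max(z_i-\lambda_1-x_i,\; z_i-\lambda_2-y_i)\mid u_i\neq\infty\bigr\},
\]
then swap the two maxes to obtain $\max\bigl(-\lambda_1 + D(x,z),\; -\lambda_2 + D(y,z)\bigr)$, which is the claimed right-hand side. For the linearity of $D(x,-)$ the computation is even simpler: with $v := \lambda_1\odot y \oplus_{\max}\lambda_2\odot z$, I pull the constants out of $\max_i\{\max(\lambda_1+y_i - x_i,\lambda_2+z_i - x_i)\mid x_i\neq \infty\}$ and split into two separate maxes indexed by the same condition $x_i\neq \infty$.

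The only genuinely delicate point, and I expect it to be the main (if mild) obstacle, is the bookkeeping of the $\infty$-convention hidden in the definition of $D$. For the antilinear identity I must check that restricting the outer max to indices with $u_i\neq\infty$ yields the same value as splitting into one max restricted to $x_i\neq\infty$ and another restricted to $y_i\neq\infty$. This is true because if $x_i=\infty$ then $-\lambda_1-x_i = -\infty$ (using the $(\min,+)$ convention from \Cref{re extended metric}) and contributes nothing to the first sub-max, and likewise for $y_i=\infty$; and an index with both $x_i=y_i=\infty$ (the only way to have $u_i=\infty$ when $\lambda_1,\lambda_2$ are finite) contributes $-\infty$ on both sides and can be safely excluded. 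A parallel remark handles the linear identity, where the restriction $x_i\neq\infty$ is the same on both sides and requires no manipulation.

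After handling these conventions, rewriting the two resulting expressions in the $(\min,+)/(\max,+)$ notation, i.e.\ $\lambda\odot a = \lambda + a$, $a\oplus_{\max} b = \max(a,b)$, yields the stated formulas verbatim. No deeper structural facts about $P(\mL)$ or the Yoneda embedding are needed; the proposition is purely a statement about the algebraic form of the Funk metric and follows from the two lines above.
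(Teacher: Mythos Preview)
Your proposal is correct and follows essentially the same route as the paper: a direct unfolding of the definition, using $-\min(a,b)=\max(-a,-b)$ and the interchange of nested maxima. The only cosmetic difference is that the paper separates each identity into a homogeneity step (pulling out a single $\lambda\odot$) and an additivity step (handling $\oplus_{\min}$ or $\oplus_{\max}$ alone), whereas you do both at once; your treatment of the $\infty$-exclusion in the index set is actually more careful than the paper's, which silently drops that condition in its computation.
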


\begin{proof}

We have $D(\lambda \odot x,y)=\max_i\{y_i-\lambda-x_i\}=D(x,y)-\lambda$.

We calculate $D(x\oplus_{\min}y,z)=\max_i\{z_i-\min\{x_i, y_i\}\}=\max_i\{z_i+\max_i\{-x_i,-y_i\}\}=$
$=\max\{\max_i\{z_i-x_i\},\max\{z_i-y_i\}\}=D(x,z)\oplus_{\max}D(y,z)$.

Moreover
$D(x,\lambda\odot y)=\max_i\{\lambda+y_i-x_i\}=\lambda-D(x,y)$.

Finally, $D(x,y\oplus_{\max}z)=D(x,\max\{y,z\})=\max_i\{\max\{y_i,z_i\}-x_i\}=$

$=\max_i\{\max\{y_i-x_i\},\max\{z_i-x_i\}\}=\max\{\max_i\{y_i-x_i\},\max_i\{z_i-x_i\}\}=$
$=D(x,y)\oplus_{\max} D(x,z)$
\end{proof}
This means that we can think of $D$ as a tropical inner product.

\begin{remark}
    All the results in this section hold for a general directed metric space.
\end{remark}
\subsection{$P(\mL)$ and $\widehat{P}(\mL)$ as Semantic spaces}
\label{Section Semantic spaces}

We already mentioned in the overview that we consider $Y(a_k):=d(-,a_k)$
as well as $\widehat{Y}(a_k):=d(a_k,-)$
 as encoding the meanings of text $a_k$ in accordance with the statistical semantics principal namely that texts that appear in similar contexts have similar meaning.
 The function $d(a_k,-)$ is supported on extensions of $a_k$ while   $d(-,a_k)$ is supported on restrictions of $a_k$.

However it is also the position of these vectors in $P(\mL)$ and $\widehat{P}(\mL)$ that contains semantic information since for example, $Y(a_k)=d(-,a_k)$ for $a_k$ a word is supported only on that word while $D(Y(a_k),-)$ is supported on all extensions of $a_k$.

Therefore  more generally,  we think of  $(P(\mL),D)$ and $(\widehat{P}(\mL),D^t)$ as ``semantic spaces'' giving mathematical substance to the statistical semantics hypothesis.
This point if view was advocated in \cite{BTV2021}.

We further explain our view about the syntax to semantics problem in Appendix B and show that it is located in the realm of a  deep and general duality in mathematics which in some cases appears as a duality between algebra and geometry.

It is interesting that even though the whole space $P(\mL)$ (or $Q(\mL)$) and $\widehat{P}(\mL)$ (or $ \widehat{Q}(\mL)$) appear as a spaces of meanings, texts appear only as special extremal rays.
They are the ``observable'' variables while other points of $P(\mL)$ are like ``hidden'' variables.

The systems of equations \Cref{prop linear system} \Cref{eq system Yoneda}, \Cref{eq system coYoneda} express the $(\min,+)$ linear relations satisfied by the Yoneda and co-Yoneda embedding text vectors. 
They are reminiscent of vector equations between word vectors as appeared first in \cite{mikolov2013distributed}.

Another way to think about them is as equations that implement the constraints imposed by the probabilities of extension. It is common to consider constraints giving rise to equations defining a geometric object and here we have something analogous but in $(\min,+)$ algebra. 

Moreover we note that any $(\min,+)$ linear combination can be transformed into a Boltzmann weighted usual linear combination using a small temperature parameter and the identity \Cref{eq temperature id}
$$\lim_{T \to 0} -T \log(e^{-y/T} +e^{-z/T})=\min\{y,z\}.$$

Using this  we will also show  in \Cref{co text vector}, \Cref{eq text vector} that $e^{-Y(a_k)}$ can be approximated by a Boltzmann weighted linear combinations of word vectors for the words that make up that text. We note the similarity of this with the expression of a value vector for a text in terms of  word vectors, in the attention layer of a transformer.

Notice also that from the formulation of probabilistic language models, 
vectors arise naturally, first in the  $(\min,+)$ context but later in 
Boltzmann weighted usual linear combinations (section 5.1). 

Moreover we will show in \Cref{section 6} that there is a duality  relating  $P(\mL)$ and $\widehat{P}(\mL)$ as well as $Q(\mL)$ and $\widehat{Q}(\mL)$ (they are isometric and tropically anti-isomorphic). This shows that given a corpus, the semantic information given by extensions  of texts is equivalent to that given by restrictions. 
 
We note that if the transformer is computing an approximation to $\widehat{P}(\mL)$ then the fact that  it is a convex space could explain why the gradient descent during training converges nicely.

Since the transformer computes probabilities for all possible next words to a text it is natural to think that the corresponding probabilistic language model $(\mL,\leq, \Pr)$ contains the whole free monoid generated by words and all texts appear as extremal rays of $P(\mL)$ corresponding to principal upper sets. Wrong texts are very far away from correct texts as they are very unlikely.

In that case the neural network should then learn an effective representation of $\widehat{P}(\mL)$,  which a priori has a huge dimension. How the neural network is able to 
construct an effective approximation of such a huge dimensional space is not clear to us.

From another point of view we see that if we consider that the transformer neural network is learning $\widehat{P}(\mL)$ then  we can think of training the transformer as finding a solution to the huge $(\min,+)$ system of  \Cref{eq system coYoneda}, \Cref{prop linear system}, given the coefficients $d_{j,k}$.

We will see a small example of the polyhedra   $Q(\mL)$ and $\widehat{Q}(\mL)$ as well as the dualities, in section~\ref{section 6}.

Further evidence for these spaces as semantic spaces is provided by the fact that 
they have a Heyting algebra structure (which is a generalization of a Boolean algebra) as explained in \cite{BTV2021}.

As already mentioned,  experiments using (a slight variant of ) the co-Yoneda vectors $d(-,a_k)$ were performed in \cite{liu2023meaning} where an actual transformer neural network was used to sample continuations of texts and construct the co Yoneda vectors. The authors tested these vectors on several semantic tasks and obtained very good results. 

\subsection{From one word text extensions to longer extensions}

We now explain how to go from one word extension probabilities  to the metric $d$.

 However $d$ constructed in this way does not satisfy the main assumption of probabilistic language model $\Pr(a_k|a_i)=\Pr(a_j|a_i)\Pr(a_k|a_j)$. It does satisfy that $d$ is a directed metric and therefore $d$ is a $(\min,+)$ projector.

Indeed, let $C$ be the matrix of one word extensions. Namely for texts $a_i$ and $a_j$ we put 

\begin{equation}
C(a_i,a_j) = \left\{
\begin{array}{ll}
  -\logPr(a_j|a_i) & \text{if } a_i \leq a_j \text{ and $a_j$ extends $a_i$ by a single word},\\
  
  \infty & \text{if }  a_i \leq a_j \text{ and $a_j$ extends $a_i$ by more than a single word},\\

  \infty & \text{if $a_i$ and $a_j$ are not comparable} .
\end{array}
\right.
\end{equation}

Let $\Id$ denote the matrix with $\Id_{i,i}=0$ and $\Id_{i,j}=\infty$ for $i\neq j$. $\Id$ is the identity matrix in the $(\min,+)$ matrix semiring. Indeed $C\Id=\Id C=C$.  We note that $C_{i,i}=0$ 
and therefore $C \oplus \Id =C$.

In that case the  tropical power $C^l$ computes  distances for up to $l$ word extensions. 

If we bound the number of words in the extension to say $k$ then $C^k=C^{k+1}$ and $d=C^k$ is our metric. 
\begin{proposition}
Let $C$ be such that $C_{i,i}=0$.
If $d=C^k=C^{k+1}$ then 
\begin{equation}
x=dx \iff x=Cx
\end{equation}
\end{proposition}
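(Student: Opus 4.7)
The plan is to exploit the matrix identity $d = C \cdot d$ (in the $(\min,+)$ matrix semiring), which follows directly from the hypothesis $d = C^{k+1} = C \cdot C^k = C \cdot d$. Tropical matrix multiplication is associative, so this identity can be applied to a vector $x$ whenever convenient.

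For the direction $x = Cx \Rightarrow x = dx$: iterating the equation $x = Cx$ gives $x = C^m x$ for every $m \geq 1$ (using associativity of $(\min,+)$ matrix–vector action). Setting $m = k$ yields $x = C^k x = dx$. Note that the hypothesis $C_{i,i} = 0$ is consistent with this iteration because it gives $(Cx)_i \leq C_{i,i} + x_i = x_i$, so the sequence $x, Cx, C^2 x, \ldots$ is monotonically nonincreasing, and in fact constant once $x = Cx$.

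For the converse direction $x = dx \Rightarrow x = Cx$: using $d = C \cdot d$ and associativity,
\begin{equation}
Cx = C(dx) = (Cd)x = dx = x.
\end{equation}
Here the first equality uses the assumption $x = dx$, the second is associativity of tropical multiplication, the third is the matrix identity $Cd = d$, and the last is again the assumption.

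The argument is essentially a two–line computation once the matrix identity $Cd = d$ is observed; the only subtlety is to recognize that tropical matrix–vector multiplication is associative, which is a routine rearrangement of $\min$ and $+$. The hypothesis $C_{i,i} = 0$ is not strictly needed for the equivalence itself, but it is what guarantees that the sequence $C^m$ stabilizes in the first place (since it implies $C^m \oplus C^{m+1} = C^{m+1}$, giving a monotone sequence of matrices that saturates at $d$). Thus the main, and essentially only, work is the observation $d = Cd$, from which both implications follow in one line each.
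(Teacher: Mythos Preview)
Your proof is correct and essentially identical to the paper's: both directions are the same one-line computations, with the only cosmetic difference that you isolate the identity $Cd=d$ explicitly while the paper writes the chain $Cx=C^{k+1}x=C^kx=dx=x$ inline. Your remark that the hypothesis $C_{i,i}=0$ is not actually used in the equivalence (only in ensuring stabilization of the powers) is accurate and a useful observation.
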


\begin{proof}

If $x=dx=C^kx$ then $Cx=C^{k+1}x=C^k x=dx=x$ therefore solutions of $x=dx$ are also solutions of $x=Cx$. 

On the other hand if $x=Cx$ then $C^kx=x$ i.e. $dx=x$.
\end{proof}

Since the diagonal entries of $d$, and $C$, are equal to $0$, the equations
$x=dx$ and $x=Cx$ are equivalent to $x\geq dx$ and $x\geq Cx$, respectively.
These two systems of inequalities describe the same polyhedron.

\section{Compatibility of $P(\mL)$ with adding more texts}

When training the neural network to learn by predicting continuations of texts we add more and more text. Moreover we have already mentioned in \Cref{re grading} that it is natural to grade $\mL$ by word length of texts.
It is therefore important to understand how 
$P(\mL)$ changes as we add more and more text. 

We have the following:

\begin{proposition}
If a probabilistic language model $(\mL_1,d_1)$ is extended to $(\mL_2,d_2)$, namely if there is an isometric embedding $\phi:(\mL_1,d_1) \hookrightarrow (\mL_2,d_2)$ then there is 
an isometric embedding $\widetilde{\phi}:(P(\mL_1)),D_1) \hookrightarrow (P(\mL_2),D_2)$ such that
    $\widetilde{\phi}(Y_1(a))=Y_2(\phi(a))$. 
    Moreover $\widetilde{\phi}(P(\mL_1))$ is a retraction (i.e. a non-expansive $(\min,+)$ projection) of $P(\mL_2)$ 
\end{proposition}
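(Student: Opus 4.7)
The plan is to exploit the two equivalent descriptions of $P(\mL)$ coming from \Cref{prop-3} and \Cref{prop-11}: $P(\mL)$ is both the set of non-expansive maps $(\mL,d^t)\to ((-\infty,\infty],d_\R)$ and the fixed-point set $\Fix(d_{\min})$ of the tropical projector $d_{\min}$. Given the isometric embedding $\phi:\mL_1\hookrightarrow\mL_2$, I would define $\widetilde{\phi}:P(\mL_1)\to P(\mL_2)$ as the tropical Kuratowski-style extension
\begin{equation*}
\widetilde{\phi}(x)(b) \;:=\; \min_{a\in\mL_1}\{x(a)+d_2(b,\phi(a))\} \;=\; \bigoplus_{a\in\mL_1} x(a)\odot Y_2(\phi(a))(b),
\end{equation*}
for $b\in\mL_2$. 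The second expression displays $\widetilde{\phi}$ as the unique $(\min,+)$ linear map sending each Yoneda generator $Y_1(a)$ to $Y_2(\phi(a))$.

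I would then run four verifications. First, $\widetilde{\phi}(x)\in P(\mL_2)$: the triangle inequality $d_2(b_i,\phi(a))\le d_2(b_i,b_j)+d_2(b_j,\phi(a))$ survives the $\min$ over $a$ and gives $\widetilde{\phi}(x)(b_i)\le \widetilde{\phi}(x)(b_j)+d_2(b_i,b_j)$. Second, $\widetilde{\phi}(Y_1(a_k))=Y_2(\phi(a_k))$: substituting $x=d_1(-,a_k)$ and using the isometry of $\phi$ to rewrite $d_1(a,a_k)=d_2(\phi(a),\phi(a_k))$, the expression becomes $\min_a\{d_2(b,\phi(a))+d_2(\phi(a),\phi(a_k))\}\ge d_2(b,\phi(a_k))$, with equality attained at $a=a_k$. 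Third, $\widetilde{\phi}$ is $(\min,+)$ linear by distributivity of $+$ over $\min$. The key fourth step is the identity $\phi^*\circ\widetilde{\phi}=\Id_{P(\mL_1)}$, where $\phi^*(y)(a):=y(\phi(a))$ denotes the pullback: indeed $(\phi^*\widetilde{\phi}(x))(a')=\min_a\{x(a)+d_1(a',a)\}=d_{1,\min}(x)(a')=x(a')$ since $x\in\Fix(d_{1,\min})$ by \Cref{prop-11}. This identity is the engine of the isometry claim: the upper bound $D_2(\widetilde{\phi}(x),\widetilde{\phi}(x'))\le D_1(x,x')$ follows from the componentwise bound $x'(a)\le x(a)+D_1(x,x')$ passing through the $\min$, and the matching lower bound follows by taking the Funk maximum over $b=\phi(a)$ and using $\widetilde{\phi}(x)(\phi(a))=x(a)$.

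For the retraction claim I would set $\mR:=\widetilde{\phi}\circ\phi^*:P(\mL_2)\to P(\mL_2)$. Idempotence $\mR^2=\mR$ is immediate from $\phi^*\widetilde{\phi}=\Id$; $(\min,+)$ linearity of $\mR$ is inherited from $\phi^*$ (trivially linear) and $\widetilde{\phi}$ (step three); non-expansiveness follows from the isometry of $\widetilde{\phi}$ combined with the evident inequality $D_1(\phi^*(y),\phi^*(y'))\le D_2(y,y')$; and $\Im(\mR)=\widetilde{\phi}(P(\mL_1))$ because $\widetilde{\phi}$ admits $\phi^*$ as a left inverse, so $\widetilde{\phi}(P(\mL_1))$ is exactly the set of fixed points of $\mR$. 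The only genuine subtlety is choosing the right direction of the Funk metric inside the formula for $\widetilde{\phi}$, i.e.\ writing $d_2(b,\phi(a))$ rather than $d_2(\phi(a),b)$; this is dictated by the triangle-inequality calculation in step one and is also what makes the restriction property $\widetilde{\phi}(x)(\phi(a))=x(a)$ reduce exactly to the tropical fixed-point equation $d_{1,\min}x=x$.
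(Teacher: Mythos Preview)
Your proof is correct and defines the same extension map $\widetilde{\phi}$ and the same retraction $\mR$ as the paper (the paper states the proposition for probabilistic language models and then proves the general directed-metric case as \Cref{th-3}). The difference is organizational rather than mathematical: you isolate the pullback $\phi^*:P(\mL_2)\to P(\mL_1)$, $\phi^*(y)=y\circ\phi$, and the key identity $\phi^*\circ\widetilde{\phi}=\Id_{P(\mL_1)}$, then factor $\mR=\widetilde{\phi}\circ\phi^*$. This single identity does a lot of work for you: idempotence of $\mR$, the lower bound in the isometry estimate, and the identification $\Im(\mR)=\widetilde{\phi}(P(\mL_1))$ all drop out in one line. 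The paper instead writes $\mR$ as the explicit matrix $\mR_{i,k}=\bigoplus_{j=1}^{n}\delta_2(b_i,b_j)\odot\delta_2(b_j,b_k)$ and checks $\mR^2=\mR$, the image, and non-expansiveness by separate direct computations (the last via order-preservation and tropical homogeneity of $\mR$). Your packaging is a bit more conceptual and shorter; the paper's matrix form, on the other hand, feeds directly into the subsequent corollaries that compute $\mR(Y_2(b_k))$ explicitly. One small point you leave implicit but should state: $\phi^*$ indeed lands in $P(\mL_1)$, which is immediate from the isometry of $\phi$ and the defining inequalities of $P(\mL_2)$.
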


We will prove this in full generality and derive the probabilistic language model case  as a special case. 

\begin{theorem}\label{th-3}
    Let $\phi:(X_1,\delta_1) \hookrightarrow (X_2,\delta_2)$ be an isometric embedding of discrete, finite, directed metric spaces, then there is an isometric embedding $\widetilde{\phi}:(P(X_1),\Delta_1) \hookrightarrow (P(X_2),\Delta_2)$ compatible with the Yoneda isometric embeddings $Y_1:X_1 \to P(X_1)$ and $Y_2:X_2 \to P(X_2)$, namely 
    $\widetilde{\phi}(Y_1(a))=Y_2(\phi(a))$. 
    Moreover $\widetilde{\phi}(P(X_1))$ is a retraction (i.e. a non-expansive $(\min,+)$ projection) of $P(X_2)$.

\end{theorem}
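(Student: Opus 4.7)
The plan is to construct $\widetilde{\phi}$ explicitly as a tropical Kan extension along $\phi$, and to construct the retraction as $\widetilde{\phi}$ composed with the restriction map $\phi^{*}$.

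\textbf{Step 1 (definitions).} I would define the restriction
\[
\phi^{*} \colon P(X_2) \to P(X_1), \qquad (\phi^{*} y)(a) \coloneqq y(\phi(a)),
\]
and the extension
\[
\widetilde{\phi} \colon P(X_1) \to P(X_2), \qquad (\widetilde{\phi}\, x)(b) \coloneqq \min_{a \in X_1}\{\,x(a) + \delta_2(b,\phi(a))\,\} = \bigoplus_{a \in X_1} x(a)\odot Y_2(\phi(a))(b).
\]
So $\widetilde{\phi}(x)$ is the $(\min,+)$ linear combination of the columns $Y_2(\phi(a))$ of $\delta_2$ restricted to $\phi(X_1)$, weighted by the coordinates of $x$. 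That $\phi^{*}y$ lies in $P(X_1)$ is immediate from $y \in P(X_2)$ and the fact that $\phi$ is an isometric embedding ($\delta_1(a,a') = \delta_2(\phi(a),\phi(a'))$). That $\widetilde{\phi}(x) \in P(X_2)$ follows by a routine verification of the triangle inequality using $\delta_2(b,\phi(a)) \leq \delta_2(b,b') + \delta_2(b',\phi(a))$ and minimising over $a$.

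\textbf{Step 2 (compatibility with Yoneda).} For $x = Y_1(a_0) = \delta_1(-,a_0)$, the isometry of $\phi$ gives
\[
(\widetilde{\phi}\, Y_1(a_0))(b) = \min_{a}\{\,\delta_2(\phi(a),\phi(a_0)) + \delta_2(b,\phi(a))\,\}.
\]
The choice $a = a_0$ yields $\delta_2(b,\phi(a_0))$ and the triangle inequality in $X_2$ shows this value is the minimum, so $\widetilde{\phi}(Y_1(a_0)) = Y_2(\phi(a_0))$.

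\textbf{Step 3 ($\phi^{*}\circ\widetilde{\phi} = \mathrm{id}$).} Using Step 2,
\[
(\phi^{*}\widetilde{\phi}\, x)(a) = \min_{a'}\{x(a') + \delta_2(\phi(a),\phi(a'))\} = \min_{a'}\{x(a') + \delta_1(a,a')\} = (\delta_{1,\min} x)(a) = x(a),
\]
where the last equality uses $x \in P(X_1) = \mathrm{Fix}(\delta_{1,\min})$ from Proposition~\ref{prop-11}.

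\textbf{Step 4 (isometry).} Specialising the max in $\Delta_2(\widetilde{\phi}x,\widetilde{\phi}y) = \max_{b}\{\widetilde{\phi}(y)(b) - \widetilde{\phi}(x)(b) : \widetilde{\phi}(x)(b) \neq \infty\}$ to $b = \phi(a)$ and invoking Step 3 gives $\Delta_2(\widetilde{\phi}x,\widetilde{\phi}y) \geq \Delta_1(x,y)$. For the reverse inequality, for each $b$ with $\widetilde{\phi}(x)(b) \neq \infty$, pick a minimiser $a^{*} \in X_1$ so that $\widetilde{\phi}(x)(b) = x(a^{*}) + \delta_2(b,\phi(a^{*}))$; then $\widetilde{\phi}(y)(b) \leq y(a^{*}) + \delta_2(b,\phi(a^{*}))$, hence $\widetilde{\phi}(y)(b) - \widetilde{\phi}(x)(b) \leq y(a^{*}) - x(a^{*}) \leq \Delta_1(x,y)$.

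\textbf{Step 5 (retraction).} Set $\mR \coloneqq \widetilde{\phi}\circ \phi^{*}$. Step~3 gives $\mR^{2} = \widetilde{\phi}\circ(\phi^{*}\widetilde{\phi})\circ\phi^{*} = \widetilde{\phi}\circ\phi^{*} = \mR$, so $\mR$ is a $(\min,+)$ projection with image exactly $\widetilde{\phi}(P(X_1))$; non-expansiveness of $\mR$ with respect to $D_2$ follows because both $\phi^{*}$ and $\widetilde{\phi}$ are non-expansive (the latter by Step 4, the former being the restriction of the same Funk maximum to a subset of indices, which can only decrease). Applying this to the probabilistic language model case $\delta_i = d_i$ yields the stated proposition.

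The main (small) obstacle is Step 4: one must be careful that $\infty$ coordinates on both sides are handled consistently, and that a minimiser $a^{*}$ exists (guaranteed here since $X_1$ is finite). Everything else is a formal check that $\widetilde{\phi}$ is the $(\min,+)$ analogue of the left Kan extension along $\phi$ and $\phi^{*}$ is its right adjoint in the sense of $(\min,+)$ modules.
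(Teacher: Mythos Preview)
Your proof is correct and the maps you construct coincide with those in the paper: your $\widetilde{\phi}$ is exactly the paper's extension $\oplus_m x_m\odot \delta_2(-,b_m)$, and your retraction $\mR=\widetilde{\phi}\circ\phi^{*}$ unwinds to the paper's $\mR(y)(b_i)=\bigoplus_{j=1}^{n}\delta_2(b_i,b_j)\odot y(b_j)$.

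The organization differs in a useful way. The paper verifies $\mR^{2}=\mR$, $\Im(\mR)=\widetilde{\phi}(P(X_1))$, and non-expansiveness by separate direct computations (using the idempotency $\delta_{1,\min}^{2}=\delta_{1,\min}$ explicitly in each). You instead exhibit the adjoint-like pair $(\widetilde{\phi},\phi^{*})$ and establish the single identity $\phi^{*}\circ\widetilde{\phi}=\mathrm{id}_{P(X_1)}$; idempotency of $\mR$ and the description of its image then follow formally, and non-expansiveness of $\mR$ is inherited from non-expansiveness of each factor. This Kan-extension packaging is cleaner and makes the categorical content (already alluded to in the paper's Appendix~A) transparent. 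Your isometry argument in Step~4 is also more explicit than the paper's, which is quite terse at that point. The only thing the paper's direct approach buys is that it avoids introducing $\phi^{*}$ as a separate object, but that is a minor saving.
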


\begin{proof}
Say $X_1:=\{a_1\dots a_n\}$ and $X_2:=\{b_1\dots b_n,b_{n+1},\dots b_{n+k}\}$, where $b_j=\phi(a_j)$ for $j=1 \dots n$. Recall that we have $P(X_1)=\Im(\delta_1)$ is the span of $Y_1(a_j):=\delta_1(-,a_j)$ and 
$P(X_2)=\Im(\delta_2)$ is the span of $Y_2(b_j):=\delta_2(-,b_j)$.

Let $e_m:=(\infty,\dots,0,\dots,\infty)$ for $m=1,\dots n$, so that
$e_1,\dots,e_n$ is a basis (free and generating family)
of the module $(\Rmin)^n$ of the (min,+) semifield $\Rmin$.
We define 
\[\widetilde{\phi}(\oplus_{m=1}^n x_m\odot e_m):=\oplus_{m=1}^n x_m\odot \delta_2(-,b_m)\]

We now show that 
 \[\widetilde{\phi}(\delta_1(-,a_i))=\delta_2(-,b_i).\]
for $i=1,\dots,n$.
Indeed, 
\[\widetilde{\phi}(\delta_1(-,a_i))=\tilde{\phi}(\oplus_{j=1}^n \delta_1(a_j,a_i)\odot e_j)=
\oplus_{j=1}^n \delta_1(a_j,a_i)\delta_2(-,b_j)=\delta_2(-,b_i).\]
Indeed, the last equality holds since 
\[\oplus _{j=1}^n \delta_1(a_j,a_i)\odot \delta_2(b_l,b_j)=
\oplus _{j=1}^n \delta_2(b_j,b_i)\odot \delta_2(b_l,b_j)=\delta_2(b_l,b_i),\,\]
in which the last equality follows from the fact that $\delta_2$ is a (min,+)
idempotent.

Note that $\widetilde{\phi}$ is well defined since any $x\in (\Rmin)^n$
has a unique expression in the basis $e_k, k=1,\dots , n$. If we attempted to define it directly on the $(\min,+)$ module spanned by the vectors $\delta_1(-,a_i)$ we would have to deal with the complication that $x\in \mathbb{R}^n$ does not always have a unique expression as a $(\min,+)$ combination of these vectors. In fact, one can show that only vectors in the interior of $P(\mL_1)$ would have such unique expressions.

We now check that $\tilde{\phi}$ is an isometric embedding.
We want to check that 
\begin{equation}
    \Delta_2(\widetilde{\phi}(x),\widetilde{\phi}(y))=\Delta_1(x,y)
\end{equation}    
Recall that     $\Delta_1(x,y)=\max_{j=1}^n\{y_j-x_j|x_j\neq \infty\}$. Moreover
$\Delta_2(\widetilde{\phi}(x),\widetilde{\phi}(y))=\max_{j=1}^{n+k}\{\widetilde{y}_j-\widetilde{x}_j|x_j\neq \infty\}$.

From the definition of the $\widetilde{x}_j$ the result follows.

Finally we define the retraction $\mR:P(X_2) \to P(X_2)$ by
\begin{equation}
\mR:=\bigoplus_{j=1}^n
    \Delta_2(-,Y(b_j))\odot \Delta_2(Y(b_j),-)
    \end{equation}
Note that as a matrix 
\begin{equation}
    \mR_{i,k}=R(Y(b_i),Y(b_k)=\bigoplus_{j=1}^n
    \delta_2(b_i,b_j)\odot \delta_2(b_j,b_k)\enspace .
\end{equation}
We need to check that $\mR^2=\mR$, $\Im(\mR)=\widetilde{\phi}(P(X_1))$ and $\mR$ is non-expansive.
Let us check first that $\mR^2=\mR$:

$$\mR^2(Y_2(b_k),Y_2(b_l))=\oplus_{m=1}^n \mR(Y_2(b_k),Y_2(b_m))\odot \mR(Y_2(b_m),Y_2(b_l))=$$
$$\oplus_{m,j_1,j_2=1}^n\delta_2(b_k.b_{j_1})+
\delta_2(b_{j_1}.b_m)+\delta_2(b_m.b_{j_2})+\delta_2(b_m,b_{j_2})+\delta_2(b_{j_2},b_l)=$$
$$\delta_2(b_k,b_l)=\mR(b_k,b_l).$$
Where we have used the fact that 
$$\oplus_{l=1}^n \delta_2(b_k,b_l)+\delta_2(b_l,b_m)=\oplus_{l=1}^n \delta_1(a_k,a_l)+\delta_1(a_l,a_m)=
\delta_1(a_k,a_m)=\delta_2(b_k,b_m).$$
 Next notice that clearly
 $\Im(\mR)\subset \Span_{j=1}^n\{\delta_2(-,b_j)\}=\widetilde{\phi}(P(\mL_1)$.
Moreover we claim that 
\[\mR(\delta_2(-,b_k))=\delta_2(-,b_k)\]
 
Indeed \[\mR(\delta_1(-,b_k))=\oplus_{j=1}^n\delta_2(b_j,b_k)\odot\delta_2(-,b_j).\]
and thus
\[\mR(\delta_1(b_l,b_k))=\oplus_{j=1}^n\delta_2(b_j,b_k)+\delta_2(b_l,b_j)=\delta_2(b_l,b_k),\]
proving the claim.

 Therefore  $\Span_{j=1}^n\{\delta_2(-,b_j)\}\subset \widetilde{\phi}(P(\mL_1))\subset \Im(\mR)$ showing that 
 $\Im(\mR)=\widetilde{\phi}(P(\mL_1)$. 

Finally we check that $\mR$ is non-expansive, namely that
\[\Delta_2(\mR(x),\mR(y))\leq \Delta_2(x,y)\]
To that end note that $\mR$ is order preserving and also $\mR(\alpha\odot x)=\alpha\odot \mR(x)$.
Indeed both of these statements follow from the $(\min,+)$ linearity of $\mR$.

In particular  $x\leq y \iff x\oplus y=x$ which implies that $\mR(x\oplus y)=\mR(x)$ and therefore $\mR(x)\oplus \mR(y)=\mR(x)$ which 
means $\mR(x) \leq \mR(y)$.

 Recall now that $\Delta_2(x,y)=\inf \{\lambda:x\leq \lambda \odot y\}=\max_i\{y_i-x_i|x_i \neq \infty\}$.  
Then \[x\leq \Delta_2(x,y)\odot y \implies \mR(x) \leq \mR(\Delta_2(x,y)\odot y) \implies \mR(x)\leq \Delta_2(x,y)\odot \mR(y)\] therefore 
$\Delta_2(\mR(x),\mR(y)\leq \Delta_2(x,y)$

\end{proof}
\begin{remark}
Since $\tilde{\phi}(Y_1(a_j)):=Y_2(\phi(a_j)$ we have, if $x:=\oplus_{j=1}^n x_j\odot Y_1(a_j)$, 

$$\widetilde{\phi}(x):=\oplus_{j=1}^n x_j \odot Y_2(\phi(a_j))=\oplus_{j=1}^n x_j \odot Y_2(b_j)=
\oplus_{j=1}^{n+k} \tilde{x}_j \odot Y_2(b_j),$$
where $\widetilde{x}_j:=x_j$ for 
$j=1, \dots, n$ and $\widetilde{x}_j=\infty$ for $j=n+1,\dots, {n+k}$.
So in these coordinates $P(X_1)$ is cut out inside $P(X_2)$ by the equations $\tilde{x}_j=\infty$ for $j=n+1,\dots, {n+k}$. In this sense, it constitutes
a ``face'' of $P(X_2)$ of (projective) dimension $|X_1|-1$.
\end{remark}

\begin{remark}
Note that if $x\in P(\mL_2)$ where 
$x:\mL_2 \to (-\infty,\infty]$ and $x_i:=x(b_i)$ then
\begin{equation}
    \mR(x)=\bigoplus_{j=1}^n\Delta_2(Y_2(b_j),x)\odot \Delta_2(-,Y(b_j)) :\mL_2 \to (-\infty,\infty]
\end{equation}
and for $i=1\dots, n+l$
\begin{equation}
\mR(x)_i:=\mR(x)(b_i)=\bigoplus_{j=1}^n \Delta_2(Y_2(b_i),Y_2(b_j))\odot \Delta_2(Y_2(b_j),x)=\bigoplus_{j=1}^n d_2(b_i,b_j)\odot x_j
\end{equation}
Therefore
\begin{align}
\mR(x)&=\bigoplus_{i=1}^{n+l}  \mR(x)_i \odot Y_2(b_i)=
\bigoplus_{i=1}^{n+l} \bigoplus_{j=1}^n d_2(b_i,b_j)\odot x_j \odot Y_2(b_i)\nonumber \\
& =\bigoplus_{i=1}^{n+l} \bigoplus_{j=1}^n d_2(b_i,b_j)\odot \Delta_2(Y(b_j),x) \odot Y_2(b_i)
\end{align}

\end{remark}

\subsection{Approximation of a text vector in terms of word vectors}

Let us see how \Cref{th-3} applies to the probabilistic language model case. 

\begin{corollary}

Let $\mL_1:=\{a_1\dots a_n\}$ and $\mL_2:=\{b_1\dots b_n,b_{n+1},\dots b_{n+l}\}$, be probabilistic language models and $\phi:\mL_1 \to \mL_2$ an isometric embedding  where $b_j=\phi(a_j)$ for $j=1 \dots n$. 
Let $Y_1: (\mL_1,d_1) \to (P(\mL_1),D_1)$ and 
$Y_2: (\mL_2,d_2) \to (P(\mL_2),D_2)$ be the Yoneda isometric embeddings.

Let $\mR:P(\mL_2)\to P(\mL_2)$ be the non-expansive projection of \Cref{th-3} given by 
\begin{equation}
\mR:=\bigoplus_{j=1}^n D_2(-,Y_2(b_j))\odot D_2(Y_2(b_j),-).
\end{equation}
Then for $i,k=1,\dots, n+l$
\begin{equation}
\mR(Y_2(b_k))_i=\bigoplus_{j=1}^n
    d_2(b_i,b_j)\odot d_2(b_j,b_k)
\end{equation}

and 
\begin{equation}
    \mR(Y_2(b_k))=\bigoplus_{i=1}^{n+l} \mR(Y_2(b_k))_i \odot Y_2(b_i)=\bigoplus_{i=1}^{n+l} \bigoplus_{j=1}^n d_2(b_i,b_j)\odot d_2(b_j,b_k) \odot Y_2(b_i).
\end{equation}
or equivalently
\begin{equation}
    \mR(Y_2(b_k))=\bigoplus_{b_i\leq b_j \leq b_k} d_2(b_i,b_j)\odot d_2(b_j,b_k) \odot Y_2(b_i).
\end{equation}
\end{corollary}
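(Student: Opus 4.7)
The plan is to derive the first identity by unfolding the operator $\mR$ coordinate by coordinate, using two earlier results: the Yoneda isometry $D_2(Y_2(b_i), Y_2(b_j)) = d_2(b_i, b_j)$ (Proposition~\ref{prop-yoneda-alt}) and the identity $D_2(Y_2(b_j), x) = x_j$ for $x \in P(\mL_2)$ (Proposition~\ref{prop-5}); the second identity will then follow from the canonical $(\min,+)$ expansion of Corollary~\ref{coro-4}; and the third is obtained by noting that terms outside the chain $b_i \leq b_j \leq b_k$ are annihilated by a factor of $+\infty$.

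First I would evaluate $\mR(x)$ at an arbitrary coordinate $b_i$ for $x\in P(\mL_2)$: filling the first dash in the definition $\mR = \bigoplus_{j=1}^n D_2(-, Y_2(b_j)) \odot D_2(Y_2(b_j), -)$ with $Y_2(b_i)$ and the second with $x$,
$$\mR(x)_i = \bigoplus_{j=1}^n D_2(Y_2(b_i), Y_2(b_j)) \odot D_2(Y_2(b_j), x) = \bigoplus_{j=1}^n d_2(b_i, b_j) \odot x_j.$$
Specializing $x = Y_2(b_k)$, whose $j$-th coordinate is $d_2(b_j, b_k)$, yields
$$\mR(Y_2(b_k))_i = \bigoplus_{j=1}^n d_2(b_i, b_j) \odot d_2(b_j, b_k),$$
which is the first identity.

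Since $\mR$ takes values in $P(\mL_2)$ by Theorem~\ref{th-3}, the vector $\mR(Y_2(b_k))$ lies in $P(\mL_2)$ and admits the canonical expansion
$$\mR(Y_2(b_k)) = \bigoplus_{i=1}^{n+l} \mR(Y_2(b_k))_i \odot Y_2(b_i)$$
supplied by Corollary~\ref{coro-4}; substituting the formula for $\mR(Y_2(b_k))_i$ just derived produces the second identity. Finally, in each double-sum term $d_2(b_i, b_j) \odot d_2(b_j, b_k) \odot Y_2(b_i)$, the scalar prefactor equals $+\infty$ (the tropical zero) as soon as either $b_i \not\leq b_j$ or $b_j \not\leq b_k$, so the corresponding summand vanishes in the $(\min,+)$ sum and only the chains $b_i \leq b_j \leq b_k$ contribute, giving the third identity. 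The argument is essentially a transparent unpacking of the tropical operator $\mR$ combined with already-established identities; the only place requiring care is the correct assignment of the two free variables in the definition of $\mR$ to the row and column slots, but nothing here amounts to a substantial obstacle.
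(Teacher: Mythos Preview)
Your proof is correct and follows essentially the same approach as the paper's: unfold $\mR$ coordinatewise via the Yoneda isometry, then use the canonical $(\min,+)$ expansion of an element of $P(\mL_2)$, and finally discard the $+\infty$ terms. If anything, your version is slightly tidier in that you explicitly invoke \Cref{prop-5} and \Cref{coro-4} (and justify $\mR(Y_2(b_k))\in P(\mL_2)$ via \Cref{th-3}) where the paper simply writes down the expansion.
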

\begin{proof}
We have 
$$\mR=\bigoplus_{j=1}^n
    D_2(-,Y_2(b_j))\odot D_2(Y_2(b_j),-),$$ 

    Applying to $Y_2(b_k)$ for $k=1,\dots n+l$ we get 
    $$\mR(Y_2(b_k))=\bigoplus_{j=1}^n
    D_2(-,Y_2(b_j))\odot D_2(Y_2(b_j),Y_2(b_k)).$$
    Since $Y_2$ is an isometric embedding we have
$$\mR(Y_2(b_k))=\bigoplus_{j=1}^n
    D_2(-,Y_2(b_j))\odot d_2(b_j,b_k): \mL_2 \to (-\infty,\infty].$$

    Therefore for $i=1,\dots, n+l$
    \begin{align}
    \mR(Y_2(b_k))_i&=\mR(Y_2(b_k))(b_i)=\bigoplus_{j=1}^n
    D_2(Y_2(b_i),Y(b_j))\odot d_2(b_j,b_k)\nonumber\\
    &=
    \bigoplus_{j=1}^n
    d_2(b_i,b_j)\odot d_2(b_j,b_k)
\end{align}
 Consequently 
 \begin{equation}
     \mR(Y_2(b_k))=\oplus_{i=1}^{n+l}\mR(Y_2(b_k))_j \odot Y(b_i)=\bigoplus_{i=1}^{n+l} \bigoplus_{j=1}^n d_2(b_i,b_j)\odot d_2(b_j,b_k) \odot Y_2(b_i).
 \end{equation}

 or equivalently
\begin{equation}
    \mR(Y_2(b_k))=\bigoplus_{b_i\leq b_j \leq b_k} d_2(b_i,b_j)\odot d_2(b_j,b_k) \odot Y_2(b_i). \qedhere
\end{equation}
\end{proof}

\begin{remark}

    We see from eq 78,
$\mR(Y_2(b_k))_i=\bigoplus_{j=1}^n
    d_2(b_i,b_j)\odot d_2(b_j,b_k)$,
    that only summands such that $b_i\leq b_j \leq  b_k$, will be finite.
\end{remark}

\begin{remark}
Recall that, according to \Cref{th-3}, $\mR$ is a non-expansive map therefore 
\begin{equation}
D(\mR(Y(b_k)),\mR(Y(b_l)))\leq D(Y(b_k),Y(b_l)).
\end{equation}
    
\end{remark}

We can use the previous proposition in order to approximate a text vector by the vectors corresponding to words making up that text. 

\begin{corollary}
Let $\mL:=\{b_1,\dots, b_N\}$ be a probabilistic language model and let  $W:=\{w_1\dots,w_m\}$ be the set of words identified with $b_1,\dots, b_m$ and considered as a probabilistic language model with all pairwise distances equal to infinity. 
Let $Y:\mL \to P(\mL)$ be the Yoneda embedding.  
Let $\mR:P(\mL)\to P(\mL)$ be the non-expansive projection given by 
\begin{equation}
\mR:=\bigoplus_{j=1}^m D(-,Y(w_j))\odot D(Y(w_j),-).
\end{equation}

Consider $Y(b_k) \in P(\mL)$, then 
for $i,k=1,\dots, N$
\begin{equation}
\mR(Y_2(b_k))_i=d_2(w_i,b_k)
\end{equation}

and

\begin{equation}
\mR(Y_2(b_k))=\bigoplus_{i=1}^N d_2(w_i,b_k)\odot Y_2(w_i)=\bigoplus_{w_i\leq b_k} d_2(w_i,b_k)\odot Y_2(w_i)
\
\end{equation}

\end{corollary}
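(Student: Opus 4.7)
The plan is to deduce this corollary as a direct specialization of the preceding corollary, taking $\mL_1 \coloneqq W$ (with all off-diagonal distances equal to $\infty$) and $\mL_2 \coloneqq \mL$. First I would observe that the inclusion $W \hookrightarrow \mL$ is an isometric embedding of probabilistic language models in the sense needed: since distinct words are not subtexts of one another, the induced metric on $W$ inside $\mL$ assigns $\infty$ to every off-diagonal pair, matching the metric on $W$ by hypothesis. So the previous corollary applies verbatim with $n = m$ and $n + l = N$.

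The previous corollary then yields, for each $i \in \{1,\dots,N\}$ and each $k$,
\begin{equation*}
\mR(Y(b_k))_i \;=\; \bigoplus_{j=1}^m d(b_i, w_j) \odot d(w_j, b_k) \enspace .
\end{equation*}
The key step is to collapse this $(\min,+)$-sum using the structure of single-word texts. Because each $w_j$ is a single word, the only subtext of $w_j$ is $w_j$ itself (we are assuming, per Remark 8, that the empty text is not in $\mL$). Hence $d(b_i, w_j)$ is finite if and only if $b_i = w_j$, i.e.\ if and only if $b_i$ is itself a word and $j$ is its index. Consequently, if $b_i$ coincides with the word $w_i$ (using the identification $b_j = w_j$ for $j \le m$), exactly one term survives in the $(\min,+)$-sum, giving $\mR(Y(b_k))_i = 0 + d(w_i, b_k) = d(w_i, b_k)$; if $b_i$ is not a word, every summand is $\infty$.

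Having computed the coordinates, I would then use the general identity $x = \bigoplus_{i=1}^N x_i \odot Y(b_i)$ valid for $x \in P(\mL)$ (from Corollary~\ref{coro-4} applied to $\mR(Y(b_k)) \in P(\mL)$), to get
\begin{equation*}
\mR(Y(b_k)) \;=\; \bigoplus_{i=1}^N \mR(Y(b_k))_i \odot Y(b_i) \;=\; \bigoplus_{i=1}^m d(w_i, b_k) \odot Y(w_i) \enspace ,
\end{equation*}
where the summands for non-word indices drop out because their coefficients are $\infty$ (the $(\min,+)$ zero). Finally, $d(w_i, b_k) = \infty$ whenever $w_i \not\le b_k$, so the surviving terms are exactly those with $w_i \le b_k$, yielding the stated formula.

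The argument is essentially bookkeeping once the previous corollary is invoked; the only mildly delicate point is the word-subtext observation used to collapse the inner $(\min,+)$-sum, which must be handled slightly differently if one wishes to allow the empty text $a_0 \in \mL$ (in that case the $j$-sum would still collapse, but via the identification of $b_i$ with either a word or $a_0$, and the final sum would be indexed by words plus the empty text).
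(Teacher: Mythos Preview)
Your proposal is correct and follows essentially the same route as the paper: both specialize the preceding corollary with $\mL_1=W$, $\mL_2=\mL$, obtain $\mR(Y(b_k))_i=\bigoplus_{j=1}^m d(b_i,w_j)\odot d(w_j,b_k)$, and then collapse the sum via the observation that $d(b_i,w_j)$ is finite only when $b_i=w_j$. Your treatment is in fact slightly more careful than the paper's in distinguishing the cases $i\le m$ and $i>m$, and in flagging the empty-text caveat.
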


\begin{proof}
Consider the projection 
$\mR:P(\mL) \to P(\mL)$ given 
$$\mR=\bigoplus_{j=1}^m
    D(-,Y(w_j))\odot D(Y(w_j),-),$$ 

    where $\Im(R)=\widetilde{\phi}(P(W))$.

We have identified $b_j$ with $w_j$ for $j=1,\dots,m$ therefore from corollary 5 we have
for $i, k=1,\dots,N$ 

\begin{equation}
\mR(Y_2(b_k))_i=\bigoplus_{j=1}^l
    d_2(b_i,w_j)\odot d_2(w_j,b_k).
\end{equation}

However $d_2(b_i,w_j)$ is finite only of j=i and $w_j=b_i$. In that case
$d_2(b_i,w_i)=0$.
Therefore for $i=1,\dots, N$

\begin{equation}
\mR(Y_2(b_k))_i=d(w_i,b_k).
\end{equation}

Consequently 
\begin{equation}
\label{eq text to word}
    \mR(Y_2(b_k))=\bigoplus_{i=1}^N d_2(w_i,b_k)\odot Y_2(w_i)=\bigoplus_{w_i\leq b_k} d_2(w_i,b_k)\odot Y_2(w_i)\enspace .\qedhere
\end{equation}
\end{proof}

\begin{corollary}
\label{co text vector}
Let $\mL:=\{b_1,\dots, b_N\}$ be a probabilistic language model and let  $W:=\{w_1\dots,w_m\}$ be the set of words identified with $b_1,\dots, b_m$ and considered as a probabilistic language model with all pairwise distances equal to infinity. 
Let $Y:\mL \to P(\mL)$ be the Yoneda embedding.  
 Let $T\geq 0$ be a parameter (which is usually called temperature), then 
 we have 

\begin{equation}
\mR(Y(b_k))=\lim_{T\to 0}-T \log(\sum_{w_i\leq b_k}
e^{-\frac{d(w_i,b_k)}{T}} e^{-\frac{Y(w_i)}{T}})
\end{equation}
Therefore for small $T$ we have
\begin{equation}
\label{eq text vector}
e^{-\frac{\mR(Y(b_k))}{T}}\approx \sum_{w_i\leq b_k}
e^{-\frac{d(w_i,b_k)}{T}} e^{-\frac{Y(w_i)}{T}}
\end{equation}

\end{corollary}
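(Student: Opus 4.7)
The plan is to reduce this corollary to the previous one by applying, coordinate by coordinate, the Maslov dequantization identity
\[ \lim_{T\to 0^+}\bigl(-T\log\sum_{i} e^{-y_i/T}\bigr) = \min_i y_i,\]
which is the tropical limit alluded to earlier in \Cref{Section Semantic spaces}. First, I would invoke the preceding corollary to replace $\mR(Y(b_k))$ by its explicit $(\min,+)$ expansion
\[ \mR(Y(b_k)) \;=\; \bigoplus_{w_i\le b_k} d(w_i,b_k)\odot Y(w_i),\]
viewed as a function $\mL \to (-\infty,\infty]$.

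Next, I would unpack $\oplus = \min$ and $\odot = +$ coordinatewise: for each $a_l \in \mL$, the $l$-th coordinate of the right-hand side is
\[ \bigl(\mR(Y(b_k))\bigr)_l \;=\; \min_{w_i\le b_k}\bigl\{\,d(w_i,b_k)+Y(w_i)_l\,\bigr\}.\]
The Maslov identity then gives, coordinatewise,
\[ \min_{w_i\le b_k}\bigl\{d(w_i,b_k)+Y(w_i)_l\bigr\} \;=\; \lim_{T\to 0^+}\!\Bigl(-T\log\!\sum_{w_i\le b_k} e^{-(d(w_i,b_k)+Y(w_i)_l)/T}\Bigr),\]
which is exactly the $l$-th coordinate of the asserted formula (observing that $e^{-(a+b)/T}=e^{-a/T}e^{-b/T}$ and that $-T\log$ turns this sum into a vector-valued expression once we apply it to each coordinate). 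Since the finite sum has only finitely many terms (indexed by $\{w_i : w_i \le b_k\}$), the limit exchanges trivially with taking coordinates; the summands corresponding to $w_i$ with $w_i \not\le b_k$ contribute an $e^{-\infty/T}=0$ factor, justifying the restriction of the sum to $w_i\le b_k$.

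The approximation statement in \eqref{eq text vector} is then an immediate consequence: for $T$ small the logarithmic limit is nearly attained, and exponentiating $-\mR(Y(b_k))/T$ and passing the approximation through the exponential yields the Boltzmann-weighted sum of the word vectors $v_i := e^{-Y(w_i)}$, with weights $e^{-d(w_i,b_k)/T}$. There is no real obstacle here beyond being careful about the possibly infinite coordinates of $Y(w_i)$: an infinite coordinate simply corresponds to a zero term in the exponential sum and a $+\infty$ term in the $\min$, so the two sides match in those entries as well.
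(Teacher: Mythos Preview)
Your proof is correct and follows essentially the same approach as the paper: both invoke the previous corollary's $(\min,+)$ expansion $\mR(Y(b_k))=\bigoplus_{w_i\le b_k} d(w_i,b_k)\odot Y(w_i)$ and then apply the Maslov dequantization identity $\lim_{T\to 0}-T\log(e^{-y/T}+e^{-z/T})=\min\{y,z\}$ coordinatewise. The paper's version is simply terser, while you helpfully spell out the coordinatewise argument and the handling of infinite entries.
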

\begin{proof}
    Recall the identity 
\begin{equation}
\label{eq temperature id}
\lim_{T \to 0} -T \log(e^{-y/T} +e^{-z/T})=\min\{y,z\}.
\end{equation}

Then \cref{eq text to word} implies the result.
\end{proof}
\begin{remark}
    \Cref{eq text vector}  is similar to the expression for a text value vector in terms of word value vectors as computed in the attention module of a transformer.
\end{remark}
\begin{remark}
As already mentioned, it is natural to filter the probabilistic language $\mL$ by the word length of texts. Define $\mL_k$ to be the set of texts on $\mL$ that have word length up to $k$. $\mL_1$ will be the set of words. Each $\mL_k$ inherits the structure of a probabilistic language model from $\mL$. The inclusions define isometric embeddings
$\phi_k:\mL_k \to \mL_{k+1}$.
Then 
we can consider the non-expansive projections $\mR_k:P(\mL_{k+1}) \to P(\mL_{k+1})$ where $\Im(\mR_{k+1})=\widetilde{\phi}_k(P(\mL_k))$.
\end{remark}

\section{Duality between text extensions and restrictions}
\label{section 6}
We have already considered the $(\min,+)$ semifield $\R_{\min}:=((-\infty,+\infty],\oplus_{\min},\odot)$.
To express duality results though, it will be convenient to work with the completed $(\min,+)$ semiring
$\bar{\R}_{\min}:=([-\infty,+\infty],\oplus_{\min},\odot)$ where as before
$s\oplus_{\min} t:=\min\{s,t\}$ and $s\odot t:=s+t$
but we need to further determine how $-\infty$ and $\infty$ interact. 

Indeed we specify that the element $+\infty$ remains absorbing, so $+\infty + s=+\infty$ holds for all
element $s$, and in particular $(+\infty)+(-\infty) = +\infty$.
The definition of $d_{\min}$ in~\eqref{e-def-dmin}
extends to this semiring.
We also need to extend definitions of $P(\mL)$ and $\widehat{P}(\mL)$:

\begin{definition}
Let $P^-(\mL),D)$ be the directed metric polyhedron
\begin{equation}
P^-(\mL):=\{x=(x_1,\dots,x_n) \in \{\R\cup \{\infty,-\infty\}\}^n \text{\textbackslash}\{(\infty,\dots,\infty)\}| x_i \leq x_j + d_{i,j}\} .
\end{equation}
Moreover  let $\widehat{P}^-(\mL),D^t)$ 
be the directed metric polyhedron
\begin{equation}
\widehat{P}^-(\mL):=\{y=(y_1,\dots,y_n) \in \{\R\cup \\\{\infty,-\infty \}^n \text{\textbackslash}\{(\infty,\dots,\infty)\}| y_i \leq y_j + d_{j,i}\}.
\end{equation}
\end{definition}
\begin{remark}
\label{rem absorbing}
Recall  that we added the case of a directed metric which can also take the value $-\infty$ in Remark 1 and now $D$ is such a metric.
Moreover we specified that $+\infty$ is absorbing in $\R_{\min}$. However the Funk metric $D$ is defined using $\max$ so we have to specify further our convention to cover expressions that contain both $\min$ and $\max$. 
For that, we simply use the relation $\max(s,t)=-\min\{-s,-t\}$ to transform any $\max$ in the expression to $\min$ so that we end up with an expression containing only $\min$. Then we compute using the $(\min,+)$ convention that $+\infty$ is absorbing.

Equivalently we can use the same relation to transform any expression to one that contains only $\max$. Then  using $-\infty$ as the absorbing element gives the same answer. 

\end{remark}
Now, analogously to \Cref{prop-11}, if we consider $d_{\min}$  and $d^t_{\min}$ acting on $\{\R\cup \{\infty,-\infty\}\}^n \text{\textbackslash}\{(\infty,\dots,\infty)\}$ then we have 
\begin{proposition}
The polyhedron $P^-(\mL)$ is equal to $\Im(d_{min})=\Fix(d_{\min})$ and the polyhedron $\widehat{P}^-(\mL)$ is equal to 
$\Im(d^t_{min})=\Fix(d^t_{\min})$.
\end{proposition}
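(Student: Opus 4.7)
The plan is to essentially replicate the proof of \Cref{prop-11}, but working in the completed semiring $\bar{\R}_{\min}$ and carefully tracking the absorbing convention spelled out in Remark (rem absorbing). First I would verify that $d_{\min}$ remains a $(\min,+)$ projector when acting on $\{\R\cup\{\infty,-\infty\}\}^n$: the identity $d_{\min}^2=d_{\min}$ is equivalent to $d_{i,k}=\min_j\{d_{i,j}+d_{j,k}\}$, which only involves the entries of $d$ itself (all in $[0,\infty]$), so the original justification via the triangle inequality carries over verbatim. Since $d_{\min}$ is idempotent, one automatically has $\Im(d_{\min})=\Fix(d_{\min})$.

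Next I would show $\Fix(d_{\min})=P^-(\mL)$ by the same chain of equivalences as in \Cref{prop-11}. One direction is free: because $d_{i,i}=0$, the $i=j$ term in $\min_j\{d_{i,j}+x_j\}$ is just $x_i$, so $(d_{\min}x)_i\leq x_i$ always. Hence $d_{\min}x=x$ reduces to $(d_{\min}x)_i\geq x_i$, i.e.\ $d_{i,j}+x_j\geq x_i$ for every $j$, which is exactly the family of inequalities defining $P^-(\mL)$. Conversely, any $x$ satisfying those inequalities satisfies $(d_{\min}x)_i\geq x_i$, and combined with the opposite inequality from $d_{i,i}=0$ we obtain $d_{\min}x=x$. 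The argument for $\widehat{P}^-(\mL)$ is identical after replacing $d$ by $d^t$.

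The only real obstacle is the bookkeeping around $\pm\infty$, which is where the extended setting differs from \Cref{prop-11}. Under the convention that $+\infty$ is absorbing, the sum $d_{i,j}+x_j$ is well defined for all $x_j\in[-\infty,\infty]$: if $d_{i,j}=\infty$ and $x_j=-\infty$, the sum is $+\infty$, not indeterminate. In particular, the inequality $x_i\leq x_j+d_{i,j}$ is vacuous whenever the right-hand side evaluates to $+\infty$, and automatic whenever $x_i=-\infty$, so the defining inequalities of $P^-(\mL)$ are unambiguous. I would also check that $d_{\min}$ preserves the ambient domain, namely that $d_{\min}x\neq(\infty,\ldots,\infty)$ whenever $x\neq(\infty,\ldots,\infty)$: this again follows from $d_{i,i}=0$, which forces $(d_{\min}x)_i\leq x_i$ componentwise, so any finite or $-\infty$ entry of $x$ propagates to $d_{\min}x$.
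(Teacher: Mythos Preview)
Your proposal is correct and follows exactly the route the paper takes: the paper does not even write out a separate proof for this proposition, simply stating it ``analogously to \Cref{prop-11}'' after noting that $d_{\min}$ and $d^t_{\min}$ now act on $\{\R\cup\{\infty,-\infty\}\}^n\setminus\{(\infty,\dots,\infty)\}$. Your additional care with the $\pm\infty$ bookkeeping under the absorbing convention and the check that $d_{\min}$ preserves the ambient domain are the natural details one would fill in, and they are all correct.
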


\begin{definition}
  Define the pair of maps $(A,B)$ as follows. If $y: \mL \to [-\infty,\infty]$   and  
$x: \mL \to [-\infty,\infty]$  then
\begin{equation}
A(y):=d_{\min}(-y),
B(x):=d^t _{\min}(-x)
\end{equation}

Or in coordinates
\begin{equation}
A(y)_i:=\min_j\{d_{i,j}-y_j\},
B(x)_j:=\min_i\{d_{i,j}-x_i\}
\end{equation}
\end{definition}

We also denote by $D^t$ the transpose metric with $D^t(x,y):=D(y,x)$.

In fact we will see that  $A$  and $B$ on non-expansive maps with respect to these metrics.

The pair $(A,B)$ forms an adjunction in the categorical or metric sense:

\begin{proposition}
\label{prop adjunction AB}
If  $x: \mL \to [-\infty,\infty]$ and  $y: \mL \to [-\infty,\infty]$  then 
we have 
\begin{equation}
D(A y,x)=D^t(y,B x)
\end{equation}

\end{proposition}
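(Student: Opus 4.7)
The plan is to expand both sides of the identity directly from the definitions and show that each reduces to the same symmetric double-max expression
\[
\Phi(x,y) \;\coloneqq\; \max_{i,j}\{x_i + y_j - d_{i,j}\}.
\]
The key algebraic step is simply that $-\min_k f(k) = \max_k (-f(k))$, so negating the inner $\min$ in $A$ or $B$ converts it into a $\max$, after which a $\max$-of-$\max$ collapses to a joint $\max$ over the pair $(i,j)$.

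Concretely, for the left-hand side I will unfold
\[
D(Ay,x) \;=\; \max_i\bigl\{x_i - (Ay)_i \;\bigm|\; (Ay)_i \neq \infty\bigr\}
\;=\; \max_i\Bigl\{x_i + \max_j\{y_j - d_{i,j}\}\Bigr\}
\;=\; \Phi(x,y),
\]
using $(Ay)_i = \min_j\{d_{i,j} - y_j\}$ and the fact that $-\min_j\{d_{i,j}-y_j\} = \max_j\{y_j - d_{i,j}\}$. For the right-hand side I will similarly compute
\[
D^t(y,Bx) \;=\; D(Bx,y) \;=\; \max_j\bigl\{y_j - (Bx)_j \;\bigm|\; (Bx)_j \neq \infty\bigr\}
\;=\; \max_j\Bigl\{y_j + \max_i\{x_i - d_{i,j}\}\Bigr\}
\;=\; \Phi(x,y),
\]
which gives the claimed equality.

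The only delicate point is bookkeeping at $\pm\infty$, since the Funk metric is defined with the side condition $(Ay)_i \neq \infty$ (and similarly for $Bx$). Here I invoke \Cref{rem absorbing}: we rewrite any $\max$ as a $-\min$ of negatives so the $(\min,+)$ convention ``$+\infty$ absorbing'' applies uniformly, and conversely when we return to $\max$-form the dual convention ``$-\infty$ absorbing'' applies. Under this convention, an index $i$ with $(Ay)_i = +\infty$ contributes $x_i - \infty = -\infty$ to the max on the left, which is the same as saying that for every $j$ the term $x_i + y_j - d_{i,j}$ evaluates to $-\infty$ in $\Phi(x,y)$; so such indices may be dropped from either form without changing the value. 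The analogous remark applies to indices $j$ with $(Bx)_j = +\infty$ on the right. Hence both restricted maxima equal $\Phi(x,y)$ unambiguously, which proves the adjunction identity.

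I do not expect any deep obstacle: the statement is essentially the tropical incarnation of the standard adjunction between ``add a matrix'' and ``subtract a matrix'', and the entire proof amounts to one exchange of $\max$ and $-\min$, once the extended-arithmetic conventions from \Cref{re extended metric} and \Cref{rem absorbing} are invoked to legitimize the manipulations.
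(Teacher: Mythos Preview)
Your proof is correct and follows essentially the same route as the paper: unfold the definitions, convert the inner $\min$ to a $\max$ via $-\min=\max(-\cdot)$, and use that a double $\max$ (equivalently, the paper's double $\min$) is symmetric in the order of the indices. Your presentation via the symmetric expression $\Phi(x,y)=\max_{i,j}\{x_i+y_j-d_{i,j}\}$ and the explicit discussion of the $\pm\infty$ conventions is slightly more detailed than the paper's one-line chain, but the argument is the same.
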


\begin{proof}
$D(Ay,x)=\max_i\{x_i-\min_j\{d_{i,j}-y_j\}\}=-\min_i\{\min_j\{d_{i,j}-y_j\}-x_i\}=
-\min_j\{\min_i\{d_{i,j}-x_i\}-y_j\}=
\max\{y_j-\min_i\{d_{i,j}-x_i\}\}=
D(B x,y)=D^t(y,B x)$.
\end{proof}

\begin{remark}
\begin{enumerate}
\item
 Note the resemblance of the pair of adjoint maps $(A,B)$ with the Legendre-Fenchel transform where the metric is replaced by the inner product
 of a vector space. 
\item
We note, for purposes of developing intuition, that the pair of adjoint maps $(A,B)$ is similar to a pair of adjoint linear maps $(A,A^*)$ on a vector space with inner product 
 $\<-,-\>$. Indeed in the usual linear algebra case 
$\<Au,v\>=\<u,A^*v\>$. Moreover   we have already seen in \Cref{prop-15}
that $D$ is a kind of tropical inner product. There is a crucial difference though that  $\<v,v\>=|v|^2$ while $D(x,x)=0$. This reflects the fact that to go from usual algebra to tropical algebra we apply $-\log$.
\end{enumerate}
\end{remark}

We now have the following
\begin{proposition}
We have $ABA=A$ and $BAB=B$ which implies that $AB$ and $BA$ are idempotent. 
\end{proposition}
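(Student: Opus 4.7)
The plan is to run the standard Galois connection argument using the adjunction $D(Ay,x)=D^t(y,Bx)$ from the preceding proposition, exactly as one proves that the unit and counit of any adjoint pair satisfy the triangle identities.

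First I would establish the ``unit/counit'' inequalities $y\le BAy$ and $x\le ABx$ (componentwise, with the convention of~\Cref{rem absorbing}). Setting $x=Ay$ in the adjunction gives $D(Ay,Ay)=D^t(y,BAy)$; since $D(v,v)=0$ for any $v$ not identically $+\infty$, this yields $D^t(y,BAy)=D(BAy,y)=0$, which by the definition of the Funk metric forces $y_j\le(BAy)_j$ for every coordinate $j$ where $(BAy)_j\ne\infty$, i.e.\ $y\le BAy$. The dual choice $y=Bx$ yields $x\le ABx$.

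Next I would note that both $A$ and $B$ are \emph{order-reversing} for the componentwise order: if $y\le y'$ then $d_{i,j}-y_j\ge d_{i,j}-y'_j$ for every $j$, and taking the minimum over $j$ (which is monotone) gives $(Ay)_i\ge(Ay')_i$; the argument for $B$ is identical. Combining this antitonicity with the two inequalities above is the whole game. Applying $A$ to $y\le BAy$ reverses the inequality to $ABAy\le Ay$, while instantiating the counit inequality $x\le ABx$ at $x=Ay$ gives $Ay\le ABAy$; these two inclusions force $ABAy=Ay$ for every $y$, that is $ABA=A$. The dual chain (apply $B$ to $x\le ABx$, and instantiate $y\le BAy$ at $y=Bx$) yields $BAB=B$. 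Idempotency then drops out formally: $(AB)^2=A(BA)B=(ABA)B=AB$ and $(BA)^2=B(AB)A=(BAB)A=BA$.

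The only place where one has to be careful is the bookkeeping with $\pm\infty$: the equality $D(v,v)=0$ requires $v\not\equiv+\infty$ (so $A$ and $B$ must be viewed as maps into the ambient spaces where $+\infty$ is absorbing per~\Cref{rem absorbing}), and the passage from $D(BAy,y)=0$ to $y\le BAy$ coordinatewise uses the convention $\max\emptyset=-\infty$ in the definition of the Funk metric. I expect this infinity-handling to be the only subtle point; everything else is the textbook Galois-connection argument.
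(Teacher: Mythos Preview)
Your proof is correct. Both you and the paper deduce $ABA=A$ (and $BAB=B$) purely from the adjunction $D(Ay,x)=D^t(y,Bx)$, but the technical routes differ. The paper works directly at the level of the Funk metric: it substitutes into the adjunction to obtain $D(ABAy,Ay)=D^t(BAy,BAy)=0$ and asserts $D(Ay,ABAy)=D^t(BAy,BAy)=0$, then concludes $ABAy=Ay$ from these two metric equalities. You instead extract the unit/counit inequalities $y\le BAy$ and $x\le ABx$ from the adjunction, observe that $A$ and $B$ are antitone for the componentwise order, and combine these in the textbook antitone Galois-connection pattern. Your version makes the order-theoretic skeleton explicit and is more transparently complete: the paper's second metric identity $D(Ay,ABAy)=D^t(BAy,BAy)$ is not an obvious instance of the adjunction as stated (plugging in gives $D^t(y,BABAy)$, not $D^t(BAy,BAy)$), whereas your antitonicity step $y\le BAy\Rightarrow ABAy\le Ay$ supplies exactly that missing inequality. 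Your attention to the $\pm\infty$ conventions is also well placed.
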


\begin{proof}

This follows from the fact that $D(A y,x)=D^t(y,B x)$.

Indeed 
$D(ABA y,A y)=D^t(BA y,BA y)=0$ 

and 
$D(A y,ABAy)=D^t(BAy,BA y)=0$. 
Therefore $ABA x=A x$. The equality $BAB=B$ is shown analogously.
\end{proof}

Let us now compute the fixed parts of the adjunction $\Fix(AB)$ and $\Fix(BA)$.

\begin{proposition}
We have $\Fix(AB)=\Im(A)=\Im(d_{\min})$ and $\Fix(BA)=\Im(B)=\Im(d_{\min}^t)$.
\end{proposition}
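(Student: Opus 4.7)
The plan is to combine the identity $ABA=A$ (proved just above) with the observation that the negation map $y\mapsto -y$ is an involution on the extended coordinate space.

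First, I would establish $\Fix(AB)=\Im(A)$ by a two-step containment. For one direction, if $x\in\Fix(AB)$, then $x=A(B(x))\in\Im(A)$ trivially. For the reverse direction, take any $x\in\Im(A)$ and write $x=A(y)$ for some $y$; then
\[ AB(x)=AB(A(y))=(ABA)(y)=A(y)=x, \]
using $ABA=A$, so $x\in\Fix(AB)$. The symmetric argument, using $BAB=B$, gives $\Fix(BA)=\Im(B)$.

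The remaining identifications $\Im(A)=\Im(d_{\min})$ and $\Im(B)=\Im(d_{\min}^t)$ then follow by a change of variable. Since $A(y)=d_{\min}(-y)$, we have $\Im(A)=\{d_{\min}(-y):y\}$. The map $y\mapsto -y$ is an involution on $[-\infty,\infty]^n$ (with the convention $-(-\infty)=+\infty$), so as $y$ ranges over admissible inputs, $z:=-y$ ranges over the same set. Hence $\Im(A)=\{d_{\min}(z):z\}=\Im(d_{\min})$, and similarly $\Im(B)=\Im(d_{\min}^t)$.

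I do not anticipate a real obstacle here; the work was done when proving $ABA=A$ and $BAB=B$ via the adjunction $D(Ay,x)=D^t(y,Bx)$ in Proposition~\ref{prop adjunction AB}. The only subtle point is bookkeeping in the extended semiring: one must check that when passing from $y$ to $-y$ the exclusion of the vector $(\infty,\dots,\infty)$ is preserved (equivalently, that $\Im(d_{\min})$ avoids this vector, which follows from $d_{i,i}=0$ so that $d_{\min}(z)_i\le z_i$ for some finite coordinate $z_i$ whenever $z\neq(\infty,\dots,\infty)$). Once this is noted, the proof is just a few lines.
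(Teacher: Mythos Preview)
Your proof is correct and follows essentially the same route as the paper: both deduce $\Fix(AB)=\Im(A)$ from $ABA=A$ via the obvious two-way containment, and symmetrically for $BA$. You are simply more explicit than the paper about the identification $\Im(A)=\Im(d_{\min})$, which the paper leaves implicit in the definition $A(y)=d_{\min}(-y)$; your use of the involution $y\mapsto -y$ and your remark on the excluded vector $(\infty,\dots,\infty)$ are reasonable bookkeeping that the paper does not spell out.
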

\begin{proof}
 This follows from the fact that $ABA=A$. Indeed clearly $\Im(A)\subset \Fix(AB)$. 
Moreover $\Fix(AB)\subset \Im(A)$ since $AB(x)=x$ says that $x\in \Im(A)$. Analogously for $BA$.
\end{proof}

In this case, due to the fact that $d_{\min}$ is an idempotent we can more explicitly compute the maps $A$ and $B$
\begin{proposition}\label{prop-antiisom}
We have that 
\begin{equation}
A:\Im(d_{\min}^t)=\widehat{P}^-(\mL) \to \Im(d_{\min})=P^-(\mL) \text { is given by } A(y)=-y
\end{equation}
and 
\begin{equation}
B:\Im(d_{\min})=P^-(\mL) \to \Im(d_{\min}^t)=\widehat{P}^-(\mL) \text{ is given by }
B(x)=-x.
\end{equation}
\end{proposition}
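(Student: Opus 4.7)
The plan is to directly verify the two formulas $A(y) = -y$ on $\widehat{P}^-(\mL)$ and $B(x) = -x$ on $P^-(\mL)$ by a short computation that exploits (i)~the fixed-point characterizations $\widehat{P}^-(\mL) = \Fix(d^t_{\min})$ and $P^-(\mL) = \Fix(d_{\min})$ supplied by the preceding proposition, (ii)~the triangle-type inequalities that these fixed-point conditions unpack to, and (iii)~the equality $d_{i,i} = 0$, which will witness the minimum in the definitions of $A$ and $B$.

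First, fix $y \in \widehat{P}^-(\mL)$. The condition $d^t_{\min}(y) = y$ reads $y_i = \min_j\{d_{j,i} + y_j\}$, and in particular $y_i \le d_{j,i} + y_j$ for all $i,j$. Swapping the roles of $i$ and $j$ gives $y_j \le d_{i,j} + y_i$, i.e.\ $d_{i,j} - y_j \ge -y_i$. Taking the minimum over $j$ we obtain
\begin{equation*}
A(y)_i = \min_j\{d_{i,j} - y_j\} \ge -y_i.
\end{equation*}
For the matching upper bound, I plug in $j = i$ in the same minimum: since $d_{i,i} = 0$, the $i$th term equals $-y_i$, and therefore $A(y)_i \le -y_i$. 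Combining the two bounds yields $A(y)_i = -y_i$, which is the claimed identity. The argument for $B$ is entirely symmetric: for $x \in P^-(\mL)$, the defining inequality $x_i \le d_{i,j} + x_j$ rearranges to $d_{i,j} - x_i \ge -x_j$, so $B(x)_j \ge -x_j$, and the $i = j$ term in $B(x)_j = \min_i\{d_{i,j} - x_i\}$ equals exactly $-x_j$, giving the reverse inequality.

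The codomain assertions then come for free. Because $A(y) = d_{\min}(-y)$ and $B(x) = d^t_{\min}(-x)$ by definition, the images lie in $\Im(d_{\min}) = P^-(\mL)$ and $\Im(d^t_{\min}) = \widehat{P}^-(\mL)$ respectively, so the formulas $A(y) = -y$ and $B(x) = -x$ also certify that $-y \in P^-(\mL)$ and $-x \in \widehat{P}^-(\mL)$ without any further work.

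The only subtlety, and the point at which one must be careful, is the bookkeeping of $\pm\infty$. By the absorbing convention specified in Remark~7, any term $d_{i,j} - y_j$ with $d_{i,j} = +\infty$ or $y_j = -\infty$ evaluates to $+\infty$ and hence does not affect the minimum, so the inequality $A(y)_i \ge -y_i$ survives these degenerate entries; conversely the equality case relies on the purely finite value $d_{i,i} = 0$, which is always legitimate. One should also check in passing that $-y$ does not collapse onto the excluded point $(\infty,\dots,\infty)$, which is guaranteed by the convention that the dual polyhedron $\widehat{P}^-(\mL)$ similarly excludes the point $(-\infty,\dots,-\infty)$ (the condition being forced on every connected component of the Hasse diagram by the inequalities $y_i \le y_j + d_{j,i}$ together with finiteness somewhere). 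Apart from this infinity-bookkeeping, the proof is a two-line computation.
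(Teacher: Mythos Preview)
Your proof is correct and follows essentially the same route as the paper's: both arguments unwind the fixed-point conditions $d_{\min}^t(y)=y$ and $d_{\min}(x)=x$ into the two-sided inequalities $x_i\le d_{i,j}+x_j$ (equivalently $-x_j\le d_{i,j}-x_i$), and then read off $A(y)=-y$, $B(x)=-x$ using $d_{i,i}=0$ to witness the minimum. Your write-up is in fact more careful than the paper's, particularly in handling the $\pm\infty$ bookkeeping and the excluded-point issue, which the paper's proof passes over silently.
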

\begin{proof}
Consider  $x\in \Im(d_{\min})=Fix(d_{\min})$. We have 
$d_{\min}(x)=x \iff 
x_i=\min_j\{d_{i,j}+x_j\} \iff 
-x_j=\min_j\{d_{i,j}-x_i\} \iff
d_{\min}^t(-x)=-x$. 
Therefore
$B(y)=d_{\min}^t(-x)=-x$.

Analogously consider $y\in \Im(d_{\min}^t)$. We have 
$d_{\min}^t(y)=y \iff
d_{\min}(-y)=-y$. Therefore
$A(y)=d_{\min}(-y)=-y$.
 \end{proof}
\begin{remark}
Note that we can directly check the adjunction of  \Cref{prop adjunction AB} using our explicit formula from 
\Cref{prop-antiisom}. 
Indeed $D(Ax,y)=D(-x,y)=\max_i\{y_i+x_i|x_i\neq -\infty\}$.
Moreover 
$D^t(x,By)=D(-y,x)=\max_i\{x_i+y_i|y_i\neq -\infty\}$.
Since we have a $\max$ expression, $-\infty$ is absorbing (see \Cref{rem absorbing})  and consequently if $x_i=-\infty$ of if $y_i=-\infty$ then $x_i+y_i=-\infty$ therefore both these conditions can be ignored for taking the max and we get 
$D(-x,y)=D(-y,x)$.
\end{remark}
 
The following  theorem has been proved in~\cite{sturmfels} and~\cite{cgq02}
from different points of view and in different generalities. Another approach using category theory was used in Willerton \cite{willerton2013tight}.

Here we take advantage of the explicit computation in \Cref{prop-antiisom} which is true because $d_{\min}^2=d_{\min}$.
 
 \begin{theorem}\label{th-4}
We have that $$A:\Fix(BA)=\Im(B)=\Im(d_{\min}^t)=\widehat{P}^-(\mL) \to \Fix(AB)=\Im(A)=\Im(d_{\min})=P(\mL)$$ and 
$$B:\Fix(AB)=\Im(A)=\Im(d_{\min})=P(\mL) \to \Fix(BA)=\Im(B)=\Im(d_{\min}^t)=\widehat{P}^-(\mL)$$
are anti-isomorphisms. In other words they are one to one and onto and inverses.  They are isometries, namely
$D(A y,A y')=D^t(y,y')$.
Finally we have
\begin{equation}
A(\lambda \odot y)=-\lambda \odot A(y),
\end{equation}
\begin{equation}
A(y \oplus_{\min} y')=A(y)\oplus_{\max} A(y') 
\end{equation}
and
\begin{equation}
A(y \oplus_{\max} y')=A(y)\oplus_{\min} A(y')
\end{equation}
and similarly for $B$.
\end{theorem}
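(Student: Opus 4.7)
The plan is to leverage the explicit computation already obtained in Proposition~\ref{prop-antiisom}, which collapses the abstract adjoint maps $A,B$ into the elementary coordinatewise negation $y\mapsto -y$ on the fixed parts of the adjunction. Once this reduction is made, every assertion of the theorem reduces to an elementary calculation in $\bar\R_{\min}$, provided we handle the $\pm\infty$ conventions carefully as in \Cref{rem absorbing}.

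First I would verify bijectivity and the inverse property: if $y\in\widehat{P}^-(\mL)=\Im(d_{\min}^t)$, then $A(y)=-y\in \Im(d_{\min})=P^-(\mL)$ by \Cref{prop-antiisom}, and applying $B$ gives $B(A(y))=-(-y)=y$. Symmetrically $A(B(x))=x$ for $x\in P^-(\mL)$, so $A$ and $B$ are mutually inverse bijections between $P^-(\mL)$ and $\widehat{P}^-(\mL)$.

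Next I would check the isometry $D(Ay,Ay')=D^t(y,y')$ by direct computation. By definition
\[D(Ay,Ay')=D(-y,-y')=\max_i\{\,y_i - y'_i \mid -y_i\neq \infty\,\},\]
while
\[D^t(y,y')=D(y',y)=\max_i\{\,y_i - y'_i \mid y'_i\neq \infty\,\}.\]
The two side conditions on the $\max$ differ, but, as noted in \Cref{rem absorbing}, when we rewrite these as $\max$-expressions, $-\infty$ is absorbing, so any index violating either side condition contributes $-\infty$ to the max and is irrelevant unless every term is $-\infty$; thus both expressions equal $\max_i\{y_i-y'_i\}$ and agree. The analogous identity $D^t(Bx,Bx')=D(x,x')$ follows by symmetry or from the inverse relation.

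Finally, the tropical anti-linearity properties follow directly from the explicit formula. For scalars $\lambda\in\bar\R_{\min}$,
\[A(\lambda\odot y)=-(\lambda+y)=(-\lambda)+(-y)=(-\lambda)\odot A(y),\]
and using the elementary identities $-\min\{s,t\}=\max\{-s,-t\}$ and $-\max\{s,t\}=\min\{-s,-t\}$ (read coordinatewise, and valid with the absorbing conventions for $\pm\infty$) one obtains
\[A(y\oplus_{\min} y')=A(y)\oplus_{\max} A(y'),\qquad A(y\oplus_{\max} y')=A(y)\oplus_{\min} A(y'),\]
with the analogous statements for $B$ proved identically. The only point requiring attention is the sign-flip behaviour of $\pm\infty$ under the two conventions; this is handled uniformly by the rule described in \Cref{rem absorbing} of converting a mixed $\min/\max$ expression into a pure $\min$-expression (or pure $\max$-expression) before evaluating. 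That bookkeeping is the sole nontrivial step; everything else is a one-line calculation once Proposition~\ref{prop-antiisom} is in hand.
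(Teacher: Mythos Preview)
Your proposal is correct and follows essentially the same route as the paper's own proof: both reduce everything to \Cref{prop-antiisom} (so that $A$ and $B$ become coordinatewise negation on the fixed parts), and then verify bijectivity, the isometry identity, and the anti-linearity formulas by direct one-line computations. If anything, your version is slightly more scrupulous than the paper's in tracking the side conditions $x_i\neq\infty$ versus $y'_i\neq\infty$ in the Funk metric and appealing to the absorbing conventions of \Cref{rem absorbing} to reconcile them; the paper simply writes $D(-y,-y')=\max_i\{y_i-y'_i\}=D(y',y)$ without comment.
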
 

\begin{proof}
From Proposition 16 
\begin{equation}
A:\Im(d_{\min}^t)=\widehat{P}(\mL) \to \Im(d_{\min})=P(\mL) \text { is given by } A(y)=-y
\end{equation}
and 
\begin{equation}
B:\Im(d_{\min})=P(\mL) \to \Im(d_{\min}^t)=\widehat{P}(\mL) \text{ is given by }
B(x)=-x.
\end{equation}
therefore $A$ and $B$ are one on one and onto and inverses.

Moreover $D(Ay,Ay')=D(-y,-y')=
\max_i\{y'_i-y_i\}=D((y',y)=D^t(y,y')$.
Furthermore,
$A(\lambda\odot y)_i=-(\lambda+y_i)=-\lambda\odot A(y)_i$

$A(y \oplus_{\max} y')_i=-\max\{y_i,y'_i\}=
\min\{-y_i,-y'_i\}=A(y)_i\oplus_{\min} A(y')_i.$

$A(y \oplus_{\min} y')_i=-\min\{y_i,y'_i\}=
\max\{-y_i,-y'_i\}=A(y)_i\oplus_{\max} A(y')_i.$

\end{proof}
(Note that $\Im(d_{\min})$ is $(\max,+)$ closed; this follows from \Cref{prop-lattice-completion}.)

We have then that the $(\min+)$ column span $P^-(\mL)$, of $d_{\min}$ is anti isomorphic to the  $(\min,+)$ row span $\widehat{P}^-(\mL)$ of $d_{\min}$   (as $\bar{\R}_{\min}$  modules) by the two inverse maps $A$ and $B$ and moreover they are isometric when considered with the directed metrics $D$ and $D^t$ respectively.
(Recall also that in \Cref{Prop inner product} we saw that $D$ can be considered as tropical inner product.)

We will see an example of this bellow. 
First though we would like to make this map more explicit with respect to the rows and columns of the matrix $d$.

\begin{proposition}
\label{prop duality formula}
Consider $x\in \Im(d_{\min})$. We have that 
\begin{equation}
x=\oplus_j x_j \odot d(-,a_j) \textit{ and  then } 
B(x)=-x=\oplus_j -x_j \odot d(a_j,-).
\label{eq-first}
    \end{equation}
    In particular if $x=d(-,a_k)$ then 
    \begin{equation}
    \label{eq duality 1}
    d(-,a_k)=\oplus_{a_j\leq a_k}d(a_j,a_k) \odot d(-,a_j)
     \end{equation}
    and
    \begin{equation}
    \label{eq duality 2}
    -d(-,a_k)=\oplus_{a_j\leq a_k}-d(a_j,a_k) \odot d(a_j,-)
    \end{equation}
    Analogously for $y\in \Im(d^t)$ we have
    \begin{equation}
    y=\oplus_i y_i \odot d(a_i,-) \textit{ and then } 
    A(y)=-y=\oplus_i -y_i \odot d(-,a_i).
    \end{equation}
    
    In particular if $y=d(a_k,-)$ then
    \begin{equation}
    d(a_k,-)=\oplus_{a_k\leq a_i}d(a_k,a_i) \odot d(a_i,-)
    \end{equation}
    and
    \begin{equation}
    -d(a_k,-)=\oplus_{a_k\leq a_i}-d(a_k,a_i) \odot d(-,a_i).
    \end{equation}

\end{proposition}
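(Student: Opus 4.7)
The plan is to unfold each equality by combining the fixed-point description $\Im(d_{\min}) = \Fix(d_{\min})$ (and its transpose) with the explicit formulas $B(x) = -x$ and $A(y) = -y$ from \Cref{prop-antiisom}. The whole result is essentially just a careful rereading of these facts in the two bases consisting of the columns $d(-,a_j)$ and the rows $d(a_j,-)$ of $d$.

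I would first observe that for $x \in \Im(d_{\min})$, the equation $d_{\min}(x) = x$ reads $x_i = \min_j\{d_{i,j} + x_j\}$, which by definition of $\oplus$ and $\odot$ is exactly $x = \oplus_j x_j \odot d(-,a_j)$; this is already recorded in \Cref{coro-4}. For the second identity I would apply $B$, match \Cref{prop-antiisom} on one side with the coordinate definition $B(x)_i = d^t_{\min}(-x)_i = \min_j\{d_{j,i} - x_j\}$ on the other, and read the right-hand side as a tropical linear combination of the rows $d(a_j,-)$ to obtain $-x = \oplus_j (-x_j) \odot d(a_j,-)$.

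For the particular case $x = d(-,a_k)$, which lies in $\Im(d_{\min})$ by idempotency of $d_{\min}$, I would substitute $x_j = d(a_j, a_k)$. In the first identity, summands with $a_j \not\leq a_k$ carry coefficient $d(a_j,a_k) = +\infty$, and since $+\infty$ is absorbing these terms are neutral for $\oplus_{\min}$, so the summation restricts to $a_j \leq a_k$; this recovers \Cref{eq system Yoneda}. The second identity I would obtain by the same substitution, keeping the summation over $a_j \leq a_k$ for the finite coefficients and managing the remaining $(-\infty)$-coefficient terms via the absorbing conventions of \Cref{rem absorbing}. The dual statements for $y \in \Im(d^t_{\min})$ I would prove by the symmetric argument, replacing $B$ with $A$ and using $A(y) = d_{\min}(-y)$, so that $(-y)_i = \min_j\{d_{i,j} - y_j\}$ is exactly the expansion of $-y$ in the columns $d(-,a_i)$.

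The main obstacle, beyond the routine unpacking of definitions, is the consistent bookkeeping of $\pm\infty$ entries: the statement lives in $P^-(\mL), \widehat{P}^-(\mL)$ over the extended semiring $\bar{\R}_{\min}$ precisely so that the two sides of each identity agree pointwise, and one must invoke the $(\min,+)$ convention that $+\infty$ is absorbing (\Cref{rem absorbing}) each time an infinite coefficient multiplies an infinite coordinate, in order to verify that restricting the index set to $\{j : a_j \leq a_k\}$ really produces the same tropical sum as the unrestricted one appearing in the general formula.
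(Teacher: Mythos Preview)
Your proposal is correct and follows essentially the same approach as the paper's proof: both derive the first identity from $\Im(d_{\min})=\Fix(d_{\min})$, obtain the second by invoking \Cref{prop-antiisom} (equivalently, $d^t_{\min}(-x)=-x$) and reading it as a tropical expansion in the rows $d(a_j,-)$, then specialize to $x=d(-,a_k)$ via \Cref{prop linear system}. Your discussion of the $\pm\infty$ bookkeeping when restricting the index set to $\{a_j\leq a_k\}$ is somewhat more explicit than the paper's, which simply asserts the restriction, but the underlying argument is the same.
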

 \begin{proof}
     We have $x\in \Im(d_{\min}) \iff d_{\min}x=x \iff x=\oplus_j x_j \odot d(-,a_j)$.
From \Cref{prop-antiisom} we then have $d_{\min}^t(-x)=-x$ which is 
equivalent to $-x=\oplus_j -x_i \odot d(a_j,-)$. This proves \eqref{eq-first}.

Now if $x:=d(-,a_k)$ then $x_j=x(a_j)=d(a_j,a_k)=d_{j,k}$. Then from \Cref{prop linear system} we have 
$d(-,a_k)=\oplus_{a_j\leq a_k}d_{j,k} \odot d(-,a_j)$ therefore
from  \eqref{eq-first} it follows that 
$ -d(-,a_k)=\oplus_{a_j\leq a_k}-d(a_j,a_k) \odot d(a_j,-)$.
The proof for $y\in \Im(d_{\min}^t)$ and for $y:=d(a_k,-)$ is analogous.
 \end{proof}

\begin{remark}
    Note that all results in this section hold for a general directed metric space.
\end{remark}

\begin{example}
We now show a simple example of a probabilistic language model $(\mL,d_1)$ along with $P(\mL)$ and $\widehat{P}(\mL)$. We will also  see the correspondence of extremal rays with connected lower sets for $P(\mL)$ and connected upper sets for $\widehat{P}(\mL)$ as described in \Cref{th-2}. 

We will actually consider the corresponding polyhedral cones $Q(\mL)$ and $\widehat{Q}(\mL)$ and show in the figures  the polyhedra $Q_0(\mL)$ and $\widehat{Q}^0(\mL)$ (\Cref{def Q_0}) which are their intersections with the unit simplex. 

We will further illustrate the duality between 
completions $P^-(\mL)$ and  $\widehat{P}^-(\mL)$ by making a uniform approximation of infinities in $d$ with a big number $M$.

Indeed consider the corpus to be $\mL:=\{\text{red, colour, red colour}\}$.
Denote ``red'' by ``r'', ``colour'' by ``c'' and ``red colour'' by ``rc''.

Let the metric $d$ be given by eq.~\eqref{e-exd}:
 \begin{align}
 d = \bordermatrix{ & r  & c & rc \cr
   r& 0 & \infty& \log 2\cr
   c& \infty & 0 &\log 3 \cr
   rc& \infty & \infty & 0
 }
 \label{e-exd}
 \end{align}
 Recall that in general $e^{-d_{i,j}}=\Pr(a_j|a_i)$
and thus the corresponding matrix of probabilities of extensions is
 \begin{align}
 \Pr= \bordermatrix{ & r  & c & rc \cr
   r& 1 & 0 & \frac{1}{2}\cr
   c& 0 & 1 &\frac{1}{3} \cr
   rc& 0 & 0 & 1
 }
 \label{e-exPr}
 \end{align}
 
 This means for example that $\Pr(rc|r)=\frac{1}{2}$ and $\Pr(c|r)=0$, while $P(r|r)=1$.

Recall that the equations for $P(\mL)=\Im(d_{\min})$ are (\Cref{def P}): $x_i\leq d_{i,j}+x_j$. 

Letting $z_i:=e^{-x_i}$ we have that the equations for 
$Q(\mL)$ are (\Cref{def Q}): $z_i\geq e^{-d_{i,j}}z_j$.

Therefore in our case we get that the polyhedral cone $Q(\mL)$ is defined by  inequalities 
\begin{equation}
z_1\geq \frac{1}{2}z_3, z_2\geq \frac{1}{3}z_3, z_1\geq 0, z_2\geq 0, z_3 \geq 0.
\end{equation}
The intersection  $Q_0(\mL)$ of $Q(\mL)$ with the unit simplex is shown on the right in 
\Cref{fig-example}. Notice that it has three vertices. 

Analogously, the equations for 
$\widehat{P}(\mL)=\Im(d^t_{\min})$ (\Cref{def P}) are  $y_j\leq d_{i,j}+y_i$. 

Letting $u_i:=e^{-y_i}$ we have that the equations for 
$\widehat{Q}(\mL)$  are (\Cref{def P}) $u_j\geq e^{-d_{i,j}}u_i$.

Therefore in our case we get that the polyhedral cone $\widehat{Q}(\mL)$ is defined by  inequalities 
\begin{equation}
 u_3\geq \frac{1}{2}u_1, u_3\geq \frac{1}{3}u_2,  u_1\geq 0, u_2\geq 0, u_3 \geq 0.
\end{equation}

The intersection  $\widehat{Q}_0(\mL)$ of  $\widehat{Q}(\mL)$ with the unit simplex is shown on the left in \Cref{fig-example}.
Notice that it has four vertices.

Denote the lower set generated by ``$a$'' by $(a)_l$ and the upper set  generated by $a$ by $(a)_u$.

From \Cref{th-2}, extremal rays of $Q(\mL)$ correspond to \textit{connected lower sets} of $\mL$. There are three and they are all principal:
$(r)_l=\{r\},(c)_l=\{c\}, (rc)_l=\{r,c,rc\}$. 
These give rise to the three vertices of 
$Q_0(\mL)$ as we can see in Fig 1.
(Note that $(r,c)_l$ is not connected so it does not correspond to an extremal ray of $Q(\mL)$.

From \Cref{cor 4}, extremal rays of $\widehat{Q}(\mL)$ correspond to \textit{connected upper sets} of $\mL$. The principal ones are 
$(r)_u=\{r,rc\},(c)_u=\{c,rc\}, (rc)_u=\{rc\}$ and a non-principal one $(r,c)_u=\{r,c,rc\}$.
This extremal ray is not in the image of the Yoneda embedding. 
The corresponding four vertices of $\widehat{Q}_0(\mL)$ are shown on the left in \Cref{fig-example}.

Notice that the number of extremal rays of $P(\mL)$ and $\widehat{P}(\mL)$ are actually  different.

Now note that in general the $\R_{\min}$ module $P(\mL)=\Im(d_{\min})$ is a geometric object. In fact $Q(\mL)$ is a polyhedral cone and $Q_0(\mL)$ is a polyhedron. However the 
$\bar{\R}_{\min}$ module $P^-(\mL)$ is not obviously geometric. In order to approximate with a geometric object and be able 
to visualize the duality between the 
 $P^-(\mL)$ and $\widehat{P}^-(\mL)$
it is natural to ``truncate'' the
matrix $d$, replacing the $+\infty$ entries by a sufficiently large number
$M$, leading to the new matrix:
\begin{align}
d^M = \bordermatrix{ & r  & c & rc \cr
  r& 0 & M& \log 2\cr
  c& M & 0 &\log 3 \cr
  rc& M & M & 0
}
\label{e-exdbigM}
\end{align}
\textit{This matrix is sill a directed metric, satisfying $(d^M_{\min})^2=d^M_{\min}$ but it does not any more
represent a probabilistic language model.}

We can consider $P_M(\mL):=\Im(d_{\min}^M)$ as in  (\Cref{def P}) and $Q_M(\mL)$ as in (\Cref{def Q}).
Then the intersection $Q_{M,0}(\mL)$ of $Q_M(\mL)$ with the unit simplex is depicted in \Cref{fig-example-bigM} on the right.

Moreover we consider
$\widehat{P}_M(\mL):=\Im((d^M_{\min})^t)$ as in (\Cref{def P}) and $\widehat{Q}_M(\mL)$ as in 
(\Cref{def Q}).
Then the intersection $\widehat{Q}_{M,0}(\mL)$ of $\widehat{Q}_M(\mL)$ with the unit simplex is depicted in \Cref{fig-example-bigM} on the right.

Observe that the duality preserves the number of extreme points inside the interior of the simplex, and that the sets of~\Cref{fig-example-bigM}
converge to the sets of~\Cref{fig-example} as $M\to \infty$. 

Also note that the duality map between
$P(\mL)$ an $\widehat{P}(\mL)$ is $x_i\to y_i=-x_i$ for $i=1,2,3$. 
We also have $z_i:=e^{-x_i}$ and $u_i:=e^{-y_i}$
Therefore the map between 
the polyhedra $Q(\mL)$ and $\widehat{Q}(\mL)$ is $z_i\to u_i=\frac{1}{z_i}$ for $i=1,2,3$.

\end{example}

\colorlet{mygreen}{gray!50!black}

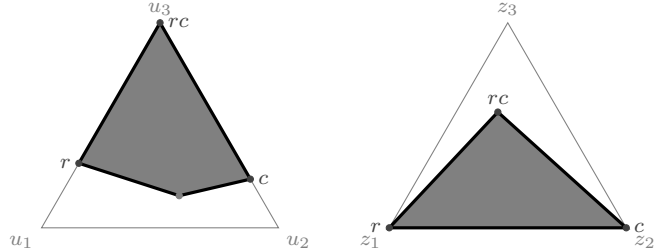
\begin{figure}[htbp]
\begin{center}\footnotesize
  \begin{minipage}[b]{0.25\textwidth}
    \renewcommand{\baryx}{u_1}
\renewcommand{\baryy}{u_2}
\renewcommand{\baryz}{u_3}

\begin{tikzpicture}%
[scale=0.45,>=triangle 45
,vtx/.style={mygreen},
ray/.style={myred}]
\equilateralnobars{7}{100};

\def\myscale{1.8}
\def\bigM{4}

\barycenter{e1}{\expo{0}}{0}{0};
\barycenter{e2}{0}{\expo{0}}{0};
\barycenter{e3}{0}{0}{\expo{0}};
\barycenter{A3}{\expo{-100}}{\expo{-100}}{\expo{0}};
\barycenter{A1}{\expo{0}}{\expo{-100}}{\expo{-2/\myscale}};
\barycenter{A2}{\expo{-100}}{\expo{0}}{\expo{-3/\myscale}};
\barycenter{PSEUDO}{\expo{2/\myscale}}{\expo{3/\myscale}}{\expo{0}};
\filldraw[gray,draw=black,opacity=0.5,very thick] (A1) -- (e3) -- (A2) -- (PSEUDO) -- cycle;

\filldraw[vtx] (A1) circle (0.75ex) node[left] {$r$}; %
\filldraw[vtx] (A2) circle (0.75ex) node[right] {$c$}; %
\filldraw[vtx] (e3) circle (0.75ex) node[above,right] {$rc$}; %
\filldraw[gray] (PSEUDO) circle (0.75ex) node[above] {};
\end{tikzpicture}
\end{minipage}\hskip 5em
  \begin{minipage}[b]{0.25\textwidth}
        \renewcommand{\baryx}{z_1}
\renewcommand{\baryy}{z_2}
\renewcommand{\baryz}{z_3}

\begin{tikzpicture}%
[scale=0.45,>=triangle 45
,vtx/.style={mygreen},
ray/.style={myred}]
\equilateralnobars{7}{100};
\def\myscale{1.8}
\barycenter{e1}{\expo{0}}{0}{0};
\barycenter{e2}{0}{\expo{0}}{0};
\barycenter{e3}{0}{0}{\expo{0}};
\barycenter{A3}{\expo{-2/\myscale}}{\expo{-3/\myscale}}{\expo{0}};
\barycenter{A1}{\expo{0}}{\expo{-100}}{\expo{-2/\myscale}};
\barycenter{A2}{\expo{-100}}{\expo{0}}{\expo{-3/\myscale}};
\barycenter{PSEUDO}{\expo{2/\myscale}}{\expo{3/\myscale}}{\expo{0}};
\filldraw[gray,draw=black,opacity=0.5,very thick] (e1) -- (e2) -- (A3) -- cycle;
\filldraw[vtx] (e1) circle (0.75ex) node[left] {$r$}; %
\filldraw[vtx] (e2) circle (0.75ex) node[right] {$c$}; %
\filldraw[vtx] (A3) circle (0.75ex) node[above] {$rc$};%
\end{tikzpicture}
\end{minipage}
\end{center}
\caption{The cross section $\widehat{Q}_0(\mL)$ of the polyhedral cone $\hat{Q}(\mL)$ arising from the metric of $d$ (left).
  Every vector $d(r,-), d(c,-),d(rc,-)$ determines an
  extreme point of the cross section, denoted by $r$, $c$, or $rc$. There is a fourth extreme point (shown in gray) corresponding to a non-principal upper set.
  The cross section $Q_0(\mL)$ (right). There are three extreme points, which
  correspond to the vectors $d(-,r), d(-,c), d(-,rc)$.}
\label{fig-example}
\end{figure}
\begin{remark}
    Note that approximating uniformly infinities in the matrix $d$
    with a big number $M$ can be done in general.
    This is helpful since the duality theorem is easier to illustrate
    for matrices with finite entries.
Of course, the proof of the duality theorem in section \ref{section 6} goes through with coefficients in $(-\infty,\infty)$.  
(Note also that the Develin-Sturmfels version \cite{sturmfels} of the adjunction between the tropical column span and the tropical row span is exactly about matrices with finite entries, whereas the version of \cite{cgq02} deals with matrices with possibly infinite entries.)

However as we already saw in the example  replacing $\infty$ with $M$ in a directed metric  $d$ that defines a probabilistic language model gives a metric that is no longer a language model. The limit of polyhedra for $M\to \infty$ will give the polyhedra for the original metric.

\end{remark}

\begin{figure}[htbp]
  \begin{center}
    \begin{minipage}[b]{0.25\textwidth}
      \renewcommand{\baryx}{u_1}
\renewcommand{\baryy}{u_2}
\renewcommand{\baryz}{u_3}

\begin{tikzpicture}%
[scale=0.45,>=triangle 45
,vtx/.style={mygreen},
ray/.style={myred}]
\equilateralnobars{7}{100};
\def\myscale{1.8}
\def\bigM{3}

\barycenter{e1}{\expo{0}}{0}{0};
\barycenter{e2}{0}{\expo{0}}{0};
\barycenter{e3}{0}{0}{\expo{0}};

\barycenter{A1}{\expo{0}}{\expo{-\bigM}}{\expo{-2/\myscale}};
\barycenter{A2}{\expo{-\bigM}}{\expo{0}}{\expo{-3/\myscale}};
\barycenter{A3}{\expo{-\bigM}}{\expo{-\bigM}}{\expo{0}};

\barycenter{B1}{\expo{0}}{\expo{\bigM}}{\expo{\bigM}};
\barycenter{B2}{\expo{\bigM}}{\expo{0}}{\expo{\bigM}};
\barycenter{B3}{\expo{2/\myscale}}{\expo{3/\myscale}}{\expo{0}};

\filldraw[gray,draw=black,opacity=0.5,very thick] (A1) -- (B3) -- (A2) -- (B1) -- (A3) -- (B2) -- cycle;

\filldraw[vtx] (A1) circle (0.75ex) node[left] {$r$}; %
\filldraw[vtx] (A2) circle (0.75ex) node[right] {$c$}; %
\filldraw[vtx] (A3) circle (0.75ex) node[above,right] {$rc$}; %

\filldraw[gray] (B1) circle (0.75ex) node[left] {}; 
\filldraw[gray] (B2) circle (0.75ex) node[right] {};
\filldraw[gray] (B3) circle (0.75ex) node[above,right] {}; 
\end{tikzpicture}
\end{minipage}\hskip 5em
  \begin{minipage}[b]{0.25\textwidth}        \renewcommand{\baryx}{z_1}
\renewcommand{\baryy}{z_2}
\renewcommand{\baryz}{z_3}
\begin{tikzpicture}%
[scale=0.45,>=triangle 45
,vtx/.style={mygreen},
ray/.style={myred}]
\equilateralnobars{7}{100};

\def\myscale{1.8}
\def\bigM{3}

\barycenter{e1}{\expo{0}}{0}{0};
\barycenter{e2}{0}{\expo{0}}{0};

\barycenter{e3}{0}{0}{\expo{0}};
\barycenter{A1}{\expo{0}}{\expo{-\bigM}}{\expo{-\bigM}};
\barycenter{A2}{\expo{-\bigM}}{\expo{0}}{\expo{-\bigM}};
\barycenter{A3}{\expo{-2/\myscale}}{\expo{-3/\myscale}}{\expo{0}};

\barycenter{B1}{\expo{\bigM}}{\expo{0}}{\expo{3/\myscale}};
\barycenter{B2}{\expo{0}}{\expo{\bigM}}{\expo{2/\myscale}};
\barycenter{B3}{\expo{0}}{\expo{0}}{\expo{-\bigM}};
\filldraw[vtx] (A1) circle (0.75ex) node[left] {$r$}; %
\filldraw[vtx] (A2) circle (0.75ex) node[right] {$c$}; %
\filldraw[vtx] (A3) circle (0.75ex) node[above] {$rc$};%

\filldraw[gray] (B1) circle (0.75ex) node[left] {}; %
\filldraw[gray] (B2) circle (0.75ex) node[right] {}; %
\filldraw[gray] (B3) circle (0.75ex) node[above] {};%

\filldraw[gray,draw=black,opacity=0.5,very thick] (A2) -- (B3) -- (A1) -- (B1) -- (A3) -- (B2) --cycle ;
\end{tikzpicture}
\end{minipage}
\end{center}
\caption{The duality between the columns and row spaces of metric matrices (\Cref{prop-antiisom} and~\Cref{th-4}) illustrated. On the right $\Im(d^M_{\min})$ and on the left $\Im((d^M_{\min})^t)$}
\label{fig-example-bigM}
\end{figure}
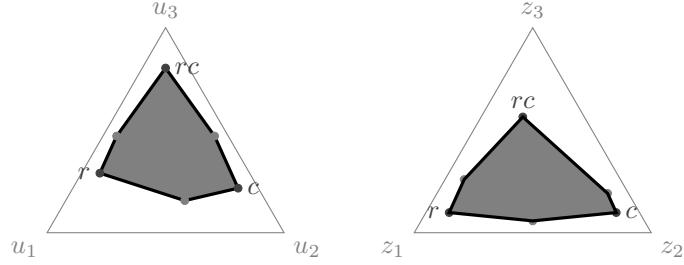

\begin{remark}
\label{re word duality}
We have said that we can 
encode the meaning of $a_k$ by $d(-,a_k)$. If $a_k$ is a word 
this contains very little information.
We already addressed the solution to this problem in section \Cref{Section Semantic spaces}. Another solution is, using the duality between $P(\mL)$ and $\widehat{P}(\mL)$ explained previously and in particular \Cref{eq duality 1}, \Cref{eq duality 2}.
\end{remark}

\section{Extremal Rays in terms of text vectors}

 We have seen in \Cref{th-1} that the original texts in $\mL$, mapped by the Yoneda isometric embedding $Y:\mL \to P(\mL)$, appear as extremal rays (corresponding to principal lower sets) in the polyhedron $P(\mL)$ and the polyhedral cone $Q(\mL)$. 
 As proven in \Cref{prop-7},
 there are in general many other extremal rays of $P(\mL)$ corresponding to connected lower sets of $\mL$. 
Nevertheless extremal rays in the image of $Y$,  $(\min,+)$ generate $P(\mL)$ as we have already seen in \Cref{coro-4} where we showed that
\begin{equation}
x\in P(\mL) \iff
x=\oplus_j D(Y(a_j),x) \odot Y(a_j).
\end{equation}

Recall that we think of $Y(a_k):=d(-,a_k)\in P(\mL)$ as encoding the meaning of text $a_k$ according to the statistical semantics principal.

Recall \Cref{def saturation graph}, where we introduced the saturation graph $S(x)$ for $x\in P(\mL)$. We shall
also consider the {\em undirected} saturation graph, obtained by forgetting the orientation
of the edges in $S(x)$.

\begin{proposition}
\label{prop terminal}
A vector $x \in P(\mL)$ can be written as a tropical linear combination of terminal elements of its saturation graph $S(x)$.
Specifically if $b_1,\dots b_k$ are the terminal elements in $S(x)$, then 
\begin{equation}
  x=\oplus_j D(Y(b_j),x) \odot Y( b_j).
\label{e-terminal}
\end{equation}
Moreover, if the saturation graph of $x$ has $s$ undirected connected components, this vector belongs to a face of $P(\mL)$ of dimension $s$.

\end{proposition}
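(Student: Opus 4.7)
The plan is in two stages. Stage~1 derives the representation formula~\eqref{e-terminal}; Stage~2 identifies the minimal face of $P(\mL)$ containing $x$ and matches its dimension to the component count of $S(x)$.

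Stage~1 starts from~\Cref{coro-4}, which combined with~\Cref{prop-5} gives $x=\bigoplus_{j}x_j\odot Y(a_j)$ summed over every $a_j\in\mL$ (using $x_j=D(Y(a_j),x)$). The key observation is that each non-trivial arc $(a_i,a_j)\in E(x)$ lets us absorb the $a_i$-summand into the $a_j$-summand: from $x_i=x_j+d(a_i,a_j)$ and the triangle inequality $d(a_k,a_j)\leq d(a_k,a_i)+d(a_i,a_j)$ I obtain $x_j+d(a_k,a_j)\leq x_i+d(a_k,a_i)$ for every $a_k$, which is the pointwise inequality $x_j\odot Y(a_j)\leq x_i\odot Y(a_i)$. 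In the $(\min,+)$ sum the dominated term $x_i\odot Y(a_i)$ is therefore redundant and may be dropped. I iterate this reduction in reverse topological order on the condensation of $S(x)$: any non-terminal strongly connected component has an outgoing arc whose target remains in the reduced sum, so its vertices may be eliminated. Under the probabilistic assumption $d\geq 0$, a directed cycle in $S(x)$ has total weight $0$, hence each of its edges has weight $0$, the coordinates $x_{i_\ell}$ along the cycle coincide, and by the triangle inequality applied in both directions so do the vectors $Y(a_{i_\ell})$. Vertices within a single SCC thus contribute identical tropical summands, one representative per terminal SCC suffices, and~\eqref{e-terminal} follows with the index set equal to the terminal vertices $b_1,\dots,b_k$.

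For Stage~2, the minimal face $F$ of $P(\mL)$ containing $x$ in its relative interior is cut out by the tight constraints at $x$: the linear equations $y_i=y_j+d_{i,j}$ for each non-trivial arc $(a_i,a_j)\in E(x)$, together with the conditions $y_i=+\infty$ for $a_i\notin\Supp(x)$ (which freeze those coordinates). Inside the affine slice defined by the infinity conditions, a point of $F$ is determined by fixing one free parameter per undirected connected component of $S(x)$ and propagating via the difference equations along a spanning tree of that component; the remaining arcs are automatically consistent, as witnessed by $x$ itself (equivalently by the identity $d_{\min}^2=d_{\min}$). Counting these free parameters yields $\dim F=s$.

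The main obstacle is Stage~1 in the presence of directed cycles: the probabilistic hypothesis $d\geq 0$ forces any cycle to degenerate (all edge weights zero, all $Y(a_{i_\ell})$ equal) so the elimination is immediate, but for a general directed metric space one would have to invoke the critical-graph theory of tropical idempotent projectors to justify the reduction rigorously.
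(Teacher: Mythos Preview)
Your proposal is correct and follows essentially the same approach as the paper. For Stage~1 the paper chases a path from each non-terminal index $i$ through saturated edges until it hits a terminal node $j_n$, then uses the triangle inequality to get $x_i\geq d_{i,j_n}+x_{j_n}$; your domination inequality $x_j\odot Y(a_j)\leq x_i\odot Y(a_i)$ along a saturated arc is the same mechanism packaged one edge at a time, and your reverse-topological elimination is just an alternative bookkeeping for the same chain argument. For Stage~2 both you and the paper count degrees of freedom by undirected connected components of $S(x)$, exactly as in the proof of \Cref{prop-char-extreme}. Your handling of zero-weight cycles is in fact more careful than the paper's (which simply asserts the chain terminates because ``there are no cycles of positive weight''); the paper relies implicitly on the poset structure of the probabilistic model to exclude nontrivial cycles, while you make this explicit and flag the general-metric case as needing extra care.
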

\begin{proof}
  Let $x\in P(\mL)$. If $i$ is not terminal in the saturation graph $S(x)$, then, there is a $j_1\neq i$ such that $x_i=d_{i,j_1}+x_{j_1}$.
Similarly, if $j_1$ is not terminal, there is  $j_2\neq j_1$ such that 
$x_{j_1}=d_{j_1,j_2}+x_{j_2}$  and thus  $x_i=d_{i,j_1}+d_{j_1,j_2}+x_{j_2}$. Continuing this way we get 
 $x_i=d_{i,j_1}+d_{j_1,j_2}+\dots d_{j_{n-1},j_n}+x_{j_n}$ where $a_{j_n}$ is a terminal element of the saturation graph $S(x)$. 
 
We know that this stops at a terminal element because there are no cycles of positive weight
in the digraph of $d$.
(This is the graph that has an edge from $i$ to $j$ with weight $d_{i,j}$ when $d_{i,j}$ is not infinity.)

Using the triangular inequality, we deduce that $x_i\geq \oplus_{j \in T} d_{i,j} \odot x_{j}$,
where the sum is taken over the set $T$ of terminal nodes of $S(x)$,
and so, $x\geq \oplus_{j \in T} D(Y(a_i),Y(a_{j})) + D(Y(a_{j}),x)$.
Conversely, by definition of $P(\mL)$, $x\leq \oplus_k d_{ik} \odot x_k =
\oplus_{k} D(Y(a_i),Y(a_k)) \odot D(Y(a_k),x)$ where now the sum is taken
over all the indices $k$ (possibly non terminal). This entails that~\eqref{e-terminal} holds.

Finally, arguing as in the proof of~\Cref{prop-char-extreme},
we get the rank of the family of active constraints at point $x$ is given by
the number $s$ of connected component of the undirected saturation graph of $x$.
Hence, $x$ belongs to a face of dimension $s$.
\end{proof}
\begin{remark}
Note that \Cref{prop terminal} holds for a general directed metric space $\mL$ and not just for $(\mL,d)$ a probabilistic language model.
\end{remark}
We can now find explicit $(min,+)$ expressions for generators of extremal rays corresponding to non principal lower sets.
\begin{proposition}
Let $\mL$ be a probabilistic language model with the empty text $a_0$ included. Let $x$ denote an extremal ray corresponding to the lower set generated by $\{b_1,\dots b_n\}$. Then 
\begin{equation}
x=\oplus_i \logPr(b_i) \odot  Y(b_i).
\end{equation}
\end{proposition}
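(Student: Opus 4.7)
The plan is to compute the right-hand side coordinate-wise and match it with the explicit description of the extremal ray given by \Cref{co coord extremal}. Throughout, $\Pr(b_i)$ should be read as $\Pr(b_i|a_0) = P_{b_i}$, which is well defined since the empty text $a_0$ is assumed to lie in $\mL$.

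First I would record two simple observations. The lower set $C$ generated by $\{b_1,\dots,b_n\}$ is exactly $C = \bigcup_{i=1}^n (b_i]$, where $(b_i] := \{a_k \in \mL : a_k \leq b_i\}$ is the principal lower set at $b_i$; this is immediate from the definition of ``generated lower set.'' Second, since $a_0 \in \mL$, for any $a_k \leq b_i$ the main assumption~\eqref{e-main-assum} applied to $a_0 \leq a_k \leq b_i$ yields $\Pr(b_i) = \Pr(a_k)\Pr(b_i|a_k)$, so that
\[
d(a_k,b_i) = -\log \Pr(b_i|a_k) = \log \Pr(a_k) - \log \Pr(b_i) \qquad \text{whenever } a_k \leq b_i,
\]
and $d(a_k,b_i) = \infty$ otherwise.

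Next I would compute the $k$-th coordinate of $u := \oplus_i \log\Pr(b_i)\odot Y(b_i)$. By definition of $Y(b_i) = d(-,b_i)$ and of the tropical operations,
\[
u_k = \min_{i=1,\dots,n}\bigl\{\log\Pr(b_i) + d(a_k,b_i)\bigr\}.
\]
Using the identity above, each summand with $a_k \leq b_i$ equals $\log\Pr(a_k)$, while each summand with $a_k \not\leq b_i$ equals $+\infty$. Therefore $u_k = \log\Pr(a_k)$ if $a_k \in C$, and $u_k = +\infty$ if $a_k \notin C$.

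Finally I would invoke \Cref{co coord extremal}: the extremal ray of $Q(\mL)$ associated with the lower set $C$ is generated by the vector with entries $1/P_k$ on $C$ and $0$ off $C$. Passing back through $-\log$, this is the ray in $P(\mL)$ generated by the vector with entries $\log P_k = \log\Pr(a_k)$ on $C$ and $+\infty$ off $C$, which is precisely $u$. Hence $x$ and $u$ generate the same extremal ray, and after normalising the representative we obtain the claimed identity $x = \oplus_i \log\Pr(b_i)\odot Y(b_i)$.

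The only mildly subtle step is the second one, where one has to check the cancellation $\log\Pr(b_i) + d(a_k,b_i) = \log\Pr(a_k)$ uniformly for all $i$ with $a_k \leq b_i$, so that the $\min$ collapses to a value independent of the chosen $b_i$; this is where the multiplicativity of conditional probabilities (the main assumption) enters essentially, and it is the only place where we use the hypothesis that $\mL$ is a probabilistic language model rather than a general directed metric space.
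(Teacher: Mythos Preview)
Your proof is correct. It differs from the paper's argument in an interesting way: the paper first invokes \Cref{prop terminal} to write the extremal ray as $x=\oplus_j x(b_j)\odot Y(b_j)$ over the terminal elements $b_j$ of the saturation graph of $x$, and then reads off the coefficients $x(b_j)=\log\Pr(b_j)$ from \Cref{co coord extremal}. You bypass \Cref{prop terminal} entirely and instead compute the candidate right-hand side coordinate by coordinate, observing that $\log\Pr(b_i)+d(a_k,b_i)$ collapses to $\log\Pr(a_k)$ whenever $a_k\leq b_i$, so that the tropical sum over $i$ gives exactly the vector described in \Cref{co coord extremal}. Your route is more elementary and self-contained, and it makes transparent where the multiplicativity assumption (i.e.\ the probabilistic language model hypothesis, as opposed to a general directed metric) is actually used; the paper's route has the advantage of situating the formula within the general saturation-graph framework, so one sees that the $b_i$ appearing are precisely the terminal nodes. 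Both arguments ultimately rest on \Cref{co coord extremal} to identify the specific representative of the ray.
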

\begin{proof}

From \Cref{prop terminal} we have
$$x=\oplus_jx(b_j)\odot Y(b_j).$$
From \cref{co coord extremal} we have 
$x(b_j)=-\log\frac{1}{\Pr(b_j)}=\logPr(b_j)$.
This proves the result.
\end{proof}
\begin{remark}
We point out that  the terminal elements 
$b_1,\dots b_k$ of $S(x)$
function like an orthonormal basis with respect to $D$, namely 
\begin{equation}
D(Y(b_i),Y(b_j))=d(b_i,b_j)=\infty \text{ if } i\neq j.
\end{equation}

So for example if we know that 
there are $\lambda_j$ such that 

$x=\oplus_j \lambda_j \odot Y(b_j)$ 
Then 
$D(Y(b_i),x)=\oplus_j \lambda_j \odot D(Y(b_i),Y(b_j))=\lambda _i$.
\end{remark}

\begin{proposition}
Let $T\in [0,\infty)$ be a parameter which will be called temperature. Consider $x\in P(\mL)$ an extremal ray and let
$x=\oplus_j D(Y(b_j),x) \odot Y( b_j)$ where $b_j$ are the terminal elements of the saturation graph S(x). 
Let $v_j\coloneqq e^{Y( b_j)}$.
Then we have 
\begin{equation}
x =\lim_{T \to 0} -T \log (\sum_j e^{-D(Y(b_j),x)/T} e^{Y( b_j)})
\end{equation}

and therefore, for small $T$
\begin{equation}
e^{-x/T}\approx \sum_j e^{-D(Y(b_j),x)/T}v_j
\end{equation}

\end{proposition}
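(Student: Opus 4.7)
The plan is to reduce the statement to the single-coordinate log-sum-exp identity invoked in~\Cref{co text vector}, namely
\[
\min\{y_1,\dots,y_N\}=\lim_{T\to 0}\bigl(-T\log\sum_j e^{-y_j/T}\bigr),
\]
which follows by induction from the two-term identity~\eqref{eq temperature id} already recorded in the paper. The starting point is the previous proposition (\Cref{prop terminal}), which guarantees the tropical expansion $x=\bigoplus_j D(Y(b_j),x)\odot Y(b_j)$ in terms of the terminal vertices $b_j$ of the saturation graph $S(x)$, so I may treat this identity as given.

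First I would unwind the tropical identity coordinate by coordinate. For each index $i$, writing $\lambda_j\coloneqq D(Y(b_j),x)$ and $Y(b_j)_i = d(a_i,b_j)$, the expression $x=\bigoplus_j \lambda_j\odot Y(b_j)$ becomes
\[
x_i=\min_j\bigl(\lambda_j + Y(b_j)_i\bigr).
\]
Next, I would apply the $N$-term log-sum-exp identity above with $y_j\coloneqq \lambda_j+Y(b_j)_i$ to obtain
\[
x_i=\lim_{T\to 0}\Bigl(-T\log\sum_j e^{-(\lambda_j+Y(b_j)_i)/T}\Bigr)
=\lim_{T\to 0}\Bigl(-T\log\sum_j e^{-\lambda_j/T}\,e^{-Y(b_j)_i/T}\Bigr),
\]
which is exactly the $i$-th coordinate of the claimed limit formula. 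Assembling the coordinates and then exponentiating gives, for small $T$, the approximation $e^{-x/T}\approx \sum_j e^{-D(Y(b_j),x)/T}\,e^{-Y(b_j)/T}$, matching the Boltzmann-weighted linear combination advertised in the statement (with $v_j\coloneqq e^{-Y(b_j)/T}$ in the style of~\Cref{co text vector}).

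The one delicate point, and the main obstacle, is the treatment of infinite entries. The coordinates $Y(b_j)_i=d(a_i,b_j)$ equal $+\infty$ whenever $a_i\not\leq b_j$, so the corresponding terms $e^{-Y(b_j)_i/T}$ vanish with the convention $e^{-\infty}=0$; I would verify that this is consistent with the $\min$ convention in which $+\infty$ is absorbing (those terms simply do not contribute to the minimum, nor to the sum), so that the induction step of the log-sum-exp identity remains valid even when some $y_j=+\infty$. Provided at least one $a_i\leq b_j$ (which holds because $b_j$ is a terminal vertex of $S(x)$ and the saturation graph covers $\Supp(x)$), the sum inside the logarithm is strictly positive, the logarithm is well-defined, and the limit argument goes through unchanged. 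No further ingredients are needed beyond \Cref{prop terminal} and the temperature identity~\eqref{eq temperature id}.
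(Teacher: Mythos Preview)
Your proposal is correct and follows essentially the same route as the paper's proof: invoke \Cref{prop terminal} for the tropical expansion, then apply the temperature identity~\eqref{eq temperature id} (extended from two terms to $N$ terms) coordinatewise. The paper's own argument is considerably terser and omits both the explicit coordinate-by-coordinate reduction and your careful discussion of the $+\infty$ entries, so your write-up is in fact more complete than the original.
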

\begin{proof}
Recall the identity 
\begin{equation}
\lim_{T \to 0} -T \log(e^{-y/T} +e^{-z/T})=\min\{y,z\}.
\end{equation}
 If $x \in P(\mL)$, by the previous proposition $x=\oplus_j D(Y(b_j),x) \odot Y( b_j)$ where $b_j$ are terminal elements. Then 
we have 
\begin{equation}
x =\lim_{T \to 0} -T \log \sum_j e^{-D(Y(b_j),x)/T} e^{Y( b_j)}
\end{equation}
and if we put $v_j\coloneqq e^{Y( b_j)}$,
then for small $T$,  we get

\begin{equation}
e^{-x/T}\approx \sum_j e^{-D(Y(b_j),x)/T}v_j \enspace .\qedhere
\end{equation}
\end{proof}

\section{$P^-(\mL)$ as the lattice completion of the  Isbell completion}

We have seen that $P(\mL)$ and $\widehat{P}(\mL)$ generalize the lower set and upper set completions respectively from the poset $\mL$ to the directed metric space $(\mL,d)$, at least in the case where 
$\mL$ contains the empty text $a_0$ which is the bottom element. 

However there is another completion of a poset, called the Dedekind-MacNeille completion (which also generalizes the so called notion of formal concepts). 

It is known that the generalization of the Dedekind MacNeille completion from posets to directed metric spaces is the so called Isbell completion, which is the fixed part of the Isbell adjunction. 

This is also relevant to our situation as it turns out to be defined by $d_{\max}$.

This was studied in \cite{Lawvere73, willerton2013tight} with $[0,\infty]$ coefficients. In that case the Isbell completion is identified with the directed tight span of Hirai and Koichi \cite{Hirai2012}. 

We will instead define the Isbell adjunction using the extended semi ring $[-\infty,\infty]$ as we did with the $d_{\min}$ adjunction in section~\ref{section 6}.

Recall that in section~\ref{section 6} \Cref{rem absorbing} we explained the conventions for working with 
$(\min,+)$ and $(\max,+)$ on $[-\infty,\infty]$. We use the same here.

Given $x: \mL \to [-\infty,\infty]$   and  
$y: \mL \to [-\infty,\infty]$ 
define $d_{\max}$ and $d^t_{\max}$ by
\begin{equation}
d_{\max}(x)_i:=\max_j\{d_{i,j}+x_j\} \textit{ and }
d^t_{\max}(y)_j:=\max_i\{d_{i,j}+y_i\}
\end{equation}

Extending the definition in  \cite{Lawvere73, willerton2013tight} by using $[-\infty,\infty]$ coefficients we have that

\begin{definition}

The Isbell adjunction is the pair of maps $(L,R)$ defined as follows. If $x: \mL \to [-\infty,\infty]$   and  
$y: \mL \to [-\infty,\infty]$  then
\begin{equation}
L(x):=d_{\max}(-x) \text{ and }
R(x):=d^t_{\max}(-y)
\end{equation}

Or in coordinates
\begin{equation}
L(x)_i:=\max_j\{d_{i,j}-x_j\} \text{ and }
R(x)_j:=\max_i\{d_{i,j}-y_i\}
\end{equation}
\end{definition}

Recall from section~\ref{section 6} that the Funk metric  $D$, is still well defined by $D(x,y):=\max_i\{y_i-x_i|x_i\neq \infty \}$.
We also denoted by $D^t$ the transpose metric with $D^t(x,y):=D(y,x)$.

  \begin{remark}Note that 
  \begin{equation}
  L(x)_i:=\max_j\{d_{i,j}-x_j\}=
  D(x,d(a_i,-))=D(x,\widehat{Y}(a_i))
  \end{equation}
  and
   \begin{equation}
R(y)_j:=\max_i\{d_{i,j}-y_i\}=D(y,d(-,a_j))=D(y,Y(a_j))
 \end{equation}

\end{remark}

The pair $(L,R)$ forms an adjunction in the categorical or metric sense:

\begin{proposition}\label{prop-15}
If  $x: \mL \to [-\infty,\infty]$ and  $y: \mL \to [-\infty,\infty]$  then 
we have $D^t(Lx,y)=D(x,Ry)$.
\end{proposition}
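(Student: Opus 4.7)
The plan is to expand both sides to a symmetric double maximum over pairs $(i,j)$ and then apply commutativity of $\max$. Concretely, starting from the right hand side and using the definition of $D$ and $R$,
\begin{equation*}
D(x, Ry) = \max_j\{R(y)_j - x_j \mid x_j\neq \infty\} = \max_j\bigl\{\max_i\{d_{i,j}-y_i\} - x_j \;\big|\; x_j\neq \infty\bigr\}.
\end{equation*}
Pulling the inner $\max_i$ outside (which is justified because $-x_j$ does not depend on $i$), this equals the double maximum $\max_{i,j}\{d_{i,j} - y_i - x_j\}$, taken over those pairs $(i,j)$ with $x_j \neq \infty$. Symmetrically, using $D^t(Lx,y) = D(y,Lx)$ and the definition of $L$,
\begin{equation*}
D^t(Lx,y) = \max_i\{L(x)_i - y_i \mid y_i\neq \infty\} = \max_i\bigl\{\max_j\{d_{i,j}-x_j\} - y_i \;\big|\; y_i\neq \infty\bigr\} = \max_{i,j}\{d_{i,j} - x_j - y_i\},
\end{equation*}
now over pairs with $y_i \neq \infty$.

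The only nontrivial point is to verify that the two side conditions ($x_j \neq \infty$ versus $y_i \neq \infty$) produce the same supremum. For this I would invoke the convention stated in \Cref{rem absorbing}: since the expressions to be maximized are $\max$--expressions, $-\infty$ is the absorbing element, so any pair $(i,j)$ with $x_j = \infty$ contributes $d_{i,j} - y_i - \infty = -\infty$ (regardless of whether $y_i$ is finite or $\pm\infty$), and likewise any pair with $y_i=\infty$ contributes $-\infty$. Such terms can be freely dropped from or added to a maximum without changing its value. Consequently both side constraints are harmless: on either side the supremum reduces to $\max_{i,j}\{d_{i,j}-x_j-y_i\}$ taken over pairs where both $x_j$ and $y_i$ are finite (together with a possible $-\infty$ default when that set is empty).

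Combining the two computations gives $D^t(Lx,y) = D(x,Ry)$, which is the claim. The only step that might hide a pitfall is the handling of the $\pm\infty$ entries, so the proof will explicitly appeal to \Cref{rem absorbing} to justify swapping the conditions and then conclude by the commutativity of finite or infinite suprema.
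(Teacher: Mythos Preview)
Your argument is correct and follows the same route as the paper: expand each side to the double supremum $\max_{i,j}\{d_{i,j}-x_j-y_i\}$ and use commutativity of $\max$. The paper's proof is a one-liner that suppresses the side conditions $x_j\neq\infty$ and $y_i\neq\infty$ entirely, whereas you take the extra (and welcome) care to explain via \Cref{rem absorbing} why those constraints are immaterial; this is a refinement of detail, not a difference of method.
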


\begin{proof}

$D^t(Lx,y)=D(y,Lx)=\max_i\{\max_j\{d_{i,j}-x_j\}-y_i\}=\max_j\{\max_i\{d_{i,j}-y_i\}-x_j\}=D(x,Ry)$.
\end{proof}

We now have the following
\begin{proposition}
We have $LRL=L$ and $RLR=R$ which implies that $LR$ and $RL$ are idempotent. 
\end{proposition}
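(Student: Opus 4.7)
The plan is to mimic the argument given just above for the adjunction $(A,B)$: derive $LRL=L$ and $RLR=R$ directly from the adjunction identity of \Cref{prop-15}, and then observe that idempotence of $LR$ and $RL$ is a formal consequence.

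First, I would translate the adjunction $D^t(Lx,y)=D(x,Ry)$, i.e.\ $D(y,Lx)=D(x,Ry)$, into a Galois-connection statement. Since $D(a,b)\le 0$ is equivalent to the componentwise inequality $b\le a$ (with the convention that any value is below $\infty$, coordinates where $a_i=\infty$ being excluded from the max defining $D$ and the corresponding inequality $b_i\le\infty$ holding automatically), evaluating both sides of the adjunction identity at the threshold $0$ yields
\begin{equation*}
Lx \le y \iff Ry \le x.
\end{equation*}
Setting $y=Lx$ on the left, which is trivially true, gives $RLx \le x$; setting $x=Ry$ on the right gives $LRy \le y$.

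Next, I would use the fact that both $L$ and $R$ are antitone, because they factor as $x\mapsto -x$ followed by the isotone $(\max,+)$ operators $d_{\max}$ and $d^t_{\max}$. Applying $L$ to the inequality $RLx\le x$ reverses it, yielding $L(x)\le L(RLx)=LRL(x)$; substituting $y=Lx$ into $LRy\le y$ yields $LRL(x)\le L(x)$. The two inequalities together give $LRL=L$. The symmetric argument, applying $R$ to $LRy\le y$ and substituting $x=Ry$ into $RLx\le x$, gives $RLR=R$. Idempotence is then immediate: $(LR)^2=L(RLR)=LR$ and $(RL)^2=R(LRL)=RL$.

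The main obstacle, such as it is, is the $\pm\infty$ bookkeeping from \Cref{rem absorbing} when passing between $D(a,b)\le 0$ and $b\le a$; one must confirm that the exclusion of $\infty$-coordinates in the definition of $D$ exactly matches the triviality of the componentwise inequality at those coordinates, so that the Galois equivalence above holds without exception. Once this is verified, no further computation is needed.
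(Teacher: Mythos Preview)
Your proof is correct and takes a different route from the paper's. The paper stays entirely in the directed-metric formalism: it substitutes $(RLx,Lx)$ into the adjunction identity of \Cref{prop-15} to obtain $D^t(LRLx,Lx)=D(RLx,RLx)=0$, and symmetrically $D^t(Lx,LRLx)=0$, concluding $LRLx=Lx$ from the vanishing of both directed Funk distances. You instead pass to the underlying componentwise order by reading $D(a,b)\le 0$ as $b\le a$, which turns \Cref{prop-15} into the antitone Galois connection $Lx\le y\iff Ry\le x$; the inequalities $RL\le\id$ and $LR\le\id$ then follow by reflexivity, and antitonicity of $L,R$ closes the argument. Your version is the textbook Galois-connection proof and makes explicit that the result is purely order-theoretic (it holds for any antitone Galois connection, independent of the specific $d_{\max}$ formulas), while the paper's version is terser but leans implicitly on the same antisymmetry fact---that $D(a,b)=D(b,a)=0$ forces $a=b$---which you have effectively unpacked. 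Your caveat about the $\pm\infty$ bookkeeping is well placed but harmless here: excluding $\infty$-coordinates in the max defining $D$ exactly matches the triviality of $b_i\le\infty$, so the equivalence $D(a,b)\le 0\iff b\le a$ holds without exception.
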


\begin{proof}

This follows from the fact that $D^t(Lx,y)=D(x,Ry)$.
Indeed 
$D^t(LRLx,Lx)=D(RLx,RLx)=0$ and 
$D^t(Lx,LRLx)=D(RLx,RLx)=0$. 
Therefore $LRLx=Lx$. The equality $RLR=R$ is shown analogously.
\end{proof}

Let us now compute the fixed parts of the adjunction $\Fix(LR)$ and $\Fix(RL)$.

\begin{proposition}
We have $\Fix(LR)=\Im(L)=\Im(d_{\max})$ and $\Fix(RL)=\Im(R)=\Im(d_{\max}^t)$.
\end{proposition}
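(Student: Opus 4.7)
The plan is to reduce this statement to the idempotency relations $LRL = L$ and $RLR = R$ established in the previous proposition, exactly as was done for the $(A,B)$ adjunction in Proposition~\ref{prop-antiisom}'s preamble. The argument is purely formal and does not require any new computation.

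First I would show $\Im(L) = \Fix(LR)$. For the inclusion $\Im(L) \subseteq \Fix(LR)$, take any $w = L(x)$ in $\Im(L)$; then $LR(w) = LRL(x) = L(x) = w$, so $w \in \Fix(LR)$. For the reverse inclusion, suppose $w \in \Fix(LR)$, so $w = LR(w) = L(R(w))$, which exhibits $w$ as an element of $\Im(L)$. The same argument, with the roles of $L$ and $R$ swapped and using $RLR = R$, gives $\Im(R) = \Fix(RL)$.

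Next I would identify $\Im(L)$ with $\Im(d_{\max})$. By definition $L(x) = d_{\max}(-x)$, so $L = d_{\max} \circ \sigma$ where $\sigma \colon x \mapsto -x$. Since $\sigma$ is a bijection of $[-\infty,\infty]^{\mL}$ onto itself (using the convention on signs fixed in Remark~\ref{rem absorbing}), we have $\Im(L) = \Im(d_{\max} \circ \sigma) = \Im(d_{\max})$. The identical argument applied to $R = d^t_{\max} \circ \sigma$ yields $\Im(R) = \Im(d^t_{\max})$. Combining these two identifications with the first step completes the proof.

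There is essentially no obstacle here: the only subtle point is making sure $\sigma$ is genuinely a bijection on the extended domain $[-\infty,\infty]^{\mL}$, which holds because negation swaps $+\infty$ and $-\infty$ pointwise. Everything else is a one-line manipulation of the idempotent relations $LRL = L$ and $RLR = R$ that were already established.
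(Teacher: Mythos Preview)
Your proof is correct and follows essentially the same approach as the paper, which also derives the equality $\Fix(LR)=\Im(L)$ directly from the idempotency relation $LRL=L$. You are slightly more explicit than the paper in justifying $\Im(L)=\Im(d_{\max})$ via the bijectivity of $x\mapsto -x$, which the paper treats as immediate from the definition.
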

\begin{proof}
 This follows from the fact that $LRL=L$. Indeed clearly $\Im(L)\subset \Fix(LR)$. 
Moreover $\Fix(LR)\subset \Im(L)$ since $LR(y)=y$ says that $y\in \Im(L)$.
\end{proof}

As before we have the following 
 \begin{proposition}
We have that $$L:\Fix(RL)=\Im(R)=\Im(d_{\max}^t) \to \Fix(LR)=\Im(L)=\Im(d_{\max})$$ and 
$$R:\Fix(LR)=\Im(L)=\Im(d_{\max}) \to \Fix(RL)=\Im(R)=\Im(d^t_{\max})$$
are anti-isomorphisms. In other words they are one to one and onto and inverses.  They are isometries, namely
$D(Lx,Lx')=D^t(x,x')$.
Finally we have
\begin{equation}
L(\lambda \odot x)=-\lambda \odot L(x)
\textit{ and } 
L(x \oplus_{\min} y)=L(x)\oplus_{\max} L(y)
\end{equation}
and similarly for $R$.
\end{proposition}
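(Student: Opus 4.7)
The plan is to mirror the proof of Theorem~\ref{th-4} for $(A,B)$, but without an explicit formula like $A(y)=-y$, since $d_{\max}$ need not be a $(\max,+)$ projector. Everything will instead be extracted from the adjunction identity $D^t(Lx,y)=D(x,Ry)$ of \Cref{prop-15} and the preceding identities $LRL=L$, $RLR=R$, $\Fix(LR)=\Im(L)$, $\Fix(RL)=\Im(R)$.

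First I would show that $L$ and $R$ restrict to mutually inverse bijections between $\Fix(RL)$ and $\Fix(LR)$. If $x\in\Fix(LR)$ then $LR(x)=x$, so $R(x)\in\Im(R)=\Fix(RL)$ by the identification just recalled. Conversely, if $y\in\Fix(RL)$ then $L(y)\in\Fix(LR)$. On these fixed sets, $LR=\Id$ and $RL=\Id$ by definition, so $L$ and $R$ are inverse bijections.

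Next I would derive the isometry $D(Lx,Lx')=D^t(x,x')$ for $x,x'\in\Fix(RL)$ directly from the adjunction. Setting $y=Lx'$ in \Cref{prop-15} gives
\begin{equation*}
D^t(Lx,Lx')=D(x,RL(x'))=D(x,x'),
\end{equation*}
using $RL(x')=x'$. Unwinding $D^t(Lx,Lx')=D(Lx',Lx)$ and swapping the roles of $x$ and $x'$ yields $D(Lx,Lx')=D(x',x)=D^t(x,x')$, which is the desired isometry. The corresponding statement for $R$ is symmetric.

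Finally I would check the tropical (anti-)linearity properties entrywise from $L(x)_i=\max_j\{d_{i,j}-x_j\}$. For scalars, $L(\lambda\odot x)_i=\max_j\{d_{i,j}-\lambda-x_j\}=-\lambda+L(x)_i$, which is $(-\lambda\odot L(x))_i$. For the swap of $\oplus_{\min}$ and $\oplus_{\max}$, the identity $-\min(a,b)=\max(-a,-b)$ gives
\begin{equation*}
L(x\oplus_{\min}y)_i=\max_j\{d_{i,j}+\max(-x_j,-y_j)\}=\max\bigl\{L(x)_i,L(y)_i\bigr\}.
\end{equation*}
The same computations work verbatim for $R$. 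The only subtlety, and the place where care is needed, is the handling of $\pm\infty$: one has to ensure that the manipulations $-\min(a,b)=\max(-a,-b)$ and the factoring out of $-\lambda$ remain valid in the extended semiring. This is exactly the point addressed in \Cref{rem absorbing}, where such expressions are first rewritten to involve only $\max$ (with $-\infty$ absorbing) or only $\min$ (with $+\infty$ absorbing) before being evaluated, so no genuine obstacle arises.
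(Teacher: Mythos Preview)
Your proof is correct and follows essentially the same approach as the paper's: you establish the bijection from $LR=\Id$ and $RL=\Id$ on the respective fixed sets, derive the isometry by plugging $y=Lx'$ into the adjunction identity of \Cref{prop-15} and using $RL(x')=x'$, and check the tropical anti-linearity entrywise from the explicit formula for $L$. Your exposition is arguably a bit cleaner on the isometry step, and your explicit remark on the $\pm\infty$ conventions is a helpful addition, but the underlying argument is the same.
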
 

\begin{proof}
First let us check that $L$ and $R$ are one to one and onto. 
Consider $x,x'\in \Fix(RL)$. If $L(x)=L(x')$ then $RL(x)=RL(x')$ and therefore $x=x'$.
Also if $y\in \Fix(LR)$ then $y=L(R(y))$.

Moreover $D(Lx,Lx')=D_{op}(RLx,x')=D_{op}(x,x')=D(x',x)$

Next we check the tropical antilinearity.

$L(\lambda \odot x)_i=\max_j\{d_{i,j}-\lambda-x_j\}=\max_j\{d_{i,j}-x_j\}-\lambda=L(x)_i-\lambda=(-\lambda \odot L(x))_i$.

Moreover 

$L(x \oplus_{\min} y)_i=\max_j\{d_{i,j}-\min\{x_j,y_j\}\}=
\max_j\{d_{i,j}+\max\{-x_j,-y_j\}=
\max_j\{\max\{d_{i,j}-x_j,d_{i,j}-y_j\}\}=
\max \{\max_j\{d_{i,j}-x_j\}, \max_j\{d_{i,j}-y_j\}\}
=(L(x)\oplus_{\max} L(y))_i$.

\end{proof}

We have that the tropical linear space $\Im(d_{\max})$ is anti isomorphic to  $\Im(d^t_{\max})$ by the two inverse maps
$$R: \Im(d_{\max}) \to \Im(d^t_{\max}) \textit{ and }
L: \Im(d^t_{\max}) \to \Im(d_{\max}).$$ 

\begin{proposition}

 The Yoneda isometric embedding $Y:\mL \to P(\mL)$ given by $Y(a):=d(-,a)$ and the co-Yoneda isometric embedding $\widehat{Y}: \mL \to \widehat{P}(\mL)$ given by $\widehat{Y}(a):=d(a,-)$, are compatible with the anti-isomorphisms $L$ and $R$ above, in the sense that 
\begin{equation}
\widehat{Y}(a)=R(Y(a)) \textit{ and } Y(a)=L(\widehat{Y}
(a)).
\end{equation}
\end{proposition}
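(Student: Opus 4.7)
The plan is to verify both identities by direct coordinate-wise computation, using only the triangle inequality for $d$ together with the fact that $d(a,a)=0$. Since $R$ and $L$ are inverse anti-isomorphisms between $\Im(d_{\max})$ and $\Im(d^t_{\max})$, it actually suffices to prove one of the two identities; the other will then follow by applying $L$ (respectively $R$) to both sides. Nonetheless, symmetry makes both computations essentially the same, so I would carry them out in parallel.

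First I would unfold the definitions. Writing $Y(a_k)_i = d(a_i,a_k)$ and $\widehat{Y}(a_k)_j = d(a_k,a_j)$, the first identity $\widehat{Y}(a_k) = R(Y(a_k))$ reads, in coordinates,
\begin{equation*}
d(a_k,a_j) \;=\; \max_i\bigl\{d_{i,j} - d(a_i,a_k)\bigr\} \quad \text{for every } j.
\end{equation*}
The $\leq$ inequality is exactly the triangle inequality $d_{i,j} \leq d(a_i,a_k) + d(a_k,a_j)$, rewritten as $d_{i,j} - d(a_i,a_k) \leq d(a_k,a_j)$ (using the $(\max,+)$ convention from Remark~\ref{rem absorbing} to handle the $\infty$ values, so that an $\infty$ on the right-hand side of the triangle inequality never produces a false lower bound on the max). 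The reverse inequality is obtained by specializing to $i=k$: then $d_{k,j} - d(a_k,a_k) = d(a_k,a_j) - 0 = d(a_k,a_j)$, so the supremum is attained.

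The second identity $Y(a_k) = L(\widehat{Y}(a_k))$ is proved in exactly the same way: coordinate-wise it becomes
\begin{equation*}
d(a_i,a_k) \;=\; \max_j\bigl\{d_{i,j} - d(a_k,a_j)\bigr\},
\end{equation*}
and again $\leq$ is the triangle inequality while equality at $j=k$ witnesses the reverse bound. I would conclude by pointing out that these are precisely the formulas already obtained, from the $(\min,+)$ side, in Proposition~\ref{prop-antiisom} and Proposition~\ref{prop duality formula}, with signs flipped because we are now working with the $d_{\max}$ adjunction rather than the $d_{\min}$ one.

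The only subtlety — and the one point where I would be careful rather than routine — is bookkeeping of infinities: when $a_i\not\le a_k$ or $a_k\not\le a_j$, some of the entries involved are $+\infty$, and the $\max$ must be interpreted using the convention set up for the Isbell adjunction (so that $-\infty$ is the absorbing element on the $\max$ side, cf.\ Remark~\ref{rem absorbing}). Verifying that this convention makes the equalities above hold even on those extended values is the main thing to check; the algebraic content is a one-line consequence of $\Cref{prop-1}$ plus $d(a,a)=0$.
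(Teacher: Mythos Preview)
Your proposal is correct and follows essentially the same route as the paper: both compute $R(Y(a_k))_j=\max_i\{d_{i,j}-d_{i,k}\}$ and $L(\widehat{Y}(a_k))_i=\max_j\{d_{i,j}-d_{k,j}\}$, bound the max by the triangle inequality, and attain it at the index $k$ using $d(a_k,a_k)=0$. Your version is slightly more explicit about the $\pm\infty$ bookkeeping, which the paper leaves implicit.
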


\begin{proof}
We have 
$d_{i,j}\leq d_{i,k}+d_{k,j}$, therefore
$$L(\widehat{Y}(a_k))_i=L(d(a_k,-))_i=\max_j\{d_{ij}-d_{k,j}\}=d_{i,k}=d((-,a_k)_i=Y(a_k)_i.$$
Analogously
\[R(Y(a_k))_j=R(d(-,a_k))_j=\max_i\{d_{ij}-d_{i,k}\}=d_{k,j}=d((a_k,-)_j=\widehat{Y}(a_k)_j\enspace .\qedhere\]
\end{proof}
As mentioned earlier according to a theorem of Willerton \cite{willerton2013tight} 
\begin{theorem}
The directed tight span of Hirai and Koichi \cite{Hirai2012} is the same as the fixed parts of the Isbell adjunction when  using $[0,\infty]$ coefficients and the truncated $\max$ operations .

\end{theorem}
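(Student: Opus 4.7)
The plan is to establish an explicit isomorphism between the directed tight span $DTS(\mL)$ of Hirai--Koichi and the fixed part $\Fix(LR)$ (equivalently $\Im(d_{\max})$) of the Isbell adjunction, where all operations are now taken with $[0,\infty]$ coefficients. Recall that $DTS(\mL)$, by definition, consists of pairs $(f,g)$ of functions $f,g : \mL \to [0,\infty]$ satisfying the \emph{constraint} $f(a)+g(b) \geq d(a,b)$ for all $a,b$, together with the \emph{tightness} condition that $(f,g)$ is pointwise minimal among all pairs satisfying the constraint; equivalently $f(a) = \sup_b \{d(a,b)-g(b)\}_+$ and $g(b) = \sup_a \{d(a,b)-f(a)\}_+$, where $\{t\}_+ \coloneqq \max\{t,0\}$. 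On the Isbell side, with $[0,\infty]$ coefficients one must use the truncated Isbell operations $L_+(x)_i \coloneqq \sup_j \{d_{i,j}-x_j\}_+$ and $R_+(y)_j \coloneqq \sup_i \{d_{i,j}-y_i\}_+$; the Isbell completion is the set of fixed pairs $(x,y)$ with $y=L_+(x)$ and $x=R_+(y)$.

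First I would exhibit the bijection $(x,y) \mapsto (x,y)$ (viewed as $(f,g)=(x,y)$). The equation $y = L_+(x)$ unfolds to $y(b) = \sup_a \{d(a,b)-x(a)\}_+$, which is equivalent to the conjunction of (i) the inequality $x(a)+y(b) \geq d(a,b)$ for all $a,b$, and (ii) the fact that $y(b)$ cannot be lowered while preserving (i); similarly for $x=R_+(y)$. This shows that fixed pairs of the truncated Isbell adjunction are precisely pairs $(f,g)$ of functions into $[0,\infty]$ satisfying the constraint and being jointly minimal, i.e.\ elements of $DTS(\mL)$. Conversely, tightness of $(f,g)$ in $DTS(\mL)$ forces $g=L_+(f)$ and $f=R_+(g)$, so any tight pair lies in $\Fix(LR)\times\Fix(RL)$ under the adjunction.

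Next I would verify that this bijection respects the metric. Hirai--Koichi equip $DTS(\mL)$ with the directed metric $D_{HK}((f,g),(f',g')) = \max\{\sup_a (f'(a)-f(a)), \sup_b (g(b)-g'(b))\}$, while $\Fix(LR)$ inherits from the ambient space the Funk metric $D$ (together with its transpose on the dual side). A short computation using $g=L_+(f)$, $g'=L_+(f')$ and the tropical antilinearity of $L_+$ (analogous to the antilinearity of $L$ established in the previous section) shows that $\sup_b (g(b)-g'(b))$ and $\sup_a (f'(a)-f(a))$ coincide, so $D_{HK}$ reduces to the Funk metric on either factor, matching the metric used on the Isbell side.

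The main obstacle is the truncation: replacing $(-\infty,\infty]$ with $[0,\infty]$ forces one to work with the truncated operators $L_+, R_+$ rather than the clean $L, R$ of the extended semiring, so the antilinearity identities and the fixed-point characterisations must be re-derived with the $\{\cdot\}_+$ convention, and boundary behaviour (values at $0$ and $\infty$, pairs where the supremum is not attained) needs to be controlled. The rest of the argument is a diagrammatic translation between the two presentations.
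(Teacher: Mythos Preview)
The paper does not actually prove this theorem: it is stated as a result of Willerton~\cite{willerton2013tight} and quoted without proof (``As mentioned earlier according to a theorem of Willerton\dots''). So there is no ``paper's own proof'' to compare against. Your sketch is essentially the argument one finds in Willerton's paper: identify a tight pair $(f,g)$ with a fixed pair of the truncated Isbell operators by observing that the tightness conditions $f(a)=\sup_b\{d(a,b)-g(b)\}_+$ and $g(b)=\sup_a\{d(a,b)-f(a)\}_+$ are exactly $f=R_+(g)$ and $g=L_+(f)$, and then check that the metrics agree via the isometry property of the adjunction.

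Two small points. First, watch the argument order: with the paper's convention $L(x)_i=\max_j\{d_{i,j}-x_j\}$, the unfolding of $y=L_+(x)$ gives $y(b)=\sup_a\{d(b,a)-x(a)\}_+$, not $\sup_a\{d(a,b)-x(a)\}_+$ as you wrote; this is harmless but should be made consistent with whichever of $\Fix(LR)$ or $\Fix(RL)$ you identify with $DTS(\mL)$. Second, your claim that $\sup_b(g(b)-g'(b))=\sup_a(f'(a)-f(a))$ is correct, but it does not follow from antilinearity alone; it is precisely the isometry $D(L_+f',L_+f)=D^t(f',f)$, which in turn comes from the adjunction identity $D^t(L_+x,y)=D(x,R_+y)$. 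You flag the truncation as the main technical point, which is right: one must check that the adjunction and isometry identities survive replacing $\max$ by $\{\cdot\}_+$, and this is where the restriction to nonnegative values of $d$ is used.
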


We denote the Isbell completion with $[-\infty,\infty]$ coefficients by $\tilde{I}(\mL)$.

Let us finally explore  the relation between $P(\mL)=\Im(d_{\min})$ and $\tilde{I}(\mL)=\Im(d_{\max})$.

\begin{proposition}\label{prop-lattice-completion}
    The polyhedron $P(\mL)=\Im(d_{\min})$ is the lattice completion of $\Im(d_{\max})$ when using $(-\infty,\infty]$ coefficients and $P^-(\mL)$ is the lattice completion of $\Im(d_{\max})$ when using $[-\infty,\infty]$ coefficients.
\end{proposition}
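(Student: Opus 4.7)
My plan is to prove both claims simultaneously by showing that $\Im(d_{\min})$ (that is, $P^-(\mL)$ over $[-\infty,\infty]$, or $P(\mL)$ over $(-\infty,\infty]$) is a lattice containing $\Im(d_{\max})$, and is the smallest such lattice. I would first establish the containment $\Im(d_{\max})\subseteq \Im(d_{\min})$ directly from the triangle inequality: if $z = d_{\max}(x)$ so that $z_i = \max_k\{d_{i,k}+x_k\}$, then for each index $k$ the inequality $d_{i,k}\le d_{i,j}+d_{j,k}$ gives $d_{i,k}+x_k \le d_{i,j}+(d_{j,k}+x_k) \le d_{i,j}+z_j$, and taking the max over $k$ yields $z_i \le z_j + d_{i,j}$, which is exactly the defining inequality of $\Im(d_{\min})$.

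Next, I would verify that $\Im(d_{\min})$ is closed under componentwise $\max$ (closure under $\min$ being automatic, since it is a $(\min,+)$ module). If $x, x' \in \Im(d_{\min})$, then from $x_i \le x_j + d_{i,j}$ and $x'_i \le x'_j + d_{i,j}$ one obtains $\max(x_i,x'_i)\le \max(x_j,x'_j) + d_{i,j}$, so $x \vee x' \in \Im(d_{\min})$. Together with the previous step this shows that $\Im(d_{\min})$ is a lattice containing $\Im(d_{\max})$.

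The minimality is where the earlier structural results do the real work. Let $\Lambda$ be any lattice containing $\Im(d_{\max})$. The proposition immediately preceding the statement gives $Y(a_k) = L(\widehat{Y}(a_k))$, so every Yoneda image $Y(a_k)$ lies in $\Im(L) = \Im(d_{\max}) \subseteq \Lambda$. Since $\Im(d_{\max})$ is a $(\max,+)$-module, it is closed under tropical scalar multiplication, hence every shift $\lambda\odot Y(a_k)$ still lies in $\Im(d_{\max})\subseteq \Lambda$. Finally, \Cref{coro-4} represents any $x\in\Im(d_{\min})$ as $x = \bigoplus_j D(Y(a_j),x)\odot Y(a_j)$, a componentwise $\min$ of shifted Yoneda images, which therefore also lies in $\Lambda$. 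This shows $\Im(d_{\min})\subseteq \Lambda$ and finishes the proof.

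The main obstacle I anticipate is the careful bookkeeping with extended coefficients $[-\infty,\infty]$ in the $P^-(\mL)$ case: one must confirm that each step is compatible with the $(\min,+)$ absorption convention laid out in \Cref{rem absorbing}, particularly when some scalars $D(Y(a_j),x)$ equal $\pm\infty$ or when entries of the Yoneda images are $+\infty$, so that expressions such as $\lambda\odot Y(a_k)$ and the representation from \Cref{coro-4} remain well-defined. Once that is settled, the $(-\infty,\infty]$ statement for $P(\mL)$ is a direct specialization, because Yoneda images have entries in $(-\infty,\infty]$ and the $\min$-combinations above preserve this range.
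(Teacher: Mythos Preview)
Your argument is correct and follows essentially the same route as the paper: show that $\Im(d_{\min})$ is closed under $\max$ (you do this directly from the defining inequalities $x_i\le x_j+d_{i,j}$, while the paper argues via monotonicity of $d_{\min}$ together with $d_{\min}\le\Id$), and then use that both $\Im(d_{\min})$ and $\Im(d_{\max})$ are generated by the same vectors $Y(a_k)=d(-,a_k)$. Your treatment is in fact more explicit than the paper's, which compresses the containment $\Im(d_{\max})\subseteq\Im(d_{\min})$ and the minimality step into the single sentence ``both are generated by the vectors $d(-,a_i)$''; your observation that the shifts $\lambda\odot Y(a_k)$ already lie in $\Im(d_{\max})$ is exactly what makes that sentence work.
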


\begin{proof}
Recall that since $d_{\min}^2=d_{\min}$ we have 
$\Im(d_{\min})=\{x|dx=x\}$. 
Moreover if $\Id$ is the $(\min,+)$ identity matrix, namely $\Id_{i,i}=0$ and $\Id_{i,j}=\infty$ for $i\neq j$, then, 
since $d_{i,}=0$, we have  $d\leq \Id$ and therefore  we always have $dx\leq x$.

It follows that $x=dx \iff x\leq dx$. 

We want to show that if $x\leq dx$ and $y \leq dy$ then 
\[\max\{x,y\} \leq d(\max\{x,y\})\] which will imply that $\max\{x,y\} \in \Im(d_{\min})$.

Indeed if $\max\{x,y\}=x$ then $d(\max\{x,y\})=dx\geq x\geq y$
and if $\max\{x,y\}=y$ then $d(\max\{x,y\})=dy\geq y\geq x$, therefore 
$x\leq d(\max\{x,y\})$ and $y \leq d(\max\{x,y\})$ which implies that $\max\{x,y\}\leq d(\max\{x,y\}) $.

We have shown therefore that $\Im(d_{\min})$ is closed under the $\max$ operation. 
Both $\Im(d_{\min})$ and $\Im(d_{\max})$ are generated by the vectors $d(-,a_i)$ therefore the result is proved.
\end{proof}
\begin{example}
To illustrate the difference between $\Im(d_{\min})$ and $\Im(d_{\max})$ (albeit for a  symmetric and finite metric) we provide the following example: Consider the discrete metric on three points
\begin{align}
d_2 = \left(\begin{array}{ccc}
 0 & 1& 1\\
 1 & 0 & 1 \\\
 1 & 1 & 0
\end{array}\right)
\label{e-def-d2}
\end{align}
The associated (min,+)- module $\mathcal{P}(d_2)$ is shown
on~\Cref{fig-example2}, left and the $(\max,+)$ module on the right. 
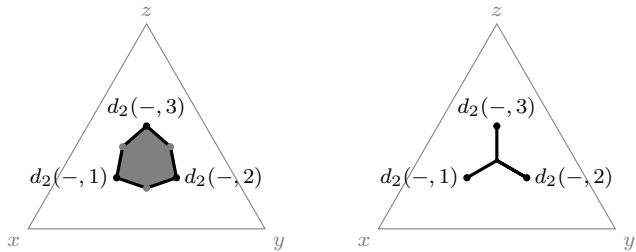
\begin{figure}[htbp]
\begin{center}\footnotesize
\begin{minipage}[b]{0.25\textwidth}
\begin{tikzpicture}%
[scale=0.45,>=triangle 45
,vtx/.style={mygreen},
ray/.style={myred}]
\equilateralnobars{7}{100};
\barycenter{e1}{\expo{0}}{0}{0};
\barycenter{e2}{0}{\expo{0}}{0};
\barycenter{e3}{0}{0}{\expo{0}};
\barycenter{A1}{\expo{0}}{\expo{-1}}{\expo{-1}};
\barycenter{A2}{\expo{-1}}{\expo{0}}{\expo{-1}};
\barycenter{A3}{\expo{-1}}{\expo{-1}}{\expo{0}};
\barycenter{B1}{\expo{0}}{\expo{1}}{\expo{1}};
\barycenter{B2}{\expo{1}}{\expo{0}}{\expo{1}};
\barycenter{B3}{\expo{1}}{\expo{1}}{\expo{0}};
\barycenter{PSEUDO}{\expo{2}}{\expo{3}}{\expo{0}};

\filldraw[gray,draw=black,opacity=0.5,very thick] (A1) -- (B3) -- (A2) -- (B1) -- (A3) -- (B2) -- cycle;

\filldraw[gray] (B1) circle (0.75ex);
\filldraw[gray] (B2) circle (0.75ex);
\filldraw[gray] (B3) circle (0.75ex);

\filldraw (A1) circle (0.75ex) node[left] {$d_2(-,1)$};
\filldraw (A2) circle (0.75ex) node[right] {$d_2(-, 2)$};
\filldraw (A3) circle (0.75ex) node[above] {$d_2(-, 3)$};

\end{tikzpicture}
\end{minipage}\hskip 5em
\begin{minipage}[b]{0.25\textwidth}
\begin{tikzpicture}%
[scale=0.45,>=triangle 45
,vtx/.style={mygreen},
ray/.style={myred}]
\equilateralnobars{7}{100};
\barycenter{zero}{\expo{0}}{\expo{0}}{\expo{0}};
\barycenter{e1}{\expo{0}}{0}{0};
\barycenter{e2}{0}{\expo{0}}{0};
\barycenter{e3}{0}{0}{\expo{0}};
\barycenter{A1}{\expo{0}}{\expo{-1}}{\expo{-1}};
\barycenter{A2}{\expo{-1}}{\expo{0}}{\expo{-1}};
\barycenter{A3}{\expo{-1}}{\expo{-1}}{\expo{0}};
\barycenter{B1}{\expo{0}}{\expo{1}}{\expo{1}};
\barycenter{B2}{\expo{1}}{\expo{0}}{\expo{1}};
\barycenter{B3}{\expo{1}}{\expo{1}}{\expo{0}};
\barycenter{PSEUDO}{\expo{2}}{\expo{3}}{\expo{0}};

\draw[very thick] (A3) -- (zero) -- (A2) -- (zero) -- (A1) ;

\filldraw (A1) circle (0.75ex) node[left] {$d_2(-, 1)$};
\filldraw (A2) circle (0.75ex) node[right] {$d_2(- , 2)$};
\filldraw (A3) circle (0.75ex) node[above] {$d_2(-, 3)$};

\end{tikzpicture}
\end{minipage}
\end{center}
\caption{Tropical module generated by the discrete metric $d_2$ of~\Cref{e-def-d2}. The pseudo-vertices (vertices of the polyhedral complex that do not arise from tropical generators) are shown in gray. (left) The (max,+)-span (right).}
\label{fig-example2}
\end{figure}

\end{example}
\section{Some comments about Probabilistic Language Models}
We would finally like to gather some comments about how to interpret probabilistic language models
$(\mL,\leq, \Pr)$ and what they imply. Some of these were stated already in section \Cref{Section Semantic spaces}

\begin{enumerate}

\item
We note that the construction of $P(\mL)$ explains why it is natural to have vectors in a problem of language.
In fact we naturally get Boltzmann weighted linear combinations \Cref{eq text vector}, \Cref{eq system Yoneda}, \Cref{eq system Yoneda} which is what is 
introduced by hand in the attention layers of the transformer and the final layer where the distribution over possible next words is determined 
\item

If  the transformer is learning 
$\widehat{P}(\mL)$ or equivalently $\widehat{Q}(\mL)$ it would be learning a convex body which could explain why its training is efficient in the first place.

\item

Assuming that the transformer is learning the 
polyhedron $\widehat{P}(\mL)$ or equivalently $\widehat{Q}(\mL)$ it would be learning an effective representation of Yoneda  embeddings of texts. 
This can then be  interpreted as  solving the huge $(\min,+)$ linear systems in 
\Cref{eq system Yoneda}, \Cref{eq system coYoneda}, (\Cref{prop linear system}).

\item 
The duality explained in section~\ref{section 6} between $P(\mL)$ and $\widehat{P}(\mL)$ shows how to resolve the paradox that both $d(-,a_k)$ and 
$d(a_k,-)$ should equally well encode the meaning of a text $a_k$, given a probabilistic language model $(\mL,\leq,\Pr)$. This is most striking when $a_k$ is a single word. 
In that case $d(-,a_k)$ is supported only on $a_k$, but we have that $-d(-,a_k)=-d(a_k,-)$.
This was explained in \Cref{section 6}, \Cref{prop duality formula}, \Cref{re word duality}. 
It also showcases the notion that the meaning of a text $a_k$ is not just encoded by $d(-,a_k)$ or $d(a_k,-)$ but by the whole ambient spaces $P(\mL)$ and $\widehat{P}(\mL)$ respectively.

\end{enumerate}
\appendix
\section{Categorical interpretation}

The metric polyhedra $(P(\mL),D)$ and $(\widehat{P}(\mL),D^t)$, as well as the polyhedral cones $Q(\mL)$ and $\widehat{Q}(\mL)$,  arise from a categorical point of view \cite{Lawvere73, willerton2013tight, BTV2021}.
In fact all constructions have a categorical interpretations and we will briefly explain these here.

To begin with,  we  can consider the probabilistic language model $(\mL,d)$ to be a category enriched over the monoidal closed category $(-\infty,\infty]$, with monoidal structure given by addition and Hom given by considering
$(-\infty,,\infty]$ as a poset with the opposite of  the usual order of numbers. The  Hom between objects $a_i$ and $a_j$ in $\mL$, is $d(a_i,a_j)$ and the triangle inequality is the composition of morphisms.

This construction (using $[0,\infty]$ instead of $(-\infty,\infty]$) was explained in \cite{BTV2021})

Then 
$P(\mL)$ is the category of presheaves on $\mL$ 
namely the category of enriched functors 
$\mL^{\op} \to (-\infty,\infty]$ where $(-\infty,\infty])$ is considered as a category enriched over itself with internal Hom given by the directed  metric $d_\R$ on $(-\infty,\infty]$ where  $d_\R(s,t)=t-s$.
This follows from the fact that
we can think of the points $x\in P(\mL)$ as non-expansive functions on $\mL$ as we have seen in \Cref{prop-4}. Indeed
\begin{equation}
P(\mL)=\{x:(\mL,d^t) \to ((-\infty,\infty],d_\R)  | x \text{ is non-expansive.}\} 
\end{equation}

Moreover, the Funk directed metric $D$ on $P(\mL)$ is the Hom on presheaves.

The isometric embedding $Y:\mL \hookrightarrow P(\mL)$ is 
the Yoneda embedding, $Y(a_k)$ is a representable presheaf and the fact that $x_i=x(a_i)=D(Y(a_i),x)$, is the Yoneda lemma. 

On the other hand $\widehat{P}(\mL)$ is the category of co-presheaves and $\widehat{Y}$ is the co-Yoneda embedding. 
The tropical anti-isomorphims between $P(\mL)$ and $\widehat{P}(\mL)$ as already explained follows from an adjunction between $A(x):=d_{\min}(-x)$ and $B(y)=d^t_{\min}(-y)$.

Finally it was proven in \cite{willerton2013tight} that the directed tight span  $DTS(\mL)$ (defined in \cite{Hirai2012}) is the Isbell completion, with $[0,\infty]$ as enriching category,  of the enriched category $\mL$. Namely the fixed part of the Isbell adjunction which is given by 
$(L(x))_i:=\max_j\{d_{i,j}-x_j\}$ and $(R(y))_j:=\max_i\{d_{i,j}-y_i\}$ (where we use trancated difference so the result is always positive). 
We instead define the Isbell adjunction with enriching category $[-\infty,\infty]$. 

The fact that the category of presheaves $P(\mL)$ is the $(\min,+)$ span of the images of the Yoneda embedding reflects the fact that  colimits are given by $\min$ and every presheaf is a weighted colimit of representables.

On the other hand the Isbell completion is given by presheaves which are weighted limits of representables since limits are given by $\max$ and it is smaller that $P(\mL)$ since in general not every presheaf is such a  weighted limit. 

\section{Syntax to Semantics and Morita equivalence}

The problem of encoding allowed (with some probability)  sequences of symbols, by some mathematical structure can be located in the realm of a very basic duality in mathematics.

Traditionally language has been modeled as a monoid generated by words. 
We can go from the monoid to a poset by 
considering the monoid as a category with one object and arrows corresponding to texts and constructing the factorization category (also called the twisted arrow category). This produces exactly the poset of texts with the subtext order as we have used in our probabilistic language model.

Considering the subtext poset  makes it easier to add probabilities and we are led naturally to the probabilistic language model we defined which is a special case of a directed metric space. In Appendix A we saw that this is an enriched category.

In the monoid case we consider that the meaning of a text eg ``red'' is given by the ideal generated by red which contains all texts containg red. 

In the poset case it is the same, where ideals and filters correspond to principal lower and upper sets.

This is a mathematical incarnation of the distributional semantics principle. 

Now there is a very general and basic concept of duality in mathematics that in the commutative case takes the form of a duality between algebra and geometry. 

The most basic case is, given a commutative algebra, to consider the space of (prime) ideals.

This is called the spec and can be thought of as a space on which the algebra of functions is the commutative algebra we started with. For example 
if we consider the algebra $\mathbb{C}[x,y]$ of complex polynomials in two  variables then prime ideas are ideals generated by monomials $(x-a)(y-b)$ for any $a,b \in \mathbb{C}$ and therefore the space of ideals is $\mathbb{C}^2$ i.e. the complex plane.
The duality then is between the commutative algebra $\mathbb{C}[x,y]$ and the space of ideals $\mathbb{C}^2$. (This so called  spec construction  is the cornerstone of algebraic geometry.)

We can try to extend this kind of duality for monoids, posets, and for our enriched category.
Ideals in a monoid are modules over the monoid and in general we have to consider modules. Now moving to the case of an algebra, a module over the algebra (a representation) is a presheaf over the corresponding category. This is the category with one object and arrows given by the elements of the algebra. 
In general in a category the presheaves play the role of modules.

In our case modules i.e. presheaves  are the non-expansive maps (\Cref{prop-3}) and the space $P(\mL)$ is the category of modules (the Hom is given by the metric $D$ as already mentioned in Appendix A).

The original  category defines the syntax and the presheaf category can be considered to reflect semantics (see also \cite{BTV2021}). 

In fact just like $\mathbb{C}[x,y]$ gives coordinates on the space of ideals $\mathbb{C}^2$, we could think that 
the language category (the syntax category) provides  coordinates on the category of presheaves (modules) which can be though as the semantic category (in this particular case, for example because the Hom which is the metric $D$ measures semantic similarity).

Now since we can translate between languages, namely the semantics of languages are in some sense the same (approximately) we 
expect that the categories of presheaves on different language categories, should be equivalent. 
This is a well known notion called \textit{Morita equivalence}. We would then expect that enriched categories corresponding to different languages should be Morita equivalent. 
Moreover in that case there are associated invariants (Hochschild homology) which should be semantic invariants.

Investigating and developing this, is a future direction of research.

\end{document}